\newcommand{\BEAS}{\begin{eqnarray*}}
\newcommand{\EEAS}{\end{eqnarray*}}
\newcommand{\BEA}{\begin{eqnarray}}
\newcommand{\EEA}{\end{eqnarray}}
\newcommand{\BEQ}{\begin{equation}}
\newcommand{\EEQ}{\end{equation}}
\newcommand{\BIT}{\begin{itemize}}
\newcommand{\EIT}{\end{itemize}}
\newcommand{\BNUM}{\begin{enumerate}}
\newcommand{\ENUM}{\end{enumerate}}
\newcommand{\BA}{\begin{array}}
\newcommand{\EA}{\end{array}}
\newcommand{\diag}{\mathop{\rm diag}}
\newcommand{\tr}{\mathop{ \rm tr}}
\newcommand{\idm}{I}
\newcommand{\rb}{\mathbb{R}}
\newcommand{\Sb}{\mathbb{S}}
\newcommand{\mysec}[1]{Section~\ref{sec:#1}}
\newcommand{\eq}[1]{Eq.~(\ref{eq:#1})}
\newcommand{\myfig}[1]{Figure~\ref{fig:#1}}
\def \E{{\mathbb E}}
 \def  \V{ \mathcal{V}}
 \def  \Y{ \mathcal{Y}}
  \def  \X{ \mathcal{X}}
 \def  \F{ \mathcal{F}}
 \def  \G{ \mathcal{G}}
\begin{document}

\title{Breaking the Curse of Dimensionality with Convex Neural Networks}

\author{\name Francis Bach \email francis.bach@ens.fr \\
       \addr INRIA - Sierra Project-team\\
       D\'epartement d'Informatique de l'Ecole Normale Sup\'erieure \\
     Paris, France
     }

\editor{}

\maketitle
\begin{abstract}
We consider neural networks with a single hidden layer and non-decreasing positively homogeneous activation functions like the rectified linear units. By letting the number of hidden units grow unbounded and using classical non-Euclidean regularization tools on the output weights, they lead to a convex optimization problem and we  provide a detailed theoretical analysis of their generalization performance, with a study of both the approximation  and the estimation errors. We show in particular that they are adaptive to unknown underlying linear structures, such as the dependence on the projection of the input variables onto a low-dimensional subspace. Moreover, when using sparsity-inducing norms on the input weights, we show that high-dimensional non-linear variable selection may be achieved, without any strong assumption regarding the data and with a total number of variables potentially exponential in the number of observations. However, solving this convex optimization problem in infinite dimensions is only possible if the  non-convex subproblem of  addition of a new unit can be solved efficiently. We 
 provide a simple geometric interpretation for our choice of activation functions and 
 describe simple conditions for convex relaxations of the finite-dimensional non-convex subproblem to achieve the same generalization error bounds, even when constant-factor approximations cannot be found. We were not able to find strong enough convex relaxations to obtain provably polynomial-time algorithms and leave open the existence or non-existence of such tractable algorithms with non-exponential sample complexities.
\end{abstract}

\begin{keywords}
Neural networks, non-parametric estimation, convex optimization, convex relaxation
\end{keywords}

\section{Introduction}
\label{sec:intro}

Supervised learning methods come in a variety of ways. They are typically based on local averaging methods, such as $k$-nearest neighbors, decision trees, or random forests,  or on optimization of the empirical risk over a certain function class, such as least-squares regression, logistic regression or support vector machine, with positive definite kernels, with model selection, structured sparsity-inducing regularization, or boosting~\citep[see, e.g.,][and references therein]{gyorfi2002distribution,hastie2009,shaibook}.

Most methods assume either explicitly or implicitly a certain class of models to learn from. In the non-parametric setting, the learning algorithms may adapt the complexity of the models as the number of observations increases: the sample complexity (i.e., the number of observations) to adapt to any particular problem is typically large. For example, when learning Lipschitz-continuous functions in $\rb^d$, at least $n = \Omega(\varepsilon^{-\max\{d,2\}})$ samples are needed to learn a function with excess risk~$\varepsilon$~\citep[][Theorem 15]{luxburg2004distance}. The exponential dependence on the dimension $d$ is often referred to as the \emph{curse of dimensionality}: without any restrictions, exponentially many observations are needed to obtain optimal generalization performances.

At the other end of the spectrum, parametric methods such as linear supervised learning make strong assumptions regarding the problem and  generalization bounds based on estimation errors typically assume that the model is well-specified, and the sample complexity to attain an excess risk of $\varepsilon$ grows as $n = \Omega( d/\varepsilon^2)$, for linear functions in $d$ dimensions and Lipschitz-continuous loss functions~\citep[][Chapter 9]{shaibook}. While the sample complexity is much lower, when the assumptions are not met,  the methods  underfit and more complex models would provide better generalization performances.

Between these two extremes, there are a variety of models with structural assumptions that are often used in practice. For input data in $x \in \rb^d$, prediction functions $f: \rb^d \to \rb$ may for example be parameterized as:
\BIT
\item[(a)] \emph{Affine functions}: $f(x) = w^\top x +b $, leading to potential severe underfitting, but easy optimization and good (i.e., non exponential) sample complexity.

\item[(b)] \emph{Generalized additive models}: $f(x) = \sum_{j=1}^d f_j(x_j)$, which are generalizations of the above by summing functions $f_j: \rb \to \rb$ which may not be affine~\citep{hastie_GAM,spam,grouplasso}. This leads to less strong underfitting but cannot model interactions between variables, while the estimation may be done with similar tools than for affine functions (e.g., convex optimization for convex losses).

\item[(c)] \emph{Nonparametric ANOVA models}: $f(x) = \sum_{A \in \mathcal{A} } f_A(x_A)$ for a set $\mathcal{A}$ of subsets of $\{1,\dots,d\}$, and non-linear functions $f_A:\rb^A \to \rb$. The set $\mathcal{A}$ may be either given~\citep{gu2013smoothing} or learned from data~\citep{cosso,hkl}. Multi-way interactions are explicitly included but a key algorithmic problem is to explore the $2^d-1$ non-trivial potential subsets.

\item[(d)] \emph{Single hidden-layer neural networks}: $f(x) = \sum_{j=1}^k \sigma(w_j^\top x + b_j)$, where $k$ is the number of units in the hidden layer~\citep[see, e.g.,][]{rumelhart1986learning,haykin1994neural}. The activation function $\sigma$ is here assumed to be fixed. While the learning problem may be cast as a (sub)differentiable optimization problem, techniques based on gradient descent may not find the global optimum. If the number of hidden units is fixed, this is a parametric problem.

\item[(e)] \emph{Projection pursuit}~\citep{friedman1981projection}: $f(x) = \sum_{j=1}^k f_j(w_j^\top x )$ where $k$ is the number of projections. This model combines both (b) and (d); the only difference with neural networks is that the non-linear functions $f_j:\rb \to \rb$ are learned from data. The optimization is often done sequentially and is harder than for neural networks.

\item[(e)] \emph{Dependence on a unknown $k$-dimensional subspace}: $f(x) = g(W^\top x  )$ with $W \in \rb^{d \times k}$, where $g$ is a non-linear function. A variety of algorithms exist for this problem \citep{li1991sliced,fukumizu2004dimensionality,DalalyanJS08}. Note that when the columns of $W$ are assumed to be composed of a single non-zero element, this corresponds to \emph{variable selection} (with at most $k$ selected variables).
\EIT

In this paper, our main aim is to answer the following question: \textbf{Is there a \emph{single} learning method that can deal \emph{efficiently}     with all situations above with \emph{provable adaptivity}?} We consider single-hidden-layer neural networks, with non-decreasing homogeneous activation functions such as 
$$\sigma(u) = \max\{u,0\}^\alpha = (u)_+^\alpha,$$ 
for $\alpha \in \{0,1,\dots\}$, with a particular focus on $\alpha = 0$ (with the convention that $0^0=0$), that is $\sigma(u) = 1_{u >0} $ (a threshold at zero), and $\alpha=1$, that is, $\sigma(u) = \max\{u,0\} = (u)_+$, the so-called \emph{rectified linear unit}~\citep{nair2010rectified,krizhevsky2012imagenet}. We follow the convexification approach of~\citet{bengio2006convex,rosset2007L1}, who consider potentially infinitely many units and let a sparsity-inducing norm choose the number of units automatically.  This leads naturally to incremental algorithms such as forward greedy selection approaches, which have a long history for single-hidden-layer neural networks~\citep[see, e.g.][]{breiman1993hinging,lee1996efficient}.

We make the following contributions:

\vspace*{-.125cm}

\BIT

\item[--] We provide in \mysec{general} a review of functional analysis tools used for learning from continuously infinitely many basis functions, by studying carefully the similarities and differences between $L_1$- and $L_2$-penalties on the   output weights. For $L_2$-penalties, this corresponds to a positive definite kernel and may be interpreted through random sampling of hidden weights.
We also review incremental algorithms (i.e., forward greedy approaches) to learn from these infinite sets of  basis functions when using $L_1$-penalties.

\item[--] The results are specialized in \mysec{neural} to neural networks with a single hidden layer and activation functions which are  positively homogeneous (such as the rectified linear unit). In particular, in Sections~\ref{sec:FW1}, \ref{sec:FW2} and \ref{sec:FW3}, we provide simple geometric interpretations to the non-convex problems of  additions of new units, in terms of separating hyperplanes or Hausdorff distance between convex sets. They constitute the core potentially hard computational tasks in our framework of learning from continuously many basis functions.

\item[--] In \mysec{approx}, we provide a detailed theoretical analysis of the approximation properties of (single hidden layer) convex neural networks with monotonic homogeneous activation functions, with explicit bounds. We relate these new results to the extensive literature on approximation properties of neural networks~\citep[see, e.g.,][and references therein]{pinkus1999approximation} in \mysec{related}, and show that these neural networks are indeed adaptive to linear structures, by replacing the exponential dependence in dimension by an exponential dependence in the dimension of the subspace of the data can be projected to for good predictions.

\item[--] In \mysec{adaptivity}, we study the generalization properties under a standard supervised learning set-up, and show that these convex neural networks are adaptive to all situations mentioned earlier. These are summarized in Table~\ref{tab:summary} and constitute the main statistical results of this paper.
 When using an $\ell_1$-norm on the input weights, we show in \mysec{highdim} that high-dimensional non-linear variable selection may be achieved, that is, the number of input variables may be much larger than the number of observations, without any strong assumption regarding the data (note that we do not present a polynomial-time algorithm to achieve this).

\item[--] We provide in \mysec{sufficient} simple conditions for convex relaxations to achieve the same generalization error bounds, even when constant-factor approximation cannot be found (e.g., because it is NP-hard such as for the threshold activation function and the rectified linear unit). We present in \mysec{gauge} convex relaxations based on semi-definite programming, but we were not able to find strong enough convex relaxations (they provide only a provable sample complexity with a polynomial time algorithm 
  which is exponential in the dimension $d$) and leave open the existence or non-existence of polynomial-time algorithms that preserve the non-exponential sample complexity.
\EIT

\begin{table}

\begin{center}
\begin{tabular}{|ll|c|}
\hline
\textcolor{white}{$ \Big|$}   & \hspace*{.8cm} Functional form & Generalization bound        \\
\hline
 No assumption & & 
\textcolor{white}{$ \bigg|$}  $ 
       {n^{- 1/( d+3)}}\log n $ 
     \\
     \hline
Affine function & 
\textcolor{white}{$ \bigg|$}   $w^\top x + b$  \textcolor{white}{$ \bigg|$} &   $   d^{1/2}
\cdot    {n^{-1/2}}$   \\
\hline
Generalized additive model & 
\textcolor{white}{$ \bigg|$}  $  \sum_{j=1}^k f_j( w_j^\top x)$, $w_j \in \rb^d$ &
$ 
  k   d^{1/2} \cdot  {n^{-1/4}} \log n $ 
   \\
\hline
Single-layer neural network & \textcolor{white}{$ \bigg|$}  $ \sum_{j=1}^k \eta_j  ( w_j ^\top x + b_j)_+ $ & 
$ k  d^{1/2}
\cdot    {n^{-1/2}}$  \\
\hline
Projection pursuit& 
\textcolor{white}{$ \bigg|$}  $  \sum_{j=1}^k f_j( w_j^\top x)$, $w_j \in \rb^d$ &
$   k d^{1/2}
\cdot    {n^{-1/4}} \log n$ 
   \\
\hline
Dependence on subspace & 
\textcolor{white}{$ \bigg|$}  $  f( W^\top x)$ , $W \in \rb^{d \times s}$  &
$     d^{1/2}  \cdot    {n^{-1/( s+3)}} \log n$ 
   \\
\hline
\end{tabular}
 \end{center}

\caption{Summary of generalization bounds for various models. The bound represents the expected excess risk over the best predictor in the given class. When no assumption is made, the dependence in $n$ goes to zero with an exponent proportional to $1/d$ (which leads to sample complexity exponential in $d$), while making assumptions removes the dependence of $d$ in the exponent.
\label{tab:summary}}
\end{table}

\section{Learning from continuously infinitely many basis functions}
\label{sec:general}

In this section we present the functional analysis framework underpinning the methods presented in this paper, which learn for a potential continuum of features. While the formulation from Sections~\ref{sec:variation} and~\ref{sec:rep} originates from the early work on the approximation properties of neural networks~\citep{barron1993universal,kurkova,mhaskar2004tractability}, the algorithmic parts that we present in \mysec{conditional} have been studied in a variety of contexts, such as ``convex neural networks''~\citep{bengio2006convex}, or $\ell_1$-norm with infinite dimensional feature spaces~\citep{rosset2007L1}, with links with conditional gradient algorithms~\citep{dunn1978conditional,jaggi} and boosting~\citep{rosset2004boosting}.

In the following sections, note that there will be two different notions of \emph{infinity}:  infinitely many inputs $x$  and infinitely many basis functions $x \mapsto \varphi_v(x)$. Moreover, two orthogonal notions of \emph{Lipschitz-continuity} will be tackled in this paper: the one of the prediction functions $f$, and the one of the loss $\ell$ used to measure the fit of these prediction functions.

\subsection{Variation norm}
\label{sec:variation}
We consider an arbitrary measurable input space $\mathcal{X}$ (this will a sphere in $\rb^{d+1}$ starting from \mysec{neural}), with a set of \emph{basis functions} (a.k.a.~\emph{neurons} or \emph{units}) $\varphi_v : \mathcal{X} \to \rb$, which are parameterized by $v \in \V$, where $\V$ is a compact topological space (typically a  sphere for a certain norm on $\rb^d$ starting from \mysec{neural}). We assume that for any given $x \in \X$, the functions $v \mapsto \varphi_v(x)$ are continuous. These functions will be the hidden neurons in a single-hidden-layer neural network, and thus $\V$ will be $(d+1)$-dimensional for inputs of dimension $d$ (to represent any affine function). Throughout \mysec{general}, these features will be left unspecified as most of the tools apply more generally.

In order to define our space of functions from $\mathcal{X} \to \rb$, we need real-valued Radon measures, which are continuous linear forms on the space of continuous functions from $\mathcal{V}$ to $\rb$, equipped with the uniform norm~\citep{rudin1987real,evans1991measure}. For a continuous function $g: \mathcal{V} \to \rb$ and a Radon measure $\mu$, we will use the standard notation
$\int_{\V} g(v) d \mu(v)$ to denote the action of the measure $\mu$ on the continuous function $g$. The norm of $\mu$ is usually referred to as its \emph{total variation} (such finite total variation corresponds to having a continuous linear form on the space of continuous functions), and we denote it as $|\mu|( \V)$, and is equal to the supremum of $\int_{\V} g(v) d \mu(v)$ over all continuous functions with values in $[-1,1]$.
As seen below, when $\mu$ has a density with respect to a probability measure, this is the $L_1$-norm of the density.

 We consider the space $\F_1$ of functions $ f   $ that can be written as 
$$
f(x) = \int_{\V} \varphi_v(x) d \mu(v),
$$
where $\mu$ is a signed Radon measure on $\V$ with finite total variation $|\mu|( \V)$. 

When $\V$ is finite, this corresponds to
$$f(x) = \sum_{v \in \V }  \mu_v \varphi_{v}(x),$$
with total variation $\sum_{v \in \V} |\mu_v|$, where the proper formalization for infinite sets $\V$ is done through measure theory.

The infimum of   $|\mu|( \V)$ over all decompositions of $f$ as $f = \int_{\V} \varphi_v d \mu(v)$, turns out to be a  norm~$\gamma_1$ on~$\F_1$, often called the \emph{variation} norm of $f$ with respect to the set of basis functions \citep[see, e.g.,][]{kurkova,mhaskar2004tractability}. 

Given our assumptions regarding the compactness of $\V$, for any $f \in \F_1$, the infimum defining $\gamma_1(f)$ is in fact attained by a signed measure $\mu$, as a consequence of the compactness of measures for the weak topology~\citep[see][Section 1.9]{evans1991measure}.

In the definition above, if we assume that the signed measure $\mu$ has a density with respect to a fixed \emph{probability} measure $\tau$ with full support on $\mathcal{V}$, that is, $d \mu(v) = {p}(v) d \tau (v)$, then, the   variation norm $\gamma_1(f)$ is  also equal to the infimal value of
$$|\mu|( \V) = \int_\V  |{p}(v)| d \tau(v),$$
over all integrable functions $p$ such that  $f(x) = \int_\V {p}(v) \varphi_v(x) d \tau(v)$.  Note however that not all measures have densities, and that the two infimums are the same as all Radon measures are limits of measures with densities. Moreover, the infimum in the definition above is not attained in general (for example when the optimal measure is singular with respect to $d \tau$); however, it often provides a more intuitive definition of the variation norm, and leads to easier comparisons with Hilbert spaces in \mysec{rkhs}.

\paragraph{Finite number of neurons.} If $f: \X \to \rb$ is decomposable into $k$ basis functions, that is, 
$f(x) = \sum_{j=1}^k \eta_j \varphi_{v_j}(x)$, then this corresponds to $\mu = \sum_{j=1}^k \eta_j \delta(v=v_j)$, and the total variation of $\mu$ is equal to the $\ell_1$-norm $\| \eta \|_1$ of $\eta$. Thus the function $f$ has variation norm less than  $\| \eta\|_1$ or equal. This is to be contrasted with the number of basis functions, which is the $\ell_0$-pseudo-norm of $\eta$.

\subsection{Representation from finitely many functions}
\label{sec:rep}

When minimizing any functional $J$ that depends only on the function values taken at a subset $\hat{\X}$ of values in $\X$,  over the ball $\{ f \in \F_1, \ \gamma_1(f) \leqslant \delta\}$, then we have a ``representer theorem'' similar to the reproducing kernel Hilbert space situation, but also with significant differences, which we now present.

 The problem is indeed simply equivalent to minimizing a functional on  functions restricted to $\X$, that is, to minimizing 
$J(f_{|\hat{\mathcal{X}}})$ over $f_{|\hat{\X}} \in \rb^{\hat{\X}}$, such that $ \gamma_{1|\hat{\X}}(f_{|\hat{\X}}) \leqslant \delta$, where
$$
 {\gamma}_{1|\hat{\X}}( f_{|\hat{\X}} ) = \inf_\mu |\mu|(\V) \mbox{ such that }  \forall x \in \hat{\X}, \   f_{|\hat{\X}}(x)  = \int_{\V}{\varphi}_{v}(x) d\mu(v);
$$
we can then build a function defined over all $\X$, through the optimal measure $\mu$ above. 

Moreover,  by Carath\'eodory's theorem for cones~\citep{rockafellar97}, if $\hat{\X}$ is composed of only~$n$ elements (e.g., $n$ is the number of observations in machine learning),  the optimal function $ f_{|\hat{\X}} $ above  (and hence $f$) may be decomposed into at most~$n$ functions $\varphi_v$, that is, $\mu$ is supported by at most $n$ points in $\V$, among a potential continuum of possibilities.

 Note however that the identity of these $n$ functions is not known in advance, and thus there is a significant difference with the representer theorem for positive definite kernels and Hilbert spaces~\citep[see, e.g.,][]{Cristianini2004}, where the set of $n$ functions are known from the knowledge of the points $x \in \hat{\X}$ (i.e., kernel functions evaluated at $x$).

\subsection{Corresponding reproducing kernel Hilbert space (RKHS)}
\label{sec:RKHS}
\label{sec:rkhs}

We have seen above that if the real-valued measures $\mu$ are restricted to have density $p$ with respect to a fixed probability measure $\tau$ with full support on $\mathcal{V}$, that is, $d \mu(v) = {p}(v) d \tau (v)$, then, 
the norm $\gamma_1(f)$ is the infimum of the total variation 
$|\mu|( \V)= \int_\V  |{p}(v)| d \tau(v),$
over all decompositions $f(x) = \int_\V {p}(v) \varphi_v(x) d \tau(v)$.

We may  also define the infimum of  $ \int_\V  |{p}(v)|^2 d \tau(v)$ over the same decompositions (squared $L_2$-norm instead of $L_1$-norm).
 It turns out that it defines a squared norm $\gamma_2^2$ and that the function space~$\F_2$ of functions with finite norm happens to be a reproducing kernel Hilbert space (RKHS). When $\V$ is finite, then it is well-known~\citep[see, e.g.,][Section~4.1]{berlinet2004reproducing} that the infimum of $\sum_{v \in \V} \mu_v^2$ over all vectors $\mu$ such that $f = \sum_{v \in V} \mu_v \varphi_v$ defines a squared RKHS norm with positive definite kernel $k(x,y) = \sum_{v \in V} \varphi_v(x) \varphi_v(y)$.
 
 We show in Appendix~\ref{app:rkhs} that for any compact set $\V$, we have defined a squared RKHS norm 
  $\gamma_2^2$  with positive definite kernel $ \displaystyle k(x,y) = \int_\V  \varphi_v(x) \varphi_v(y) d \tau(v)$.
 
 \paragraph{Random sampling.}
 Note that such kernels are well-adapted to approximations by sampling several basis functions $\varphi_v$ sampled from the probability measure $\tau$~\citep{neal1995bayesian,rahimi2007random}. Indeed, if we consider $m$ i.i.d.~samples $v_1,\dots,v_m$, we may define the approximation
 $\hat{k}(x,y) = \frac{1}{m} \sum_{i=1}^m \varphi_{v_i}(x) \varphi_{v_i}(y)$, which  corresponds to an explicit feature representation. In other words, this corresponds to sampling units  $v_i$, using prediction functions of the form $\frac{1}{m} \sum_{i=1}^m \eta_i \varphi_{v_i}(x)$ and then penalizing by the $\ell_2$-norm of $\eta$. 
 
 When $m$ tends to infinity, then $\hat{k}(x,y)$ tends to $k(x,y)$ and random sampling provides a way to work efficiently with explicit $m$-dimensional feature spaces. See~\citet{rahimi2007random} for a analysis of the number of units needed for an approximation with error $\varepsilon$, typically of order $1/\varepsilon^2$. See also~\citet{kernelexp} for improved results with a better dependence on $\varepsilon$ when making extra assumptions on the eigenvalues of the associated covariance operator.

 \paragraph{Relationship between $\F_1$ and $\F_2$.}
 
 The corresponding RKHS norm is always greater than the variation norm (because of Jensen's inequality), and thus the RKHS $\F_2$ is included in $\mathcal{F}_1$. However, as shown in this paper, the two spaces $\F_1$ and $\F_2$ have very different properties; e.g., $\gamma_2$ may be computed easily in several cases, while $\gamma_1$ does not; also, learning with $\F_2$ may either be done by random sampling of sufficiently many weights or using kernel methods, while $\F_1$ requires dedicated convex optimization algorithms with potentially non-polynomial-time steps (see~\mysec{cg}).
 
  Moreover, for any $v \in \V$, $\varphi_v \in \F_1$ with a norm $\gamma_1(\varphi_v) \leqslant 1$, while in general $\varphi_v \notin \F_2$. This is a simple illustration of the fact that $\F_2$ is too small and thus will lead to a  lack of adaptivity that will be further studied in \mysec{comp} for neural networks with certain activation functions.

\subsection{Supervised machine learning}
\label{sec:learning-losses}

Given some  distribution over the pairs $(x,y) \in \X\times \Y$, a loss function $\ell: \Y \times \rb \to \rb$, our aim is to find a function $f: \X\to \rb$ such that the functional $J(f) = \E \big[ \ell(y,f(x)) \big]$ is small, given some i.i.d.~observations $(x_i,y_i)$, $i=1,\dots,n$. We consider the empirical risk minimization framework over a space of functions $\F$, equipped with a norm $\gamma$ (in our situation, $\F_1$ and $\F_2$, equipped with  $\gamma_1$ or $\gamma_2$). The empirical risk $\hat{J}(f) = \frac{1}{n} \sum_{i=1}^n \ell(y_i,f(x_i))$, is minimized either (a) by constraining $f$ to be in the ball $\mathcal{F}^\delta = \{ f \in \F, \ \gamma(f) \leqslant \delta \}$ or (b) regularizing the empirical risk by $\lambda \gamma(f)$. 
Since this paper has a more theoretical nature, we focus on constraining, noting that in practice, penalizing is often more robust~\citep[see, e.g.,][]{harchaoui2013conditional} and leaving its analysis in terms of learning rates for future work. Since the functional $\hat{J}$ depends only on function values taken at finitely many points, the results from \mysec{rep} apply and we expect the solution $f$ to be spanned by only $n$ functions $\varphi_{v_1},\dots,\varphi_{v_n}$ (but we ignore in advance which ones among all $\varphi_v$, $v \in \V$, and the algorithms in \mysec{cg} will provide approximate such representations with potentially less or more than $n$ functions).

\paragraph{Approximation error vs. estimation error.}
We consider an $\varepsilon$-approximate minimizer of $\hat{J}(f) = \frac{1}{n} \sum_{i=1}^n \ell(y_i,f(x_i))$ on the convex set $\mathcal{F}^\delta$, that is a certain $\hat{f} \in \F^\delta$ such that
$\hat{J}(\hat{f}) \leqslant \varepsilon + \inf_{f \in \F^\delta} \hat{J}(f)$. We thus have, using standard arguments~\citep[see, e.g.,][]{shaibook}:
$$
J(\hat{f}) - \inf_{f\in \F} J(f)
\leqslant \bigg[ \inf_{f\in \F^\delta } J(f) - \inf_{f\in \F} J(f) \bigg] + 2 \sup_{ f \in \F^\delta} | \hat{J}(f) - J(f) | + \varepsilon,
$$
that is, the excess risk $J(\hat{f}) - \inf_{f\in \F} J(f)$ is upper-bounded by a sum of an \emph{approximation error}  $\inf_{f\in \F^\delta } J(f) - \inf_{f\in \F} J(f)$, an \emph{estimation error} 
$ 2 \sup_{ f \in \F^\delta} | \hat{J}(f) - J(f) | $ and an \emph{optimization error} $\varepsilon$ \citep[see also][]{bottou-bousquet-2008b}.
In this paper, we will deal with all three errors, starting from the optimization error which we now consider for the space $\F_1$ and its variation norm.

\subsection{Incremental conditional gradient algorithms}
\label{sec:condgrad}
\label{sec:cg}
\label{sec:conditional}

In this section, we review algorithms to minimize a smooth functional $J: L_2(d\rho) \to \rb$, where~$\rho$ is a probability measure on $\X$. This may typically be the expected risk or the empirical risk above. 
When minimizing $J(f)$ with respect to $f \in \mathcal{F}_1$ such that $\gamma_1(f) \leqslant \delta$, we need algorithms that can efficiently optimize a convex function over an infinite-dimensional space of functions. Conditional gradient algorithms allow to incrementally build a set of elements of $\mathcal{F}_1^\delta = \{ f \in \F_1, \ \gamma_1(f) \leqslant \delta\}$; see, e.g.,~\citet{frank2006algorithm,dem1967minimization,dudik2012lifted,harchaoui2013conditional,jaggi,bach2012duality}.

 \begin{figure}
\begin{center}
 \includegraphics[scale=1]{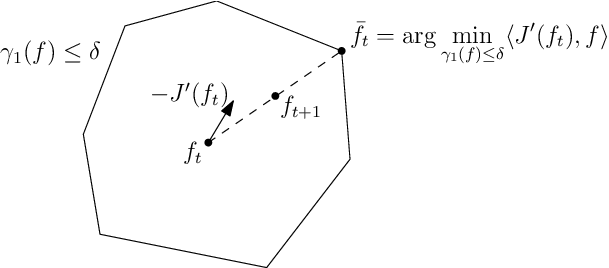}
 \end{center}
 
 \vspace*{-.5cm}
 
 \caption{Conditional gradient algorithm for minimizing a smooth functional $J$ on  $\mathcal{F}_1^\delta = \{ f \in \F_1, \ \gamma_1(f) \leqslant \delta\}$: going from $f_t$ to $f_{t+1}$; see text for details.}
 \label{fig:cg}
 \end{figure}

\paragraph{Conditional gradient algorithm.}
We assume the functional $J$ is convex and $L$-smooth, that is for all $h \in L_2(d\rho)$, there exists a gradient $J'(h) \in L_2(d\rho)$ such that for all $f \in L_2(d\rho)$, 
$$
0 \leqslant J(f)  - J(h) - \langle f - h, J'(h) \rangle_{L_2(d\rho)} \leqslant \frac{L}{2} \| f - h\|^2_{L_2(d\rho)}.
$$
When $\X$ is finite, this corresponds to the regular notion of smoothness from convex optimization~\citep{nesterov2004introductory}.

The conditional gradient algorithm (a.k.a.~Frank-Wolfe algorithm) is an iterative algorithm, starting from any function $f_0 \in \F_1^\delta$ and with the following  recursion, for $t \geqslant 0$:
\BEAS
\bar{f}_t & \in &   \arg\min_{ f \in \F_1^\delta}  \ \langle f, J'(f_t) \rangle_{ L_2(d\rho)}\\
f_{t+1} &  = & ( 1- \rho_t) f_t + \rho_t \bar{f}_t.
\EEAS
See an illustration in \myfig{cg}.
We may choose either $\rho_t = \frac{2}{t+1}$ or perform a line search for $\rho_t \in [0,1]$. For all of these strategies, the $t$-th iterate is a convex combination of the functions $\bar{f}_0,\dots,\bar{f}_{t-1}$, and is thus an element of $\F_1^\delta$. It is known that for these two strategies for $\rho_t$, we have the following convergence rate~\citep[see, e.g.][]{jaggi}:  
$$
J(f_t) - \inf_{ f \in \F_1^\delta} J(f) \leqslant \frac{2L}{t+1} \sup_{f,g \in   \F_1^\delta} \| f - g\|^2_{ L_2(d\rho)}.
$$
When, $r^2 = \sup_{ v \in \V} \| \varphi_v \|^2_{ L_2(d\rho)}$ is finite, we have
$\| f\|^2_{ L_2(d\rho)} \leqslant r^2 \gamma_1(f)^2$ and thus we get a convergence rate of $\frac{2Lr^2 \delta^2}{t+1}$.

Moreover, the basic Frank-Wolfe (FW) algorithm may be extended to handle the regularized problem as well \citep{harchaoui2013conditional,siammatrix,zhang2012accelerated}, with similar convergence rates in $O(1/t)$. Also, the second step in the algorithm, where the function $f_{t+1}$ is built in the segment between $f_t$ and the newly found extreme function, may be replaced by the optimization of  $ {J}$ over the convex hull of all functions $\bar{f}_0, \dots, \bar{f}_{t}$, a variant which is often referred to as \emph{fully corrective}. Moreover, in our context where $\V$ is a space where local search techniques may be considered, there is also the possibility of ``fine-tuning'' the vectors $v$ as well~\citep{bengio2006convex}, that is,  we may optimize the function $(v_1,\dots,v_t,\alpha_1,\dots,\alpha_t) \mapsto J\big(
\sum_{i=1}^t \alpha_i \varphi_{v_i}
\big)$, through local search techniques, starting from the weights $(\alpha_i)$ and points $(v_i)$ obtained from the conditional gradient algorithm.

\paragraph{Adding a new basis function.}
The conditional gradient algorithm presented above relies on solving at each iteration the ``Frank-Wolfe step'':
$$ \max_{ \gamma(f) \leqslant \delta}  \ \langle f, g \rangle_{ L_2(d\rho)}.$$
for $g = - J'(f_t) \in  L_2(d\rho)$. For the norm $\gamma_1$ defined through an $L_1$-norm, we have for
$f = \int_{\V} \varphi_v d \mu(v)$ such that $\gamma_1(f) = |\mu|(\V)$:
\BEAS
\langle f, g \rangle_{ L_2(d\rho)}
& = & \int_\X f(x) g(x) d\rho(x) 
= \int_\X \bigg( \int_\V \varphi_v(x)  d \mu(v) \bigg) g(x)  d\rho(x) \\
&
= &  \int_\V \bigg( \int_\X \varphi_v(x) g(x)  d\rho(x) \bigg) d  \mu(v) \\
& \leqslant & \gamma_1(f) \cdot \max_{v \in \V} \bigg| \int_\X \varphi_v(x) g(x)  d\rho(x) \bigg|,
\EEAS
with equality if and only if   $\mu = \mu_+ - \mu_-$ with $\mu_+$ and $\mu_-$ two non-negative measures, with $\mu_+$ (resp.~$\mu_-$) supported in the set of maximizers $v$ of 
$ \big| \int_\X \varphi_v(x) g(x)  d\rho(x) \big|$ where the value is positive (resp. negative).

This implies that:
\BEQ
\label{eq:incpop}
\max_{ \gamma_1(f) \leqslant \delta}  \ \langle f, g \rangle_{ L_2(d\rho)}
= \delta \max_{v \in \V} \bigg| \int_\X \varphi_v(x) g(x)  d\rho(x) \bigg|,
\EEQ
with the maximizers $f$ of the first optimization problem above (left-hand side) obtained as $\delta$ times convex combinations of $\varphi_v$ and $-\varphi_v$ for maximizers  $v$ of the second problem (right-hand side). 

A common difficulty in practice is the hardness of the Frank-Wolfe step, that is, the optimization problem above over $\V$ may be difficult to solve. See \mysec{FW1}, \ref{sec:FW2} and \ref{sec:FW3} for  neural networks, where this optimization is usually difficult.

\paragraph{Finitely many observations.}
When $\X$ is finite (or when using the result from \mysec{rep}), the Frank-Wolfe step in \eq{incpop} becomes equivalent to,
for some  vector $g \in \rb^n$: 
\BEQ
\label{eq:incr}
  \sup_{ \gamma_1(f) \leqslant \delta }   \frac{1}{n} \sum_{i=1}^n g_i f(x_i)   
\ = \  \delta \max_{v \in \mathcal{V} }  \bigg| \frac{1}{n} \sum_{i=1}^n g_i \varphi_v(x_i) \bigg| ,
\EEQ
where the set of solutions of the first problem is in the convex hull of the solutions of the second problem.

\paragraph{Non-smooth loss functions.}
In this paper, in our theoretical results, we consider non-smooth loss functions for which conditional gradient algorithms do not converge in general. One possibility is to smooth the loss function, as done by~\citet{nesterov2005smooth}: an approximation error of $\varepsilon$ may be obtained with a smoothness constant proportional to $1/\varepsilon$. By choosing $\varepsilon$ as $1/\sqrt{t}$, we obtain a convergence rate of $O(1/\sqrt{t})$ after $t$ iterations. See also~\citet{lan}.

\paragraph{Approximate oracles.}
The conditional gradient algorithm may deal with approximate oracles; however, what we need in this paper  is not the additive errors situations considered by~\citet{jaggi}, but multiplicative ones on the computation of the dual norm (similar to ones derived by~\citet{siammatrix} for the regularized problem).

Indeed, in our context, we minimize a function $J(f)$ on $f \in L_2(d\rho)$ over a norm ball $\{ \gamma_1(f) \leqslant \delta\}$. A multiplicative approximate oracle outputs for any $g \in L_2(d\rho)$, a vector $\hat{f} \in L_2(d\rho)$ such that $\gamma_1(\hat{f})=1$, and
$$
\langle \hat{f}, g \rangle \leqslant \max_{\gamma_1(f) \leqslant 1} \langle f, g \rangle \leqslant \kappa \, \langle \hat{f}, g \rangle,
$$
for a fixed $\kappa \geqslant 1$. In Appendix~\ref{app:cg}, we propose a modification of the conditional gradient algorithm that converges to a certain $h\in L_2(d\rho)$ such that $\gamma_1(h) \leqslant \delta$ and for which
$\inf_{\gamma_1(f) \leqslant  \delta} J(f) \leqslant J(h) \leqslant \inf_{\gamma_1(f) \leqslant   \delta / \kappa} J(f)$.

Such approximate oracles are not available in general, because they require uniform bounds over all possible values of $g \in L_2(d\rho)$. In \mysec{oracle}, we show that a weaker form of oracle is sufficient to preserve our generalization bounds from \mysec{bounds}.

\paragraph{Approximation of any function by a finite number of basis functions.}
The Frank-Wolfe algorithm may be applied in the function space $\F_1$ with $J(f) = \frac{1}{2} \E \big[  (f(x) - g(x))^2 \big] $, we get a function~$f_t$, supported by $t$ basis functions such that 
$\displaystyle   \E \big[ (f_t(x) - g(x))^2\big]  = O( \gamma(g)^2  /t) $.
Hence, any function in $\F_1$ may be approximated with averaged error $\varepsilon$ with $t = O \big( \big[ \gamma(g)   / \varepsilon \big]^2 \big)$ units.
Note that the conditional gradient algorithm is one among many ways to obtain such approximation with $\varepsilon^{-2}$ units~\citep{barron1993universal,kurkova,mhaskar2004tractability}. See \mysec{neuron} for a (slightly) better dependence on $\varepsilon$ for convex neural networks.

\section{Neural networks with non-decreasing positively homogeneous activation functions}
\label{sec:neural}

In this paper, we focus on a specific family of basis functions, that is, of the form
$$ x \mapsto \sigma( w^\top x +   b ),$$
for specific activation functions $\sigma$. We assume that $\sigma $ is non-decreasing and positively homogeneous of some integer degree, i.e., it is equal to $\sigma(u) = (u)_+^\alpha$, for some $\alpha \in \{0,1,\dots\}$. We focus on these functions for several reasons:

\vspace*{-.125cm}

\BIT
\item[--] Since they are not polynomials, linear combinations of these functions can approximate any measurable function~\citep{leshno1993multilayer}.
\item[--] By homogeneity, they are invariant by a change of scale of the data; indeed, if all observations~$x$  are multiplied by a constant, we may simply change the measure $\mu$ defining the expansion of $f$ by the appropriate constant to obtain exactly  the same function. This allows us to study functions defined on the unit-sphere.

\item[--] The special case $\alpha=1$, often referred to as the \emph{rectified linear unit}, has seen considerable recent empirical success~\citep{nair2010rectified,krizhevsky2012imagenet}, while the case $\alpha=0$ (hard thresholds) has some historical importance~\citep{rosenblatt1958perceptron}.

\EIT
The goal of this section is to specialize the results from \mysec{general} to this particular case and show that the ``Frank-Wolfe'' steps have   simple geometric interpretations.

We first show that the positive homogeneity of the activation functions allows to transfer the problem to a unit sphere.

\paragraph{Boundedness assumptions.} For the theoretical analysis, we assume that our data inputs $x \in \rb^d$ are almost surely bounded by $R$ in $\ell_q$-norm, for some $q \in [2,\infty]$ (typically $q=2$ and $q=\infty$). We then build the augmented variable
$z \in \rb^{d+1}$ as $z = ( x^\top, R)^\top \in \rb^{d+1}$ by appending the constant~$R$ to $x \in \rb^d$. We therefore have $\| z\|_q \leqslant \sqrt{2} R$. By defining the vector $v=(w^\top,b/R)^\top \in \rb^{d+1}$, we have:
$$\varphi_v(x) = \sigma( w^\top x +   b ) = \sigma( v^\top z ) = (v^\top z)_+^\alpha,$$
which now becomes a function of $z \in \rb^{d+1}$.

Without loss of generality (and by homogeneity of $\sigma$), we may assume that the $\ell_p$-norm of each vector $v$ is equal to $1/R$, that is $\V$ will be the $(1/R)$-sphere for the $\ell_p$-norm, where $1/p+1/q=1$ (and thus $p\in [1,2]$, with corresponding typical values $p=2$ and $p=1$).

This implies by H\"older's inequality that $\varphi_v(x)^2 \leqslant 2^\alpha  $.  Moreover this leads to functions in $\F_1$ that are bounded everywhere, that is, $ \forall f \in \F_1$, $ f(x)^2 \leqslant  2^\alpha   \gamma_1(f)^2$.
Note that the functions in $\F_1$ are also Lipschitz-continuous for $\alpha \geqslant 1$. 

Since all $\ell_p$-norms (for $p \in [1,2]$) are equivalent to each other with constants of at most $\sqrt{d}$ with respect to the $\ell_2$-norm, all the spaces $\F_1$ defined above are equal, but the norms $\gamma_1$ are of course different and they differ by a constant of at most $d^{\alpha/2}$---this can be seen by computing the dual norms  like in \eq{incr} or \eq{incpop}.

\paragraph{Homogeneous reformulation.}
In our study of approximation properties, it will be useful to consider the   the space of function $\mathcal{G}_1$ defined for $z$ in the unit sphere $\Sb^d \subset \rb^{d+1}$ of the Euclidean norm, such that $ g(z) = \int_{\Sb^d} \sigma(v^\top z ) d \mu(v)$,  
with the norm $\gamma_1(g)$ defined as the infimum of $|\mu|(\Sb^d)$ over all decompositions of $g$. Note the slight overloading of notations for $\gamma_1$ (for norms in $\G_1$ and $\F_1$) which should not cause any confusion.

 In order to prove the approximation properties (with unspecified constants depending only on $d$), we may assume that $p=2$, since the norms $\| \cdot\|_p$ for $p \in [1,\infty]$ are equivalent to $\| \cdot\|_2$ with a constant that grows at most as $d^{\alpha/2}$ with respect to the $\ell_2$-norm. We thus focus on the $\ell_2$-norm in all proofs in \mysec{approx}.

We may go from $\mathcal{G}_1$ (a space of real-valued functions defined on the unit $\ell_2$-sphere in $d+1$ dimensions) to the space $\mathcal{F}_1$ (a space of real-valued functions defined on the ball of radius $R$ for the $\ell_2$-norm) as follows (this corresponds to sending a ball in $\rb^d$ into a spherical cap in dimension $d+1$, as illustrated in \myfig{cap}).

\BIT
\item[--] Given $g \in \mathcal{G}_1$, we define $ f \in \mathcal{F}_1$, with $f(x) = \Big( \frac{\| x\|_2^2}{R^2} + 1 \Big)^{\alpha/2} g\bigg( \displaystyle
\frac{1}{ \sqrt{ \| x\|_2^2 + R^2  }} {x \choose R} \bigg)$. If $g$ may be represented as 
$\int_{\Sb^d} \sigma(v^\top z ) d \mu(v)$, then the function $f$ that we have defined may be represented
as 
\BEAS
f(x) & = &  \Big( \frac{\| x\|_2^2}{R^2} + 1 \Big)^{\alpha/2} \int_{\Sb^d}   \bigg( v^\top \displaystyle
\frac{1}{ \sqrt{ \| x\|_2^2 + R^2  }} {x \choose R} \bigg)_+^\alpha d \mu(v) \\
& = & \int_{\Sb^d}   \bigg( v^\top \displaystyle
  {x/R \choose 1} \bigg)_+^\alpha d \mu(v) =
 \int_{\Sb^d} \sigma(w^\top x + b) d \mu(Rw,b),
\EEAS that is $\gamma_1(f) \leqslant \gamma_1(g)$, because we have assumed that $(w^\top,b/R)^\top $ is on the $(1/R)$-sphere.

\item[--] Conversely, given $f \in \mathcal{F}_1$, for $z = (t^\top, a)^\top \in \Sb^{d}$, we define
$g(z) = g(t,a) =   f\big( \frac{Rt}{a} \big)  a^\alpha  $, which we define as such on the set of $z= (t^\top, a)^\top \in \rb^d \times \rb$ (of unit norm) such that $a \geqslant \frac{1}{\sqrt{2}}$. Since we always assume $\| x\|_2 \leqslant R$,  we have  $ \sqrt{ \| x\|_2^2 + R^2  } \leqslant \sqrt{2}R$, and the value of $g(z,a)$ for $a \geqslant \frac{1}{\sqrt{2}}$ is enough to recover $f$ from the formula above.

\begin{figure}
\begin{center}
\includegraphics[scale=1]{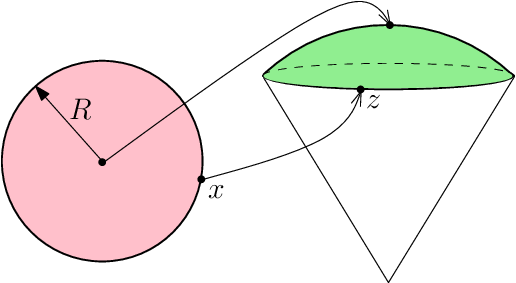}
\end{center}

\vspace*{-.5cm}

\caption{ Sending a ball to a spherical cap.
\label{fig:cap} }
\end{figure}

 On that portion $\{ a \geqslant 1/\sqrt{2} \}$ of the sphere $\Sb^d$, this function exactly inherits the differentiability properties of $f$. That is, (a) if $f$ is bounded by $1$ and $f$ is $(1/R)$-Lipschitz-continuous, then $g$ is Lipschitz-continuous with a constant that only depends on $d$ and $\alpha$ and (b), if all derivatives of order less than $k$ are bounded by $R^{-k}$, then all derivatives of the same order of $g$ are bounded by a constant that only depends on $d$ and $\alpha$. Precise notions of differentiability may be defined on the sphere, using the manifold structure~\cite[see, e.g.,][]{absil2009optimization} or through polar coordinates~\cite[see, e.g.,][Chapter 3]{atkinson2012spherical}. See these references for more details.

 The only remaining important aspect is to define $g$ on the entire sphere, so that (a) its regularity constants are controlled by a constant times the ones on the portion of the sphere where it is already defined, (b) $g$ is either even or odd (this will be important in \mysec{approx}). Ensuring that the regularity conditions can be met is classical when extending to the full sphere~\citep[see, e.g.,][]{whitney1934analytic}. Ensuring  that the function may be chosen as odd or even may be obtained by multiplying the function $g$ by an infinitely differentiable function which is equal to one for $a \geqslant 1/\sqrt{2}$ and zero for $a \leqslant 0$, and extending by $-g$ or $g$ on the hemi-sphere $a<0$.

\EIT

In summary, we may consider in \mysec{approx} functions defined on the sphere, which are much easier to analyze. In the rest of the section, we specialize some of the general concepts reviewed in \mysec{general} to our neural network setting with specific activation functions, namely, in terms of corresponding kernel functions and geometric reformulations of the Frank-Wolfe steps.

\subsection{Corresponding positive-definite kernels}
\label{sec:kersigma}
In this section, we consider the $\ell_2$-norm on the input weight vectors $w$ (that is $p=2$). We may compute for
$x,x' \in \rb^{d}$ the kernels defined in \mysec{RKHS}:
$$
k_\alpha(x,x') = \E \big[  (w^\top x + b)_+^\alpha  (w^\top x' + b)_+^\alpha \big],
$$
for $(Rw,b)$ distributed uniformly on the unit $\ell_2$-sphere $\Sb^{d}$, 
and $x,x' \in \rb^{d+1}$. Given the angle $\varphi \in[0,\pi]$ defined through $\displaystyle \frac{x^\top x'}{R^2} + 1 =  ( \cos \varphi ) \sqrt{\frac{\|x\|_2^2 }{R^2} + 1 }  \sqrt{\frac{\|x'\|_2^2 }{R^2} + 1 }$, we
have explicit expressions~\citep{roux2007continuous,cho2009kernel}: 
\BEAS
k_0(z,z') & = & 
\frac{1}{2\pi} 
\big(
  \pi - \varphi 
\big)
\\
k_1(z,z') & = & 
\frac{ \sqrt{\frac{\|x\|_2^2 }{R^2} + 1 }  \sqrt{\frac{\|x'\|_2^2 }{R^2} + 1 } }{2(d+1)\pi} 
\big(
( \pi - \varphi) \cos \varphi + \sin \varphi
\big)
\\
k_2(z,z') & = & 
\frac{ \Big({\frac{\|x\|_2^2 }{R^2} + 1 }\Big)  \Big({\frac{\|x'\|_2^2 }{R^2} + 1 } \Big) }{ 2 \pi [  (d+1)^2 + 2(d+1) ]} 
\big(
3 \sin \varphi
  \cos \varphi + ( \pi - \varphi) ( 1 + 2 \cos^2 \varphi)
\big).
\EEAS
There are key differences and similarities between the RKHS $\F_2$ and our space of functions $\F_1$. The RKHS is smaller than $\F_1$ (i.e., the norm in the RKHS is larger than the norm in $\F_1$); this implies that approximation properties of the RKHS are transferred to $\F_1$. In fact, our proofs rely on this fact.

However, the  RKHS norm does not lead to any adaptivity, while the function space $\F_1$ does (see more details in \mysec{adaptivity}). This may come as a paradox: both the RKHS $\F_2$ and $\F_1$ have similar properties, but one is adaptive while the other one is not. A key intuitive difference is as follows: given a function
$f$ expressed as $ f(x) = \int_\V  \varphi_v(x) {p}(v) d \tau(v)$, then $\gamma_1(f) = \int_\V  |{p}(v)| d \tau(v)$, while the squared RKHS norm is  $\gamma_2(f)^2 = \int_\V  |{p}(v)|^2 d \tau(v)$. For the $L_1$-norm, the measure ${p}(v) d \tau(v)$ may tend to a singular distribution with a bounded norm, while this is not true for the $L_2$-norm. For example, the function $(w^\top x + b)_+^\alpha$ is in $\mathcal{F}_1$, while it is not in $\F_2$ in general.

\subsection{Incremental optimization problem for $\alpha=0$}
\label{sec:FW1}
We consider the problem in \eq{incr} for the special case $\alpha=0$. For $z_1,\dots,z_n \in \rb^{d+1}$ and a vector $y \in \rb^n$, the goal is to solve (as well as the corresponding problem with $y$ replaced by $-y$):
$$
\max_{ v \in \rb^{d+1} }  \sum_{i=1}^n y_i 1_{ v^\top z_i > 0 }  
=\max_{ v \in \rb^{d+1} }   \ \ \sum_{i \in I_+ } |y_i| 1_{ v^\top z_i > 0 }
- \sum_{i \in I_- }  |y_i| 1_{ v^\top z_i > 0 }  ,
$$
where $I_+ = \{i, y_i \geqslant 0\}$ and $I_- = \{i, y_i < 0\}$. As outlined by~\citet{bengio2006convex}, this is equivalent to finding an hyperplane parameterized by $v$ that minimizes a weighted mis-classification rate (when doing linear classification). Note that the norm of $v$ has no effect.

\paragraph{NP-hardness.}
This problem is NP-hard in general. Indeed, if we assume that all $y_i$ are equal to $-1$ or $1$ and with $\sum_{i=1}^n y_i = 0$, then we have a balanced binary classification problem (we need to assume $n$ even).
The quantity $\sum_{i=1}^n y_i 1_{ v^\top z_i > 0 }  $ is then $\frac{n}{2}(1-2 e)$ where $e$ is the corresponding classification error for a problem of classifying at positive (resp.~negative) the examples in $I_+$ (resp.~$I_-$) by thresholding the linear classifier $v^\top z$. \citet{guruswami2009hardness} showed that for all $(\varepsilon,\delta)$, it is NP-hard to  distinguish between instances (i.e., configurations of points $x_i$), where a halfspace with  classification error at most $\varepsilon$ exists, and instances where all half-spaces have an error of at least $1/2-\delta$. Thus, it is NP-hard to distinguish between instances where there exists $v \in \rb^{d+1}$ such that $\sum_{i=1}^n y_i 1_{ v^\top z_i > 0 }
\geqslant \frac{n}{2}(1-2 \varepsilon)$ and instances where for all $v \in \rb^{d+1}$, 
$\sum_{i=1}^n y_i 1_{ v^\top z_i > 0 }
\leqslant n \delta$. Thus, it is NP-hard to distinguish instances where $\max_{ v \in \rb^{d+1} }  \sum_{i=1}^n y_i 1_{ v^\top z_i > 0 }   \geqslant \frac{n}{2} ( 1- 2 \varepsilon)$ and ones where it is less than $\frac{n}{2}\delta$. Since this is valid for all $\delta$ and $\varepsilon$, this rules out a constant-factor approximation.

\paragraph{Convex relaxation.}
Given linear binary classification problems, there are several algorithms to approximately find a good half-space. These are based on using convex surrogates (such as the hinge loss or the logistic loss). Although some theoretical results do exist regarding the classification performance of estimators obtained from convex surrogates~\citep{bartlett2006convexity}, they do not apply in the context of linear classification.

\subsection{Incremental optimization problem for $\alpha=1$}
\label{sec:FW2}
We consider the problem in \eq{incr} for the special case $\alpha=1$. For $z_1,\dots,z_n \in \rb^{d+1} $ and a vector $y \in \rb^n$, the goal is to solve (as well as the corresponding problem with $y$ replaced by $-y$):
$$
\max_{ \| v\|_p \leqslant 1 }  \sum_{i=1}^n y_i  (v^\top z_i)_+
=\max_{ \| v\|_p \leqslant 1 }  \ \ \sum_{i \in I_+ } (v^\top  |y_i| z_i)_+
- \sum_{i \in I_- }  (v^\top  |y_i| z_i)_+,
$$
where $I_+ = \{i, y_i \geqslant 0\}$ and $I_- = \{i, y_i < 0\}$. We have, with $t_i = |y_i| z_i \in \rb^{d+1}$,
using convex duality:

\BEAS
 \max_{ \| v\|_p \leqslant 1 } \ \sum_{i=1}^n y_i (v^\top z_i)_+   
& = & \max_{\| v\|_p  \leqslant 1}  \ \sum_{i \in I_+ } (v^\top  t_i)_+
- \sum_{i \in I_- }  (v^\top  t_i)_+\\
   & = & 
   \max_{\| v\|_p \leqslant 1}  \ \sum_{i \in I_+ }    \max_{ b_i \in [0,1]} b_i v^\top t_i
   - \sum_{i \in I_- }   \max_{ b_i \in [0,1] } b_i v^\top t_i  \\
   & = & 
    \max_{b_+ \in [0,1]^{I_+}}  
    \max_{\| v\|_p \leqslant 1} \min_{b_- \in [0,1]^{I_-}}   v^\top \big[ T_+^\top b_+ -  T_-^\top b_- \big]
\\
   & = & 
    \max_{b_+ \in [0,1]^{I_+}}  \min_{b_- \in [0,1]^{I_-}} 
    \max_{\| v\|_p \leqslant 1}  v^\top \big[ T_+^\top b_+ -  T_-^\top b_- \big]
    \mbox{ by Fenchel duality,}\\
   & = & 
   \max_{b_+ \in [0,1]^{I_+}}  \min_{b_- \in [0,1]^{I_-}} 
   \big\|  T_+^\top b_+ -  T_-^\top b_- \big\|_q,
  \EEAS
  where $T_+ \in \rb^{n_+ \times d}$ has rows $t_i$, $i \in I_+$ and 
 $T_- \in \rb^{n_- \times d}$ has rows $t_i$, $i \in I_-$,
with $v \in \arg \max_{ \| v\|_p \leqslant 1} v^\top (  T_+^\top b_+ -  T_-^\top b_- )$.
The problem thus becomes 
$$
   \max_{b_+ \in [0,1]^{n_+}}  \min_{b_- \in [0,1]^{n_-}} 
  \big\| T_+^\top b_+ -  T_-^\top b_-\big\|_q.
   $$

For the problem of maximizing $\big|\sum_{i=1}^n y_i (v^\top z_i)_+ \big|$, then this corresponds to
$$
  \max\bigg\{ \max_{b_+ \in [0,1]^{n_+}}  \min_{b_- \in [0,1]^{n_-}} 
   \big \|  T_+^\top b_+ -  T_-^\top b_-\big\|_q
   ,
   \max_{b_- \in [0,1]^{n_-}}    \min_{b_+ \in [0,1]^{n_+}} 
   \big\| T_+^\top b_+ -  T_-^\top b_-\big\|_q
   \bigg\}.
$$

This is exactly the Hausdorff distance between the two convex sets $\{  
T_+^\top b_+, \ b_+ \in  [0,1]^{n_+}\}$ and  $\{  
T_-^\top b_-, \ b_- \in  [0,1]^{n_-}\}$  (referred to as zonotopes, see below).

Given the pair $(b_+,b_-)$ achieving the Hausdorff distance, then we may compute the optimal $v$ as 
$v =\arg \max_{ \| v\|_p \leqslant 1} v^\top \big(T_+^\top b_+ - T_-^\top b_-\big)$. 
Note this has not changed the problem at all, since it is equivalent. It is still  NP-hard in general~\citep{Konig14}. But we now have a geometric interpretation with potential approximation algorithms. See below and \mysec{approximation}.

 \begin{figure}
\begin{center}
 \includegraphics[scale=.85]{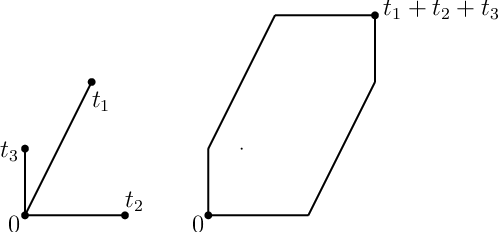}
 \includegraphics[scale=.85]{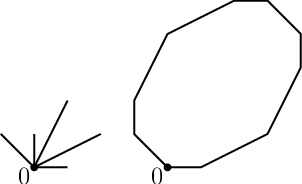}
 \end{center}
 \caption{Two zonotopes in two dimensions: (left)  vectors, and (right) their Minkowski sum (represented as a polygone).}
 \label{fig:zonotope}
 \end{figure}

 \paragraph{Zonotopes.} A  \emph{zonotope} $A$ is the Minkowski sum of a finite number of segments from the origin, that is, of the form 
 $$A =  [0 , t_1] + \cdots + [ 0, t_r ] = \Big\{ \sum_{i=1}^r  b_i t_i, \ b \in [0,1]^r \Big\},$$ for some vectors   $t_i$, $i=1,\dots,r$~\citep{bolker1969class}. See an illustration in \myfig{zonotope}.
 They appear in several areas of computer science~\citep{edelsbrunner1987algorithms,guibas2003zonotopes} and mathematics~\citep{bolker1969class,bourgain1989approximation}. In machine learning, they appear naturally as the affine projection of a hypercube; in particular,  when using a higher-dimensional distributed representation of points in~$\rb^d$ with elements in $[0,1]^r$, where $r $ is larger than $d$~\citep[see, e.g.,][]{hinton1997generative}, the underlying polytope that is modelled in $\rb^d$ happens to be a zonotope.

 In our context, the  two convex sets $\{  
T_+^\top b_+, \ b_+ \in  [0,1]^{n_+}\}$ and  $\{  
T_-^\top b_-, \ b_- \in  [0,1]^{n_-}\}$ defined above are thus zonotopes. See an illustration of the Hausdorff distance computation in \myfig{haus} (middle plot), which is the core computational problem for $\alpha=1$.

\begin{figure}
\begin{center}
 \includegraphics[scale=.85]{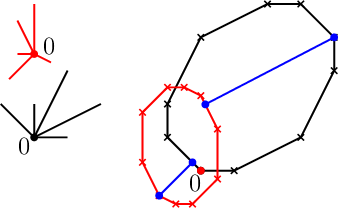}
\hspace{1.5cm}
 \includegraphics[scale=.85]{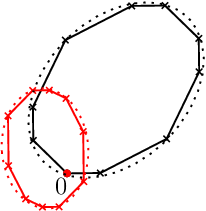}
 \end{center}
 \caption{Left: two zonotopes (with their generating segments) and the segments achieving the two sides of the Haussdorf distance. Right: approximation by ellipsoids.}
 \label{fig:haus}
\end{figure}

\paragraph{Approximation by ellipsoids.}

Centrally symmetric convex polytopes (w.l.o.g.~centered around zero) may be approximated by ellipsoids. In our set-up, we could use the minimum volume enclosing ellipsoid~\citep[see, e.g.][]{alexander2002course}, which can be computed   exactly when the polytope is given through its vertices, or up to a constant factor when the polytope is such that quadratic functions may be optimized with a constant factor approximation. For zonotopes, the standard semi-definite relaxation of~\citet{nesterov1998semidefinite} leads to such constant-factor approximations, and thus the minimum volume inscribed ellipsoid may be computed up to a constant. Given standard results~\citep[see, e.g.][]{alexander2002course}, a $(1/\sqrt{d})$-scaled version of the ellipsoid is inscribed in this polytope, and thus the ellipsoid is a provably good approximation of the zonotope with a factor scaling as $\sqrt{d}$. However, the approximation ratio is not good enough to get any relevant bound for our purpose (see \mysec{sufficient}), as for computing the Haussdorff distance, we care about potentially vanishing differences that are swamped by constant factor approximations.
 
Nevertheless, the ellipsoid approximation may prove useful in practice, in particular because the $\ell_2$-Haussdorff distance between two ellipsoids may be computed in polynomial time (see Appendix~\ref{app:ellipsoid}).

\paragraph{NP-hardness.} Given the reduction of the case $\alpha=1$ (rectified linear units) to $\alpha=0$ (exact thresholds)~\citep{livni2014computational}, the incremental problem is also NP-hard, so as obtaining a constant-factor approximation. However, this does not rule out convex relaxations with non-constant approximation ratios (see \mysec{gauge} for more details).

\subsection{Incremental optimization problem for $\alpha \geqslant 2$}
\label{sec:FW3}

We consider the problem in \eq{incr} for the remaining cases $\alpha \geqslant 2$. For $z_1,\dots,z_n \in \rb^{d+1}$ and a vector $y \in \rb^n$, the goal is to solve (as well as the corresponding problem with $y$ replaced by $-y$):
$$
\max_{ \| v\|_p \leqslant 1 }  \frac{1}{\alpha} \sum_{i=1}^n y_i  (v^\top z_i)_+^\alpha
=\max_{ \| v\|_p \leqslant 1 }  \sum_{i \in I_+ }   \frac{1}{\alpha}   (v^\top   |y_i|^{1/\alpha} z_i)_+^\alpha
- \sum_{i \in I_- }  \frac{1}{\alpha}  (v^\top |y_i|^{1/\alpha} z_i)_+^\alpha,
$$
where $I_+ = \{i, y_i \geqslant 0\}$ and $I_- = \{i, y_i < 0\}$. We have, with $t_i =  |y_i|^{1/\alpha} z_i \in \rb^{d+1}$, and $\beta \in (1,2]$ defined by $1/\beta + 1/\alpha = 1$ (we use the fact that the function $u \mapsto u^\alpha/\alpha$ and $v \mapsto v^\beta /  \beta $ are Fenchel-dual to each other):

\BEA
\nonumber \max_{ \| v\|_p \leqslant 1 } \ \frac{1}{\alpha} \sum_{i=1}^n y_i (v^\top z_i)_+  ^\alpha 
& = & \max_{\| v\|_p  \leqslant 1}  \ \sum_{i \in I_+ } \frac{1}{\alpha}  (v^\top  t_i)_+^\alpha 
- \sum_{i \in I_- }  \frac{1}{\alpha} (v^\top  t_i)_+^\alpha \\
 \nonumber  & = & 
   \max_{\| v\|_p \leqslant 1}  \ \sum_{i \in I_+ }    \max_{b_i \geqslant 0 } \Big\{ b_i v_i^\top t_i - \frac{1}{\beta} b_i^\beta \Big\}
   - \sum_{i \in I_- }   \max_{ b_i \geqslant 0 } \Big\{ b_i v^\top t_i - \frac{1}{\beta} b_i^\beta \Big\}\\
  \nonumber & = & 
    \max_{b_+ \in \rb_+^{I_+}}  \min_{b_- \in \rb_+^{I_-}} 
    \max_{\| v\|_p \leqslant 1}  v^\top \big[ T_+^\top b_+ -  T_-^\top b_- \big]
    - \frac{1}{\beta} \| b_+\|_\beta^\beta
   + \frac{1}{\beta} \| b_-\|_\beta^\beta \\
   & & \hspace{7cm}
\nonumber    \mbox{ by Fenchel duality,}\\
\label{eq:alpha}   & = & 
   \max_{b_+ \in [0,1]^{I_+}}  \min_{b_- \in [0,1]^{I_-}} 
   \big\|  T_+^\top b_+ -  T_-^\top b_- \big\|_q - \frac{1}{\beta} \| b_+\|_\beta^\beta
   + \frac{1}{\beta} \| b_-\|_\beta^\beta,
  \EEA
  where $T_+ \in \rb^{n_+ \times d}$ has rows $t_i$, $i \in I_+$ and 
 $T_- \in \rb^{n_- \times d}$ has rows $t_i$, $i \in I_-$,
with $v   \in \arg \max_{ \| v\|_p \leqslant 1} \big( T_+^\top b_+ -  T_-^\top b_-\big)^\top v$.
 Contrary to the case $\alpha=1$, we do not obtain exactly a formulation as a Hausdorff distance. However, if we consider the convex sets $K^+_\lambda = \{ T_+^\top b_+, \ b_+ \geqslant 0, \ \| b_+ \|_\beta \leqslant \lambda\}$ and $K^-_\mu = \{ T_-^\top b_-, \ b_- \geqslant 0, \ \| b_- \|_\beta \leqslant \mu\}$, then, a solution of \eq{alpha} may be obtained from Hausdorff distance computations between $K^+_\lambda$ and $K^-_\mu$, for certain $\lambda$ and $\mu$.

Note that, while for $\alpha=1$ we can use the identity $2 u_+ = u + |u|$ to replace the rectified linear unit by the absolute value and obtain the same function space, this is not possible for $\alpha=2$, as $(u_+)^2$ and $u^2$ do not differ by a linear function. This implies that the results from~\cite{livni2014computational}, which state that for the quadratic activation function, the incremental problems is equivalent to an eigendecomposition (and hence solvable in polynomial time), do not apply.

 \section{Approximation properties}
 \label{sec:approx}
 In this section, we consider the approximation properties of the set $\F_1$ of functions defined on $\rb^d$. 
 As mentioned earlier, the norm used to penalize input weights $w$ or $v$ is irrelevant for approximation properties as all norms are equivalent. Therefore, we focus on the case $q=p=2$ and $\ell_2$-norm constraints.
 
 Because we consider homogeneous activation functions, we start by studying the set $\G_1$ of functions defined on the unit $\ell_2$-sphere $\Sb^d \subset \rb^{d+1}$. We denote by $\tau_d$ the uniform probability measure on $\Sb^d$. The set $\G_1$ is defined as the set of functions on the sphere
  such that $ g(z) = \int_{\Sb^d} \sigma(v^\top z ) p(z) d\tau_d(z)$, with the norm  $\gamma_1(g)$ equal to the smallest possible value of $\int_{\Sb^d}  |p(z)| d\tau_d(z)$. We may also define the corresponding squared RKHS norm by the smallest possible value of $\int_{\Sb^d}  |p(z)|^2 d\tau_d(z)$, with the corresponding RKHS $\G_2$.
  
  In this section, we first consider approximation properties of functions in $\G_1$ by a finite number of neurons (only for $\alpha=1$). We then study approximation properties of functions on the sphere by functions in $\G_1$. It turns out that all our results are based on the approximation properties of the corresponding RKHS $\G_2$: we give sufficient conditions for being in $\G_2$, and then approximation bounds for functions which are not in $\G_2$. Finally we transfer these to the spaces $\F_1$ and $\F_2$, and consider in particular functions which only depend on projections on a low-dimensional subspace, for which the properties of $\G_1$ and $\G_2$ (and of $\F_1$ and $\F_2$) differ. This property is key to obtaining generalization bounds that show adaptivity to linear structures in the prediction functions (as done in \mysec{bounds}).
 
 Approximation properties of neural networks with finitely many neurons have been studied extensively~\citep[see, e.g.,][]{petrushev1998approximation,pinkus1999approximation,makovoz1998uniform,burger2001error}. In \mysec{related}, we relate our new results  to existing work from the literature on approximation theory, by showing that our results provide an explicit control  of the various weight vectors which are needed for bounding the estimation error in \mysec{bounds}.

 \subsection{Approximation by a finite number of basis functions}
 \label{sec:neuron}
 
A key quantity that drives the approximability by a finite number of neurons is the variation norm $\gamma_1(g)$. As shown in \mysec{condgrad}, any function $g$ such that $\gamma_1(g)$ is finite, may be approximated in $L_2(\Sb^d)$-norm with error $\varepsilon$ with $n = O( \gamma_1(g) ^2\varepsilon^{-2})$ units.
For $\alpha=1$ (rectified linear units), we may improve the dependence in $\varepsilon$, through the link with zonoids and zonotopes, as we now present.

If we decompose the signed measure $\mu$ as $\mu = \mu_+ - \mu_-$ where $\mu_+$ and $\mu_-$ are positive measures, then, for $g\in \G_1$, we have
 $g(z)  = \int_{\Sb^d} (v^\top z)_+ d\mu_+(v) - \int_{\Sb^d} (v^\top z )_+ d\mu_-(v) =  {g}_+(z) -  {g}_-(z)$, which is a decomposition of $g$ as a difference of positively homogenous convex functions. 
 
 Positively homogenous convex functions $h$ may be written as the \emph{support function} of a compact convex set $K$~\citep{rockafellar97}, that is, $h(z) = \max_{ y \in K} y^\top z$, and the set $K$ characterizes the function $h$. The functions $g_+$ and $g_-$ defined above are not \emph{any} convex positively homogeneous functions, as we now describe.
 
 If the measure $\mu_+$ is supported by finitely many points, that is, $\mu_+(v) = \sum_{i=1}^r \eta_i \delta(v-v_i)$ with $\eta \geqslant 0$, then $g_+(z) = \sum_{i=1}^t \eta_i ( v_i^\top z)_+
 =  \sum_{i=1}^t  ( \eta_i v_i^\top z)_+ = \sum_{i=1}^t  ( t_i^\top z)_+$ for $t_i = \eta_i v_i$. Thus the corresponding set $K_+$ is the \emph{zonotope}
 $[0 , t_1] + \cdots + [ 0, t_r ] = \big\{ \sum_{i=1}^r  b_i t_i, \ b \in [0,1]^r \big\}$ already defined in~\mysec{FW2}. Thus the functions $g_+ \in \G_1$ and $g_- \in \G_1$ for finitely supported measures $\mu$ are support functions of zonotopes.
 
 When the measure $\mu$ is not constrained to have finite support, then the sets $K_+$ and $K_-$ are limits of zonotopes, and thus, by definition, \emph{zonoids}~\citep{bolker1969class}, and thus functions in $\G_1$ are differences of support functions of zonoids.
 Zonoids are a well-studied set of convex bodies. They are centrally symmetric, and in two dimensions, all centrally symmetric compact convexs sets are (up to translation) zonoids, which is not true in higher dimensions~\citep{bolker1969class}. Moreover, the problem of approximating a zonoid by a zonotope with a small number of segments~\citep{bourgain1989approximation,matouvsek1996improved} is essentially equivalent to the approximation of a function $g$ by finitely many neurons. The number of neurons directly depends on the norm $\gamma_1$, as we now show.

 \begin{proposition}[Number of units - $\alpha=1$]
 \label{prop:neurons}
Let  $\varepsilon \in (0,1/2)$. For any  function $g $ in $\G_1$, there exists a measure ${\mu}$ supported on at most $r$ points in $\V$, so that for all $z \in \Sb^{d}$.
$\big| g(z)  - \int_{\Sb^d} (v^\top z)_+ d{\mu}(v) \big| \leqslant \varepsilon \gamma_1(g) $, with 
$ r \leqslant  C(d)   \varepsilon  ^{-2d/ ( d +3 )} $, for some  constant $C(d)$ that depends only on $d$. \end{proposition}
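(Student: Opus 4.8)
The plan is to exploit the geometric reformulation established immediately above the statement: every $g \in \G_1$ is a difference $g = g_+ - g_-$ of support functions of zonoids, where $g_\pm(z) = \int_{\Sb^d} (v^\top z)_+ \, d\mu_\pm(v)$ and $\mu = \mu_+ - \mu_-$ is the Jordan decomposition of a (near-)optimal representing measure, so that $\gamma_1(g) = |\mu_+|(\Sb^d) + |\mu_-|(\Sb^d)$. Uniform approximation of $g$ on $\Sb^d$ by a finitely supported measure is \emph{exactly} approximation of the support functions $g_\pm$ in the supremum norm, i.e.\ approximation of the underlying zonoids by zonotopes in Hausdorff distance; it therefore suffices to approximate $g_+$ and $g_-$ separately and take the union of their supports. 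Normalizing by the masses $|\mu_\pm|(\Sb^d)$ reduces the task to a zonoid generated by a \emph{probability} measure with absolute error budget $\varepsilon$, and the final error scales back by $|\mu_+|(\Sb^d)+|\mu_-|(\Sb^d)=\gamma_1(g)$, producing the factor $\gamma(g)$ in the statement.

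The core is then the classical quantitative theory of approximating zonoids by zonotopes with few summands. Using the splitting $(v^\top z)_+ = \frac{1}{2}\big(|v^\top z| + v^\top z\big)$, the even part $\int_{\Sb^d}|v^\top z|\,d\mu_+(v)$ is the support function of a centrally symmetric zonoid, while the odd part integrates to a single linear function $\bar v^\top z$ costing at most two one-sided units. I would then invoke the upper bound of~\citet{bourgain1989approximation}, sharpened by~\citet{matouvsek1996improved}: a symmetric zonoid in $\rb^{n}$ can be approximated, in relative Hausdorff distance $\varepsilon$, by a zonotope with $N \leqslant C(n)\,\varepsilon^{-2(n-1)/(n+2)}$ segments. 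Rewriting each segment through $|s^\top z| = (s^\top z)_+ + (-s^\top z)_+$ produces at most $2N$ one-sided units, and taking $n = d+1$ (the ambient dimension of $\Sb^d$) turns the exponent into $-2d/(d+3)$, precisely the claimed rate, with a constant depending only on $d$.

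Assembling the pieces, approximating $g_+$ and $g_-$ each with $C(d)\,\varepsilon^{-2d/(d+3)}$ units and collecting the two discrete measures (plus the $O(1)$ units from the linear corrections) yields $\hat\mu$ supported on $r \leqslant C'(d)\,\varepsilon^{-2d/(d+3)}$ points with uniform error $\varepsilon\big(|\mu_+|(\Sb^d)+|\mu_-|(\Sb^d)\big) = \varepsilon\,\gamma_1(g)$ over $z \in \Sb^d$, as required. The genuinely hard ingredient is the zonoid bound itself, which I would not reprove. The elementary route---drawing $r$ i.i.d.\ points from $\mu_\pm/|\mu_\pm|(\Sb^d)$ and controlling the resulting empirical process on the sphere---only reaches the $\varepsilon^{-2}$ rate, since the sub-Gaussian increments give $\E\,\sup_{z}\big|\hat g_\pm(z) - g_\pm(z)\big| = O(\sqrt{d/r})$ and a Dudley entropy integral cannot do better. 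The improvement to $-2d/(d+3)$ requires the two-scale variance-reduction construction behind the results above, exploiting the smoothness of the spherical convolution $z \mapsto \int_{\Sb^d}(v^\top z)_+\,d\mu_\pm(v)$; this is the main obstacle, and the reason the statement is naturally phrased as an application of that literature.
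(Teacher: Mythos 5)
Your proposal is correct and follows essentially the same route as the paper's own proof: the same splitting $(u)_+ = \tfrac{1}{2}\big(u + |u|\big)$ reducing the problem to symmetric zonoid approximation, the same invocation of the Bourgain--Lindenstrauss--Milman/Matou\v{s}ek bound (applied separately to the normalized positive and negative parts of an optimal representing measure, giving the factor $\gamma_1(g)$), and the same handling of the linear remainder by two extra units. The only cosmetic difference is that you phrase the reduction as a Hausdorff-distance statement about zonoids before invoking the theorem, whereas the paper quotes Matou\v{s}ek's result directly as a uniform bound on $\big|\int |v^\top z|\,d\mu - \tfrac{1}{r}\sum_i |v_i^\top z|\big|$ for probability measures.
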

\begin{proof}
Without loss of generality, we assume $\gamma(g)=1$. 
It is shown by~\citet{matouvsek1996improved} that for any probability measure $\mu$ (positive and with finite mass) on the sphere $\Sb^d$,   there exists a set of $r$ points $v_1,\dots,v_r$, so that
for all $z \in \Sb^d$,
\BEQ
\label{eq:matousek}
\bigg| \int_{\Sb^d} |v^\top z| d \mu (v) - \frac{1}{r} \sum_{i=1}^r |v_i^\top z| \bigg| \leqslant \varepsilon,
\EEQ
with $r \leqslant C(d) \varepsilon^{-2 + 6 / ( d + 3 )} = C(d)  \varepsilon  ^{-2d/ ( d +3 )}$, for some  constant $C(d)$ that depends only on $d$. We may then simply write 
$$\!g(z)\! =\! \int_{\Sb^d} \! (v^\top z)_+ d \mu(v) \!=\! \frac{1}{2}\int_{\Sb^d}\! (v^\top z) d \mu(v)
+  \frac{\mu_+( {\Sb^d}) }{2 }\! \int_{\Sb^d} \!|v^\top z| \frac{d \mu_+(v)}{\mu_+( {\Sb^d})} -    \frac{\mu_-( {\Sb^d})}{2}\!\int_{\Sb^d} \! |v^\top z|\frac{ d \mu_-(v) }{\mu_-( {\Sb^d})},$$ 
and approximate the last two terms with error $\varepsilon \mu_\pm(\Sb^d) $ with $r$ terms, leading to an approximation of $\varepsilon \mu_+(\Sb^d) +  \varepsilon\mu_-(\Sb^d)  = \varepsilon \gamma_1(g) = \varepsilon$, with a remainder that is a linear function $q^\top z$ of $z$, with $\| q\|_2 \leqslant 1$. We may then simply add two extra units with vectors $q/\|q\|_2$ and weights $-\|q\|_2$ and $\|q\|_2$. We thus obtain, with $2r+2$ units,  the desired approximation result.

Note that \citet[Theorem 6.5]{bourgain1989approximation} showed that  the scaling in $\varepsilon$  in \eq{matousek} is not improvable, if the measure is allowed to have non equal weights on all points and the proof relies on the non-approximability of the Euclidean ball by centered zonotopes. This results does not apply here, because we may have different weights $\mu_-( {\Sb^d})$ and $\mu_+( {\Sb^d})$.
\end{proof}

Note that the proposition above is slightly improved in terms of the scaling of the number of neurons with respect to the approximation error $\varepsilon$ (improved exponent), compared to conditional gradient bounds~\citep{barron1993universal,kurkova}.
Indeed, the simple use of conditional gradient leads to $r \leqslant \varepsilon^{-2} \gamma_1(g)^2$, with a better constant (independent of $d$) but a worse scaling in $\varepsilon$---also with a result in $L_2(\Sb^d)$-norm and not uniformly on the ball $\{ \| x\|_q \leqslant R\}$. Note also that the conditional gradient algorithm gives a constructive way of building the measure. 
Moreover,  the proposition above  is related to the result from~\citet[Theorem 2]{makovoz1998uniform}, which applies for $\alpha=0$ but with a number of neurons growing as $\varepsilon  ^{-2d/ ( d +1 )}$, or to the one of~\citet[Example 3.1]{burger2001error}, which applies to a piecewise affine sigmoidal function but with a number of neurons growing as $\varepsilon ^{-2(d+1)/ ( d +3 )}$ (both slightly worse than ours).  

Finally, the number of neurons needed to express a function with a bound on the $\gamma_2$-norm can be estimated from general results on approximating reproducing kernel Hilbert space described in \mysec{rkhs}, whose kernel can be expressed as an expectation. Indeed, \citet{kernelexp} shows that with $k$ neurons, one can approximate a function in $\F_2$ with unit $\gamma_2$-norm with an error measured in $L_2$ of $\varepsilon = k^{-(d+3)/(2d)}$. When inverting the relationship between $k$ and $\varepsilon$, we get a number of neurons scaling as $  \varepsilon  ^{-2d/ ( d +3 )}$, which is the same as in Prop.~\ref{prop:neurons} but with an error in $L_2$-norm instead of $L^\infty$-norm.

 \subsection{Sufficient conditions for finite variation}
  In this section and the next one, we study more precisely the RKHS $\G_2$ (and thus obtain similar results for $\G_1 \supset \G_2$). The kernel $k(x,y) = \int_{\Sb^d} ( v^\top x)_+ ( v^\top y)_+ d \tau_d(v) $ defined on the sphere $\Sb^d$ belongs to the family of dot-product kernels~\citep{smola2001regularization} that only depends on the dot-product $x^\top y$, although in our situation, the function is not particularly simple (see formulas in \mysec{kersigma}). The analysis of these kernels is similar to one of translation-invariant kernels; for $d=1$, i.e., on the $2$-dimensional sphere, it is done through Fourier series; while for $d>1$, \emph{spherical harmonics} have to be used as the expansion of functions in series of spherical harmonics make the computation of the RKHS norm explicit (see a review of spherical harmonics in Appendix~\ref{app:review} with several references therein). Since the calculus is tedious, all proofs are put in appendices, and we only present here the main results. In this section, we provide simple sufficient conditions for belonging to $\G_2$ (and hence $\G_1$) based on the existence and boundedness of derivatives, while in the next section, we show how any Lipschitz-function may be approximated by functions in $\G_2$ (and hence $\G_1$) with precise control of the norm of the approximating functions.

 The derivatives of functions defined on $\Sb^d$ may be defined in several ways, using the manifold structure~\cite[see, e.g.,][]{absil2009optimization} or through polar coordinates~\cite[see, e.g.,][Chapter 3]{atkinson2012spherical}. For $d=1$, the one-dimensional sphere $\Sb^1 \subset\rb^2$  may be parameterized by a single angle and thus the notion of derivatives and the proof of the following result is simpler  and based on Fourier series (see Appendix~\ref{app:proof11}). For the general proof based on spherical harmonics, see Appendix~\ref{app:harmonics}.

 \begin{proposition}[Finite variation on the sphere]
\label{prop:finites-sphere}
Assume that $g: \Sb^d \to \rb$ is such that all $i$-th order derivatives exist and are upper-bounded in absolute value
 by $ \eta  $ for $i \in \{0,\dots,s\}$,
where $s$ is an integer such that $s \geqslant (d - 1)/2 + \alpha + 1$. Assume $g$ is even if $\alpha$ is odd (and vice-versa); 
 then $g \in \G_2$ and $\gamma_2(g) \leqslant C(d,\alpha) \eta$, for  a constant $C(d,\alpha)$ that depends only on $d$ and $\alpha$.
\end{proposition}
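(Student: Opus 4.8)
The plan is to diagonalize the RKHS $\G_2$ in the basis of spherical harmonics and to read off $\gamma_2(g)$ from the competition between the decay of the kernel's eigenvalues and the decay of the spherical-harmonic coefficients of $g$. Since the kernel $k(x,y)=\int_{\Sb^d}(v^\top x)_+^\alpha(v^\top y)_+^\alpha\,d\tau_d(v)$ depends on $x$ and $y$ only through $x^\top y$, it is a zonal (dot-product) kernel and is therefore diagonal in that basis. Writing $\{Y_{k,j}\}_{j=1}^{N(d,k)}$ for an orthonormal basis of the degree-$k$ spherical harmonics on $\Sb^d$, the Funk--Hecke formula gives $\int_{\Sb^d}\sigma(v^\top z)Y_{k,j}(v)\,d\tau_d(v)=\mu_k\,Y_{k,j}(z)$ for a scalar $\mu_k$ equal to a weighted integral of $(t)_+^\alpha$ against the degree-$k$ Gegenbauer polynomial. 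Substituting $\sigma(v^\top z)=\sum_k\mu_k\sum_j Y_{k,j}(z)Y_{k,j}(v)$ into the definition of $k$ and using orthonormality yields $k(x,y)=\sum_k\mu_k^2\sum_j Y_{k,j}(x)Y_{k,j}(y)$, so the kernel eigenvalue on degree-$k$ harmonics is $\mu_k^2$. Consequently, for $g=\sum_{k,j}\hat{g}_{k,j}Y_{k,j}$ one obtains the explicit formula
$$
\gamma_2(g)^2=\sum_k \mu_k^{-2}\sum_j \hat{g}_{k,j}^2,
$$
and $g\in\G_2$ is precisely the statement that this sum is finite; proving it is $\leqslant C(d,\alpha)\,\eta^2$ gives both membership and the norm bound at once.

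The second step, and the one I expect to be the main obstacle, is to control the eigenvalues $\mu_k$. Two facts are needed. First, \emph{parity}: because the degree-$k$ Gegenbauer polynomial has parity $(-1)^k$, and $(t)_+^\alpha$ splits as an even part plus an odd part, one of which is a polynomial of degree $\alpha$ (affecting only finitely many harmonics of degree $\leqslant\alpha$) and the other of which is a nonsmooth term of the opposite parity contributing an infinite tail, the coefficients $\mu_k$ are (eventually) nonzero only for $k$ in a single parity class. This is exactly why $g$ is assumed even when $\alpha$ is odd (and odd when $\alpha$ is even): it forces the support of the $\hat{g}_{k,j}$ to lie in the parity class where $\mu_k\neq0$. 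Second, \emph{decay}: an asymptotic analysis of the Gegenbauer integral defining $\mu_k$ must yield a lower bound $|\mu_k|\geqslant c(d,\alpha)\,k^{-(d+2\alpha+1)/2}$ for all large $k$ in the admissible parity class. Carrying out this Funk--Hecke integral for $(t)_+^\alpha$ and extracting its precise polynomial rate is the technical heart of the argument and is where the tedious calculus referred to in the text lives.

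The third step converts smoothness into coefficient decay. The Laplace--Beltrami operator $\Delta$ on $\Sb^d$ acts on degree-$k$ harmonics with eigenvalue $-k(k+d-1)$, so the hypothesis that all derivatives of $g$ up to order $s$ are bounded by $\eta$ gives the Sobolev-type estimate
$$
\sum_k [k(k+d-1)]^{s}\sum_j \hat{g}_{k,j}^2 \leqslant C(d)\,\eta^2.
$$
Combining the eigenvalue lower bound with the norm formula then gives $\gamma_2(g)^2\leqslant c(d,\alpha)^{-2}\sum_k k^{\,d+2\alpha+1}\sum_j \hat{g}_{k,j}^2$. Since $s\geqslant (d-1)/2+\alpha+1$ is exactly $2s\geqslant d+2\alpha+1$, the weight $k^{d+2\alpha+1}$ is dominated by $[k(k+d-1)]^{s}$ for all but finitely many $k$; the finitely many remaining low-degree terms are controlled using that there $\mu_k$ is a fixed nonzero constant and $\sum_j\hat{g}_{k,j}^2\leqslant\|g\|_{L_2(\Sb^d)}^2\leqslant\eta^2$. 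The right-hand side is therefore at most $C(d,\alpha)\,\eta^2$, which proves $g\in\G_2$ with $\gamma_2(g)\leqslant C(d,\alpha)\,\eta$.

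For the special case $d=1$ the whole argument runs with ordinary Fourier series on the circle in place of spherical harmonics: the Funk--Hecke step reduces to computing the Fourier coefficients of $\theta\mapsto(\cos\theta)_+^\alpha$, the parity and decay statements become elementary facts about those coefficients, and the Laplace--Beltrami weight $k(k+d-1)$ is replaced by $k^2$, so the threshold $s\geqslant\alpha+1$ is recovered. This is the route I would take to write the $d=1$ proof separately before treating general $d$ by spherical harmonics.
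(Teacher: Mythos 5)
Your proposal follows essentially the same route as the paper's proof: diagonalization of the kernel in spherical harmonics via the Funk--Hecke formula (Fourier series when $d=1$), the parity assumption matching the vanishing pattern of the eigenvalues, an eigenvalue lower bound of order $k^{-(d-1)/2-\alpha-1}$ (equivalently $k^{-(d+2\alpha+1)/2}$), and the Laplace--Beltrami/Sobolev argument converting bounded derivatives up to order $s$ into coefficient decay, closed by the inequality $2s \geqslant d+2\alpha+1$. The single step you defer---the asymptotics of the Gegenbauer integral defining $\mu_k$---is precisely what the paper carries out (via Rodrigues' formula and repeated integration by parts), and the rate you posit is the one it obtains.
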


We can make the following observations:

\vspace*{-.125cm}

\BIT
\item[--] \emph{Tightness of conditions}: as shown in Appendix~\ref{app:tightness}, there are functions $g $, which have bounded first $s$ derivatives and do not belong  to $\G_2$ while $s \leqslant \frac{d}{2} + \alpha$ (at least when $s-\alpha$ is even). Therefore,  when $s-\alpha$ is even,  the  scaling in $ {(d-1)}/{2}+ \alpha$ is optimal.

\item[--] \emph{Dependence on $\alpha$}: for any $d$, the higher the $\alpha$, the stricter the sufficient condition. Given that the estimation error grows slowly with $\alpha$ (see \mysec{rademacher}), low values of $\alpha$ would be preferred in practice.

\item[--] \emph{Dependence on $d$}: a key feature of the sufficient condition is the dependence on $d$, that is, as $d$ increases the number of derivatives has to increase  in $d/2$---like for Sobolev spaces in dimension $d$~\citep{adams2003sobolev}.
This is another instantiation of the curse of dimensionality: only very smooth functions in high dimensions are allowed.

\item[--] \emph{Special case $d=1$, $\alpha=0$}: differentiable functions on the sphere in $\rb^2$, with bounded derivatives, belong to $\G_2$, and thus all Lipschitz-continuous functions, because Lipschitz-continuous functions are almost everywhere differentiable with bounded derivative \citep{adams2003sobolev}.

\EIT

  \subsection{Approximation of Lipschitz-continuous functions}
In order to derive generalization bounds for target functions which are not sufficiently differentiable (and may not be in $\G_2$ or $\G_1$), we need to approximate any Lipschitz-continuous function, with a function $g \in \G_2$ with a norm $\gamma_2(g)$ that will grow as the approximation gets tighter. We give precise rates in the proposition below. Note the requirement for parity of the function $g$. The result below notably shows the density of $\G_1$ in uniform norm in the space of Lipschitz-continuous functions  of the given parity, which is already known since our activation functions are not polynomials~\citep{leshno1993multilayer}.

 \begin{proposition}[Approximation of Lipschitz-continuous functions on the sphere]
\label{prop:approx-sphere}
For $\delta $ greater than a constant depending only on $d$ and $\alpha$, for any function $g: \Sb^d \to \rb$  such that for all $x,y \in \Sb^d$,  $g(x) \leqslant \eta$ and $|g(x) - g(y) | \leqslant  \eta \| x - y\|_2$, and 
$g$ is even if $\alpha$ is odd (and vice-versa),
   there exists $h \in \G_2$, such that   $\gamma_2(h) \leqslant \delta$ and 
 $$\sup_{
x \in \Sb^d } | h(x) - g(x) | \leqslant C(d,\alpha) \eta  
\Big(\frac{\delta}{\eta}\Big)^{-1/(\alpha+(d-1)/2)} \log \Big(\frac{\delta}{\eta}\Big) 
.$$
\end{proposition}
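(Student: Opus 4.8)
The plan is to approximate $g$ by a mollified (smoothed) version at an angular scale $t$, to control separately the uniform approximation error and the variation norm $\gamma_2$ of the smoothed function as functions of $t$, and then to optimize over $t$. First I would reduce to the normalized case $\eta = 1$ by homogeneity: since the statement is invariant under $g \mapsto g/\eta$, $\delta \mapsto \delta/\eta$, $h \mapsto h/\eta$, it suffices to treat a function $g$ that is bounded by $1$ and $1$-Lipschitz on $\Sb^d$, and to produce $h \in \G_2$ with $\gamma_2(h) \leqslant \delta$ and $\sup_{x}|h(x)-g(x)| \leqslant C(d,\alpha)(\delta)^{-1/(\alpha+(d-1)/2)}\log \delta$.

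Next I would construct $g_t$ by convolving $g$ on the sphere with a fixed smooth approximate-identity kernel rescaled to angular width $t$, chosen even so that the parity of $g$ required in Proposition~\ref{prop:finites-sphere} is preserved. Two estimates drive the argument. (i) Approximation: because $g$ is $1$-Lipschitz and the kernel concentrates at scale $t$, one gets $\sup_{z \in \Sb^d}|g_t(z) - g(z)| \leqslant C(d)\,t$. (ii) Smoothness: for each integer $i$, the $i$-th order derivatives of $g_t$ are bounded by $C(d,i)\,t^{-(i-1)}$, obtained by placing one derivative on $g$ (which costs $1$ by the Lipschitz bound) and the remaining $i-1$ derivatives on the kernel (each costing a factor $t^{-1}$). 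Taking $s$ to be the smallest integer with $s \geqslant (d-1)/2 + \alpha + 1$ and using the common (largest) bound $\eta_{\mathrm{eff}} = C(d,s)\,t^{-(s-1)}$ for all derivatives up to order $s$, Proposition~\ref{prop:finites-sphere} yields $g_t \in \G_2$ with $\gamma_2(g_t) \leqslant C(d,\alpha)\,t^{-(s-1)}$. Imposing $\gamma_2(g_t) = \delta$ fixes $t = (C(d,\alpha)/\delta)^{1/(s-1)}$ (small when $\delta$ exceeds a constant, which is exactly the hypothesis), and substituting into (i) gives an error of order $\delta^{-1/(s-1)}$.

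The remaining point — and the delicate one — is that this clean mollification argument produces the exponent $1/(s-1)$ with $s-1 = \lceil (d-1)/2 + \alpha\rceil$, whereas the statement asks for the \emph{critical} exponent $1/((d-1)/2+\alpha)$ together with a logarithmic factor. To reach exactly the critical smoothness I would not invoke the integer-derivative criterion directly but instead work in the spherical-harmonics expansion $g = \sum_{k\geqslant 0} g_k$, damp the high-degree components at degree $K \sim 1/t$, and estimate $\gamma_2(h)^2$ through the known decay of the kernel eigenvalues $\lambda_k$ (computed in the appendices) against the Jackson-type decay of $\|g_k\|$ for Lipschitz $g$. The $\log(\delta/\eta)$ then appears as the endpoint/Lebesgue-constant cost of passing from the $L_2$-controlled truncated expansion back to the uniform norm precisely at the critical exponent, where the series controlling $\sup_x|h(x)-g(x)|$ is summed at its borderline rate. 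I expect this $L_2 \to L_\infty$ conversion at the critical smoothness, rather than the mollification estimates themselves, to be the main obstacle; the spherical geometry (making the derivative bounds in (ii) and the harmonic estimates more involved than their Euclidean analogues) is the secondary technical burden, which is why these computations are deferred to the appendices.
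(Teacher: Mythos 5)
Your mollification argument (plan A) is fine as far as it goes, and you correctly diagnose its limitation: it yields the exponent $-1/\lceil (d-1)/2+\alpha\rceil$, which matches the proposition only when $(d-1)/2$ is an integer (odd $d$), and is strictly worse for even $d$. The paper's actual proof (Appendix D.4) is of your ``plan B'' type, so everything hinges on that sketch --- and there the key step is resolved by a mechanism that does not work. The paper damps the harmonic expansion with Poisson weights, $\hat g_k = r^k g_k$, and bounds $\gamma_2(\hat g) \leqslant C(d,\alpha)(1-r)^{-(d-1)/2-\alpha}$ exactly as you propose, by playing the eigenvalue decay $\lambda_k^{-1} \sim C(d)k^{(d-1)/2+\alpha+1}$ against the Lipschitz bound $\sum_k k^2 \|g_k\|_{L_2(\Sb^d)}^2 \leqslant C$. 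But it does \emph{not} obtain the uniform error by ``passing from the $L_2$-controlled truncated expansion back to the uniform norm.'' That conversion is precisely what fails at critical smoothness: sharp truncation at degree $K$ costs the Lebesgue constant of spherical-harmonic projection, which grows like $K^{(d-1)/2}$ for $d\geqslant 2$, not like $\log K$; and the Cauchy--Schwarz route, using $\|g_k\|_\infty \leqslant C k^{(d-1)/2}\|g_k\|_{L_2(\Sb^d)}$ together with $\|g_k\|_{L_2(\Sb^d)} \leqslant a_k/k$, $\sum_k a_k^2 \leqslant C$, leaves a tail $\sum_{k>K} k^{(d-3)/2}a_k$ that diverges for $d \geqslant 2$ (and this persists with smooth damping weights). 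No logarithmic ``endpoint cost'' rescues either version.

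What the paper does instead --- and what is missing from your sketch --- is a \emph{pointwise} estimate exploiting that Poisson damping is convolution with an explicit nonnegative kernel: $\hat g(x) = \int_{\Sb^d} g(y)\,\frac{1-r^2}{(1+r^2-2r\,x^\top y)^{(d+1)/2}}\, d\tau_d(y)$. Hence $|\hat g(x)-g(x)| \leqslant \int_{\Sb^d} |g(x)-g(y)|\, P_r(x^\top y)\, d\tau_d(y) \leqslant \sqrt{2}\,\eta \int_{\Sb^d} \sqrt{1-x^\top y}\; P_r(x^\top y)\, d\tau_d(y)$, and this last integral is bounded by $C(d)\,(1-r)\log\frac{1}{1-r}$ (the estimate the paper imports from Bourgain and Lindenstrauss). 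So the logarithm is not a Lebesgue-constant or $L_2\to L_\infty$ phenomenon; it is the heavy tail of the Poisson kernel integrated against the Lipschitz modulus $\|x-y\|_2$. With this pointwise bound, choosing $1-r = (C/\delta)^{1/(\alpha+(d-1)/2)}$ gives both $\gamma_2(\hat g)\leqslant \delta$ and the claimed error. To repair your plan B you would either reproduce this positive-kernel argument, or replace truncation by a smooth spectral filter whose zonal kernel is well enough localized that $\int \|x-y\|_2\,|K(x^\top y)|\,d\tau_d(y) = O(1/K)$ --- but as written, the step you yourself flag as the main obstacle is exactly the gap.
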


 This proposition is shown in Appendix~\ref{app:appsphere2} for $d=1$ (using Fourier series) and in Appendix~\ref{app:proofapp} for all $d \geqslant 1$ (using spherical harmonics). We can make the following observations:
 \BIT
 \item[--] \emph{Dependence in $\delta$ and $\eta$}: as expected, the main term in the error bound $\big( {\delta}/{\eta}\big)^{-1/(\alpha+(d-1)/2)}$ is a decreasing function of $\delta / \eta$, that is when the norm $\gamma_2(h)$ is allowed to grow, the approximation gets tighter, and when the Lipschitz constant of $g$ increases, the approximation is less tight.
 
 \item[--] \emph{Dependence on $d$ and $\alpha$}: the rate of approximation is increasing in $d$ and $\alpha$. In particular the approximation properties are better for low $\alpha$.

 \item[--] \emph{Special case $d=1$ and $\alpha=0$}: up to the logarithmic term we recover the result of Prop.~\ref{prop:finites-sphere}, that is, the function $g$ is in $\G_2$.
 
 \item[--] \emph{Tightness}:  in   Appendix~\ref{app:tightness}, we provide a function which is not in the RKHS and for which the tightest possible approximation scales as
 $\delta^{-2 /( d/2 + \alpha - 2)}$. Thus the linear scaling of the rate as $d/2 + \alpha$ is not improvable (but constants are).
 \EIT

  \subsection{Linear functions}

 In this section, we consider a linear function on $\Sb^d$, that is $g(x) = v^\top x$ for a certain $v \in \Sb^d$, and compute its norm (or upper-bound thereof) both for $\G_1$ and $\G_2$, which is independent of $v$ and finite. In the following propositions, the notation $\approx$ means asymptotic equivalents when $d \to \infty$.
  \begin{proposition}[Norms of linear functions on the sphere]
\label{prop:linear-sphere}
Assume that $g: \Sb^d \to \rb$ is such $g(x) = v^\top x$ for a certain $v \in \Sb^d$. 
If $\alpha=0$, then $\gamma_1(g) \leqslant \gamma_2(g) = \frac{ 2 d \pi}{d-1} \approx 2 \pi$.
If $\alpha=1$, then $\gamma_1(g) \leqslant 2$, and for all $\alpha \geqslant 1$,
  $\gamma_1(g) \leqslant \gamma_2(g) = 
\frac{d}{d-1}\frac{4\pi}{\alpha}   \frac{\Gamma(\alpha/2+d/2+1)}{\Gamma(\alpha/2) \Gamma(d/2+1)} \approx C d^{\alpha/2}$.
\end{proposition}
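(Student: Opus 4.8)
The plan is to treat the two norms by different means: the ordering $\gamma_1(g)\leqslant\gamma_2(g)$ is immediate, the exact value of $\gamma_2(g)$ comes from an explicit degree-one density combined with the Funk--Hecke formula, and the sharper estimate $\gamma_1(g)\leqslant 2$ for $\alpha=1$ follows from an elementary pointwise identity. The inequality $\gamma_1(g)\leqslant\gamma_2(g)$ holds for \emph{every} $g\in\G_2$ by the Jensen argument relating the variation norm to the RKHS norm (the relationship between $\F_1$ and $\F_2$ discussed in \mysec{rkhs}, which transfers verbatim to $\G_1\supset\G_2$). Thus it remains only to compute $\gamma_2(g)$ and, separately for $\alpha=1$, to exhibit a cheap representation of $g$ in $\G_1$.

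For $\alpha=1$ I would use that $(u)_+-(-u)_+=u$ for every real $u$, so that pointwise on $\Sb^d$ one has $v^\top z=(v^\top z)_+-(-v^\top z)_+=\int_{\Sb^d}(w^\top z)_+\,d\mu(w)$ with $\mu=\delta_v-\delta_{-v}$. This signed measure has total variation $|\mu|(\Sb^d)=2$, and since the variation norm computed over signed measures agrees with the one computed over densities with respect to $\tau_d$ (\mysec{variation}), this gives $\gamma_1(g)\leqslant 2$, independently of $d$. The same trick fails for $\alpha\neq 1$ because $(v^\top z)_+^\alpha-(-v^\top z)_+^\alpha$ is no longer linear in $z$; this is precisely why for $\alpha=0$ and $\alpha\geqslant 2$ one must fall back on $\gamma_1(g)\leqslant\gamma_2(g)$.

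To compute $\gamma_2(g)$ I would exploit that $g(z)=v^\top z$ is a first-order spherical harmonic and that the map $\Phi:p\mapsto\int_{\Sb^d}(w^\top\cdot)_+^\alpha\,p(w)\,d\tau_d(w)$ acts diagonally on spherical-harmonic blocks. By the Funk--Hecke formula (see the review in Appendix~\ref{app:review}) there is a scalar $\mu_1$ with $\int_{\Sb^d}(w^\top z)_+^\alpha\,(w^\top v)\,d\tau_d(w)=\mu_1\,(v^\top z)$, so the density $p(w)=(w^\top v)/\mu_1$ reproduces $g$. Since $\Phi$ multiplies the degree-$k$ part of any density by the corresponding eigenvalue and $g$ is purely degree one, the minimal-$L_2$-norm interpolating density retains only its degree-one component, so this $p$ is optimal and
$$\gamma_2(g)^2=\int_{\Sb^d}|p(w)|^2\,d\tau_d(w)=\frac{1}{\mu_1^2}\int_{\Sb^d}(w^\top v)^2\,d\tau_d(w)=\frac{1}{(d+1)\,\mu_1^2},$$
using $\int_{\Sb^d}(w^\top v)^2\,d\tau_d(w)=1/(d+1)$.

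It then remains to evaluate $\mu_1$, which Funk--Hecke expresses through the first Gegenbauer coefficient of $(\cdot)_+^\alpha$, namely $\mu_1=\tfrac{\omega_{d-1}}{\omega_d}\int_{0}^{1}t^{\alpha+1}(1-t^2)^{(d-2)/2}\,dt=\tfrac{\omega_{d-1}}{\omega_d}\cdot\tfrac12\,B\!\left(\tfrac{\alpha}{2}+1,\tfrac{d}{2}\right)$. Converting the surface-area ratio $\omega_{d-1}/\omega_d$ and the Beta function into Gamma functions and simplifying should yield the stated closed form, whose Stirling asymptotics give the claimed $\approx C\,d^{\alpha/2}$ behaviour. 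I expect the genuine work to lie entirely here: carefully tracking every normalization constant (the probability normalization of $\tau_d$, the Gegenbauer value $C_1^{(d-1)/2}(1)$, and the ratio $\omega_{d-1}/\omega_d$) so that the Gamma-function bookkeeping reproduces the exact prefactor, including the $\tfrac{d}{d-1}$ factor. The case $\alpha=0$ must be recorded separately: the Beta integral itself is perfectly regular there (giving $\int_0^1 t(1-t^2)^{(d-2)/2}\,dt=1/d$), but the factor $\Gamma(\alpha/2)^{-1}$ in the general expression vanishes as $\alpha\to0$, so the value $\tfrac{2d\pi}{d-1}$ is obtained either by this direct evaluation or as the limit of the general formula.
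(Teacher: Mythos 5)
Your proposal is correct and takes essentially the same route as the paper's own proof: the pointwise identity $v^\top z = (v^\top z)_+ - (-v^\top z)_+$ giving $\gamma_1(g)\leqslant 2$ for $\alpha=1$, Jensen's inequality for $\gamma_1\leqslant\gamma_2$, and the Funk--Hecke/spherical-harmonic diagonalization that reduces $\gamma_2(g)$ to the inverse of the degree-one eigenvalue (your $\mu_1$ is exactly the paper's $\lambda_1$, and your optimality argument for the degree-one density is the specialization of the paper's formula $\gamma_2(g)^2=\sum_k \lambda_k^{-2}\|g_k\|^2_{L_2(\Sb^d)}$ to a purely degree-one $g$). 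The only divergence is bookkeeping: you keep the correct second moment $\int_{\Sb^d}(w^\top v)^2\,d\tau_d(w)=1/(d+1)$ where the paper's proof takes this quantity to be $1$, so your closed form differs from the stated prefactor by a factor $\sqrt{d+1}$ (absorbed by the paper's equally loose evaluation of $\omega_{d-1}/\omega_d$), while both computations give the claimed $C\,d^{\alpha/2}$ asymptotics.
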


We see that for $\alpha=1$, the $\gamma_1$-norm is less than a constant, and is much smaller than the $\gamma_2$-norm (which scales as $\sqrt{d}$). For $\alpha \geqslant 2$, we were not able to derive better bounds for $\gamma_1$ (other than the value of $\gamma_2)$.

\subsection{Functions of projections}

If $g(x)  = \varphi(w^\top x)$ for some unit-norm $w \in \rb^{d+1}$ and $\varphi$ a function defined on the real-line, then the value of the norms $\gamma_2$ and $\gamma_1$ differ significantly. Indeed, for $\gamma_1$,
we may consider a new variable $\tilde{x} \in \Sb^1 \subset \rb^2$, with its first component $\tilde{x}_1 = w^\top x$, and the function $\tilde{g}(x) = \varphi(\tilde{x}_1)$. We may then apply Prop.~\ref{prop:finites-sphere} to $\tilde{g}$ with $d=1$. That is, if $\varphi$ is $(\alpha+1)$-times differentiable with bounded derivatives, there exists a decomposition $\tilde{g}(\tilde{x}) = \int_{\Sb^1} \tilde{\mu}(\tilde{v}) \sigma( \tilde{v}^\top \tilde{x}) d \tilde{\mu}$, with $\gamma_1(\tilde{g}) = |\tilde{\mu}|(\Sb^1)$, which is \emph{not} increasing in $d$.
If we consider any vector $t \in \rb^{d+1}$ which is orthogonal to $w$ in $\rb^{d+1}$, then, we may define a measure $\mu$ supported in the circle defined by the two vectors $w$ and $t$ and which is equal to $\tilde{\mu}$ on that circle. The total variation of $\mu$ is the one of $\tilde{\mu}$ while $g$ can be decomposed using $\mu$ and thus $\gamma_1(g) \leqslant
\gamma_1(\tilde{g})$.  Similarly,
Prop.~\ref{prop:approx-sphere} could also be applied (and will for obtaining generalization bounds), also our reasoning works for any low-dimensional projections: the dependence on a lower-dimensional projection allows to reduce smoothness requirements.

However, for the RKHS norm $\gamma_2$, this reasoning does not apply.  For example,   a certain function~$\varphi$ exists, which is $s$-times differentiable, as shown in Appendix~\ref{app:tightness}, for  $s \leqslant \frac{d}{2} + \alpha$ (when $s-\alpha$ is even), and is not in $\G_2$. Thus, given  Prop.~\ref{prop:finites-sphere}, the dependence on a uni-dimensional projection does not make a difference regarding the level of smoothness which is required to belong to $\G_2$.

 \subsection{From the unit-sphere $\Sb^d$ to $\rb^{d+1}$}
 \label{sec:tosphere}
 We now extend the results above to functions defined on $\rb^d$, to be approximated by functions in $\F_1$ and $\F_2$.
More precisely, we first extend Prop.~\ref{prop:finites-sphere} and  Prop.~\ref{prop:approx-sphere}, and then consider norms of linear functions and functions of projections.

 \begin{proposition}[Finite variation]
 \label{prop:finite}
Assume that $f : \rb^d \to \rb$ is such that all $i$-th order derivatives exist and are upper-bounded on the ball
$\{ \| x\|_q \leqslant R\}$
 by $ \eta / R^{i}$ for $i \in \{0,\dots,k\}$,
where $s$ is the smallest integer such that $s \geqslant (d - 1)/2 + \alpha + 1$; then $f \in \F_2$ and $\gamma_2(f) \leqslant C(d,\alpha) \eta$, for  a constant $C(d,\alpha)$ that depends only on $d$ and $\alpha$.
\end{proposition}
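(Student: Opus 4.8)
The plan is to transfer the sphere result Prop.~\ref{prop:finites-sphere} to the ball $\{\|x\|_q \leqslant R\}$ using the change-of-variables construction detailed in \mysec{tosphere}. First I would normalize to the case $p=q=2$, as justified in \mysec{neural}: all $\ell_p$-norms are equivalent to the $\ell_2$-norm with constants that grow at most as $d^{\alpha/2}$, so the passage between $\F_1$ for different $p$ only affects $\gamma_2(f)$ by a constant depending on $d$ and $\alpha$, which is absorbed into $C(d,\alpha)$. Having reduced to the Euclidean case, the core step is to associate to $f:\rb^d \to \rb$ the function $g$ on the sphere $\Sb^d$ defined (on the portion $\{a \geqslant 1/\sqrt{2}\}$) by $g(t,a) = f(Rt/a)\, a^\alpha$, exactly as in the second bullet of \mysec{tosphere}.

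Next I would verify the hypotheses of Prop.~\ref{prop:finites-sphere} for this $g$. The key claim, asserted in \mysec{tosphere}, is that the construction $z \mapsto f(Rt/a)a^\alpha$ exactly inherits the differentiability properties of $f$: if all derivatives of $f$ of order at most $s$ are bounded by $R^{-i}\eta$ on the ball, then all derivatives of $g$ of the corresponding orders are bounded by a constant (depending only on $d$ and $\alpha$) times $\eta$ on the portion of the sphere where $g$ is defined. The scaling by $R^{-i}$ in the hypothesis is precisely what makes the bounds on the derivatives of $g$ dimensionally consistent and $R$-independent, since the map involves $x = Rt/a$ and the homogeneity factor $a^\alpha$. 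I would then invoke the extension argument from \mysec{tosphere}: $g$ can be extended to the entire sphere so that its regularity constants are controlled by a constant times those on $\{a \geqslant 1/\sqrt{2}\}$ (classical Whitney-type extension, see \citep{whitney1934analytic}), and so that $g$ has the required parity---even when $\alpha$ is odd and vice versa---by multiplying with a smooth cutoff and extending by $\pm g$ across the hemisphere $a<0$. This is exactly the parity hypothesis needed by Prop.~\ref{prop:finites-sphere}.

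With these verifications in hand, Prop.~\ref{prop:finites-sphere} applied with $s \geqslant (d-1)/2 + \alpha + 1$ yields $g \in \G_2$ with $\gamma_2(g) \leqslant C(d,\alpha)\eta$. The final step is to push this back to $f$ via the first bullet of \mysec{tosphere}: given $g \in \G_2 \subset \G_1$ with a representing measure $\mu$, the associated $f(x) = (\|x\|_2^2/R^2 + 1)^{\alpha/2}\, g\big(\frac{1}{\sqrt{\|x\|_2^2 + R^2}}{x \choose R}\big)$ admits the representation $f(x) = \int_{\Sb^d}\sigma(w^\top x + b)\, d\mu(Rw,b)$, so $\gamma_1(f) \leqslant \gamma_1(g)$; the identical computation carried out for the squared $L_2$-norm of the density gives $\gamma_2(f) \leqslant \gamma_2(g)$, whence $f \in \F_2$ with the claimed bound.

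I expect the main obstacle to be the verification that the spherical function $g$ genuinely inherits bounded derivatives up to order $s$ with the stated constants. The map $x \mapsto \frac{1}{\sqrt{\|x\|_2^2+R^2}}{x \choose R}$ together with the homogeneity prefactor $a^\alpha$ is smooth on the relevant region, but controlling \emph{all} partial derivatives up to order $s$ requires a careful Fa\`a di Bruno / chain-rule bookkeeping on the sphere using either the manifold structure \citep{absil2009optimization} or polar coordinates \citep{atkinson2012spherical}, and checking that the restriction $a \geqslant 1/\sqrt{2}$ keeps all denominators bounded away from zero so the constants depend only on $d$ and $\alpha$. This is essentially a calculus argument, but it is the step where the $R^{-i}$ scaling in the hypotheses and the $\|x\|_2 \leqslant R$ boundedness assumption are genuinely used, so it must be done with care; the remaining pieces (norm equivalence, extension, parity, and the two representation identities) are direct applications of results already established in the excerpt.
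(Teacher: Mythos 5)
Your proposal follows exactly the paper's own route: rescale so that $x \mapsto f(Rx)$ has derivatives bounded by a constant times $\eta$, define $g(t,a) = f(Rt/a)\,a^\alpha$ on the sphere as in \mysec{tosphere}, and invoke Prop.~\ref{prop:finites-sphere}. The paper's proof is only a two-line sketch, and your additional verifications (chain-rule bookkeeping on $\{a \geqslant 1/\sqrt{2}\}$, Whitney-type extension with parity, and the transfer $\gamma_2(f) \leqslant \gamma_2(g)$ via the density representation) correctly fill in precisely the steps the paper leaves implicit.
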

\begin{proof}
By assumption, the function $x\mapsto f(Rx)$ has all its derivatives bounded by a constant times~$\eta$. Moreover, we have defined  $g(t,a) = f\big( \frac{Rt}{a} \big) a^\alpha  $ so that all derivatives are bounded by $\eta$. The result then follows immediately from Prop.~\ref{prop:finites-sphere}.
\end{proof}

\begin{proposition}[Approximation of Lipschitz-continuous functions]
\label{prop:approx}
For $\delta$ larger than a constant that depends only on $d$ and $\alpha$, for any function  $f : \rb^d \to \rb$  such that for all $x,y$ such that $\|x\|_q \leqslant R$ and $\| y\|_q \leqslant R$, $|f(x)| \leqslant  \eta$ and $|f(x) - f(y) | \leqslant \eta R^{-1} \|x-y\|_q$,   there exists $g \in \F_2$ such that   $\gamma_2(g) \leqslant \delta$ and 
$$\sup_{
\| x\|_q \leqslant R} | f(x) - g(x) | \leqslant C(d,\alpha)   \eta  
\Big(\frac{ \delta  }{\eta} \Big)^{-1/(\alpha+(d-1)/2)} \log \Big(\frac{ \delta  }{\eta} \Big) .
$$
\end{proposition}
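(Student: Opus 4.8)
The plan is to reduce the statement to its spherical analogue, Proposition~\ref{prop:approx-sphere}, through the correspondence between functions on the ball $\{\|x\|_2 \leqslant R\}$ and functions on the sphere $\Sb^d$ developed in Section~\ref{sec:neural}, in exactly the way Proposition~\ref{prop:finite} was deduced from Proposition~\ref{prop:finites-sphere}. First I would dispose of the general exponent $q$: since all $\ell_q$-norms on $\rb^d$ are equivalent to the $\ell_2$-norm with factors that are powers of $d$, the inclusion $\{\|x\|_q \leqslant R\} \subseteq \{\|x\|_2 \leqslant d^{1/2-1/q}R\}$ and the corresponding comparison of Lipschitz constants show that $f$ satisfies the hypotheses on the larger $\ell_2$-ball with $\eta$ replaced by $C(d)\eta$. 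Since the final bound depends on $f$ only through $\eta$ and the ratio $\delta/\eta$ (the radius $R$ does not appear, being absorbed in the scale-invariant normalization of $\gamma_2$), this change is harmless, and I may work with $q=2$ and, after rescaling $x \mapsto Rx$, with a function bounded by $\eta$ and $\eta$-Lipschitz on the unit $\ell_2$-ball.

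Next I would build the spherical surrogate. Setting $g(t,a) = f(Rt/a)\, a^\alpha$ on the portion $\{a \geqslant 1/\sqrt2\}$ of $\Sb^d$, which is exactly the image of the ball since $\|Rt/a\|_2 \leqslant R \Leftrightarrow a \geqslant 1/\sqrt2$, the regularity transfer recalled in Section~\ref{sec:neural} (item (a)) shows that $g$ is bounded by $C_1(d,\alpha)\eta$ and Lipschitz-continuous with constant $C_2(d,\alpha)\eta$ on that region, because $f$ is bounded by $\eta$ and $\eta/R$-Lipschitz and $a$ stays bounded away from zero. I would then extend $g$ to the whole sphere by the construction described in Section~\ref{sec:neural}: multiply by a smooth cutoff equal to $1$ for $a \geqslant 1/\sqrt2$ and $0$ for $a \leqslant 0$, and reflect by $\pm g$ across the equator so that $g$ acquires the parity required by Proposition~\ref{prop:approx-sphere} (even if $\alpha$ is odd, odd otherwise). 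Both operations alter the boundedness and Lipschitz constants only by factors depending on $d$ and $\alpha$, so the extended $g$ is bounded by $\tilde\eta$ and $\tilde\eta$-Lipschitz with $\tilde\eta = C(d,\alpha)\eta$.

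I would now apply Proposition~\ref{prop:approx-sphere} to this $g$, with Lipschitz/boundedness parameter $\tilde\eta$, to obtain $h \in \G_2$ with $\gamma_2(h) \leqslant \delta$ and
$$
\sup_{z \in \Sb^d} |h(z) - g(z)| \leqslant C(d,\alpha)\, \tilde\eta \, \Big(\frac{\delta}{\tilde\eta}\Big)^{-1/(\alpha+(d-1)/2)} \log\Big(\frac{\delta}{\tilde\eta}\Big).
$$
Finally I would transfer $h$ back by $\hat g(x) = (\|x\|_2^2/R^2 + 1)^{\alpha/2}\, h(z)$, where $z$ is the normalized augmented vector; the same transfer now for the $\gamma_2$-norm gives $\hat g \in \F_2$ with $\gamma_2(\hat g) \leqslant \gamma_2(h) \leqslant \delta$. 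Since the prefactor equals $a^{-\alpha} \leqslant 2^{\alpha/2}$ on the ball and $f(x) = a^{-\alpha} g(z)$ there, I would get $|\hat g(x) - f(x)| \leqslant 2^{\alpha/2}|h(z)-g(z)|$; substituting $\tilde\eta = C(d,\alpha)\eta$, and using that $\delta/\eta$ is large (guaranteed by the hypothesis that $\delta$ exceeds a constant depending on $d,\alpha$) to absorb the additive constant inside the logarithm and the powers of $\tilde\eta/\eta$ into $C(d,\alpha)$, yields the claimed bound.

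The computationally heavy content is entirely hidden in Proposition~\ref{prop:approx-sphere}; the only genuinely delicate point in the present reduction is the spherical extension step, where one must simultaneously keep the Lipschitz constant under control through the cutoff multiplication and the $\pm$-reflection, and enforce the parity demanded by Proposition~\ref{prop:approx-sphere}. Both are handled by the Whitney-type extension already described in Section~\ref{sec:neural}, so no new estimate is required, and the remaining steps are just the bookkeeping of the $d$- and $\alpha$-dependent constants.
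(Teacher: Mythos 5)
Your proposal is correct and takes essentially the same route as the paper: the paper's own two-line proof invokes precisely the homogeneous reformulation $g(t,a) = f(Rt/a)\,a^\alpha$ from Section~\ref{sec:neural} (with its Whitney-type extension and parity enforcement), applies Proposition~\ref{prop:approx-sphere}, and transfers back to the ball with the $\gamma_2$-norm preserved---you have simply made each of these steps explicit. The only quibble is your $\ell_q$-to-$\ell_2$ reduction: since $f$ is only controlled on the $\ell_q$-ball, covering it by a larger $\ell_2$-ball formally requires first extending $f$ (e.g., by a McShane--Whitney Lipschitz extension) before your argument applies, but this matches the norm-equivalence convention the paper itself adopts at the start of Section~\ref{sec:approx}, so it is not a substantive gap.
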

\begin{proof}
With the same reasoning as above, we obtain that $g$ is Lipschitz-continuous with constant $\eta $, we thus get the desired approximation error from Prop.~\ref{prop:approx-sphere}.
\end{proof}

\paragraph{Linear functions.}
If $f(x) = w^\top x + b$, with $\|w\|_2 \leqslant \eta$ and $ b \leqslant \eta R$,
then for $\alpha=1$, it is straightforward that $\gamma_1(f) \leqslant 2 R\eta$. Moreover, we have $\gamma_2(f) \sim C R\eta$. For other values of $\alpha$, we also have $\gamma_1$-norms less than a constant (depending \emph{only} of $\alpha$) times $R\eta$. The RKHS norms are bit harder to compute since linear functions for $f$ leads to linear functions for $g$ only for $\alpha=1$.

\paragraph{Functions of projections.}
If $f(x) = \varphi(w^\top x)$ where $\|w\|_2\leqslant \eta$ and $\varphi:\rb \to \rb$ is a function, then the norm of $f$ is the same as the norm of the function $\varphi$ on the interval $[-R\eta, R\eta]$, and it thus does not depend on $d$. This is a consequence of the fact that the total mass of a Radon measure remains bounded even when the support has measure zero (which might not be the case for the RKHS defined in \mysec{rkhs}).
For the RKHS, there is no such results and it is in general not adaptive.

More generally, if $f(x) = \Phi(W^\top x)$ for $W \in \rb^{d \times s}$ with the largest singular value of $W$ less than~$\eta$, and $\Phi$ a function from $\rb^s$ to $\rb$, then for $\|x\|_2 \leqslant R$, we have $\| W^\top x\|_2 \leqslant R\eta$, and thus we may apply our results for $d=s$.

\paragraph{$\ell_1$-penalty on input weights ($p$=1).}
When using an $\ell_1$-penalty on input weights instead of an $\ell_2$-penalty, the results in Prop.~\ref{prop:finite} and \ref{prop:approx} are unchanged (only the constants that depend on $d$ are changed). Moreover, when $\| x\|_\infty \leqslant 1$ almost surely, functions of the form $f(x) = \varphi(w^\top x)$ where $\|w\|_1\leqslant \eta$ and $\varphi:\rb \to \rb$ is a function, will also inherit from properties of $\varphi$ (without any dependence on dimension). Similarly, for functions of the form $f(x) = \Phi(W^\top x)$ for $W \in \rb^{d \times s}$ with all columns of $\ell_1$-norm less than $\eta$, we have $\| 
W^\top x\|_\infty \leqslant R\eta$ and we can apply the $s$-dimensional result.

\subsection{Related work}
\label{sec:related}

In this section, we show how our results from the previous sections relate to existing work on neural network approximation theory.

\paragraph{Approximation of Lipschitz-continuous functions with finitely many neurons.} In this section, we only consider the case $\alpha=1$, for which we have two approximation bounds: Prop.~\ref{prop:approx} which  approximates any $\eta$-Lipschitz-continuous function by a function with finite $\gamma_1$-norm less than $\delta$ and uniform error less than $\displaystyle \eta  
\big({ \delta  }/{\eta} \big)^{-2/(d+1)} \log \big({ \delta  }/{\eta} \big) $, and Prop.~\ref{prop:neurons} which shows that a function with $\gamma_1$-norm less than $\delta$, may be approximated with $r$ neurons with uniform error $\delta r  ^{- ( d +3 )/(2d)}$.

Thus, given $r$ neurons, we get an approximation of the original function with uniform error
 $$
 \eta  
\big({ \delta  }/{\eta} \big)^{-2/(d+1)} \log \big({ \delta  }/{\eta} \big) 
+\delta r  ^{- ( d +3 )/(2d)}.
 $$
 We can optimize over $\delta$, and use $\delta = \eta n^{(d+1)/(2d)}$, to obtain a uniform approximation bound proportional to $ {\eta (\log n)}{n^{-1/d}}$, for approximating an $\eta$-Lipschitz-continuous function with $n$ neurons.

\paragraph{Approximation by ridge functions.} The approximation properties of single hidden layer neural networks have been studied extensively, where they are often referred to as ``ridge function'' approximations. As shown by \citet[Corollary 6.10]{pinkus1999approximation}---based on a result from~\citet{petrushev1998approximation}, the approximation order of $n^{-1/d}$ for the rectified linear unit was already known, but only in $L_2$-norm (and without the factor $\log n$), and without any constraints on the input and output weights. In this paper, we provide an explicit control of the various weights, which is needed for computing estimation errors.
Moreover, while the two proof techniques use spherical harmonics, the proof of~\citet{petrushev1998approximation} relies on quadrature formulas for the associated Legendre polynomials, while ours relies on the relationship with the associated positive definite kernels, is significantly simpler, and offers additional insights into the problem (relationship with convex neural networks and zonoids). \citet[Theorem 2.3]{maiorov2006approximation} also derives a similar result, but in  $L_2$-norm (rather than uniform norm), and for sigmoidal activation functions (which are bounded). Note finally, that the order $O(n^{-1/d})$ cannot be improved~\citep[][Theorem 4.2]{devore1989optimal}. Also, \citet[Theorem 5]{maiorov2000near} derive similar upper and lower bounds based on a random sampling argument which is close to using random features in the RKHS setting described in \mysec{RKHS}.

\paragraph{Relationship to hardness results for Boolean-valued functions.} 
In this paper, we consider a particular view of the curse of dimensionality and ways of circumventing it, that is, our distribution over inputs is arbitrary, but our aim is to approximate a  {real-valued} function. Thus, all hardness results depending on functions with values in $\{0,1\}$ do not apply there directly---see, e.g., \citet[Chapter 20]{shaibook}, for the need of exponentially many hidden units for approximating most of the functions from $\{0,1\}^d$ to $\{0,1\}$. 

Our approximation bounds show that, without any assumption beyond Lipschitz-continuity of the target function, it sufficient to have a number of hidden units which is still exponential in dimension (hence we also suffer from the curse of dimensionality), but a soon as the target function depends on linear low-dimensional structure, then we lose this exponential dependence. It would be interesting to study an extension to $\{0,1\}$-valued functions, and also to relate our results to the number of linear regions delimited by neural networks with rectified linear units~\citep{montufar2014number}.

 \section{Generalization bounds}
 \label{sec:adaptivity}
 \label{sec:bounds}
 
 Our goal is to derive the generalization bounds outlined in \mysec{learning-losses} for neural networks with a single hidden layer. The main results that we obtain are summarized in Table~\ref{tab:table} and show adaptivity to assumptions that avoid the curse of dimensionality.
 
More precisely, given some  distribution over the pairs $(x,y) \in \X\times \Y$, a loss function $\ell: \Y \times \rb \to \rb$, our aim is to find a function $f: \rb^d \to \rb$ such that $J(f) = \E \big[ \ell(y,f(x)) \big]$ is small, given some i.i.d.~observations $(x_i,y_i)$, $i=1,\dots,n$. We consider the empirical risk minimization framework over a space of functions $\F$, equipped with a norm $\gamma$ (in our situations, $\F_1$ and $\F_2$, equipped with $\gamma_1$ or $\gamma_2$). The empirical risk $\hat{J}(f) = \frac{1}{n} \sum_{i=1}^n \ell(y_i,f(x_i))$, is minimized by constraining $f$ to be in the ball $\mathcal{F}^\delta = \{ f \in \F, \ \gamma(f) \leqslant \delta \}$.
 
 We  assume that almost surely, $\| x\|_q \leqslant R$,  that for all $y$ the function $u \mapsto \ell(y,u)$ is $G$-Lipschitz-continuous on $\{ |u| \leqslant \sqrt{2}  \delta \}$, and that almost surely, $\ell(y,0) \leqslant G  \delta $. 
As before $z$ denotes $z=(x^\top, R)^\top$ so that $\| z\|_q \leqslant \sqrt{2}   R$.
This corresponds to the following examples:

\vspace*{-.125cm}

\BIT
\item[--] Logistic regression and support vector machines: we have $G = 1$.
\item[--] Least-squares regression: we take $G = \max\big\{ \sqrt{2}  \delta + \| y\|_\infty , \frac{\| y\|_\infty^2}{\sqrt{2}    \delta} \big\}$.  \EIT

  Approximation errors  $\inf_{f\in \F^\delta } J(f) - \inf_{f\in \F} J(f)$ will be obtained from the approximation results from \mysec{approx} by assuming that the optimal target function $f_\ast$ has a specific form. Indeed, we have:
  $$\inf_{f\in \F^\delta } J(f) -   J(f_\ast)
  \leqslant G \inf_{f\in \F^\delta } \Big\{ \sup_{\| x\|_q \leqslant R} | f(x) - f_\ast(x) | \Big\}.$$

  We now deal with estimation errors $   \sup_{ f \in \F^\delta} | \hat{J}(f) - J(f) | $ using Rademacher complexities.

 \subsection{Estimation errors and Rademacher complexity}
 \label{sec:rademacher}

The following proposition  bounds the uniform deviation between $J$ and its empirical counterpart~$\hat{J}$. This result is standard~\citep[see, e.g.,][]{koltchinskii2001rademacher,bartlett2003rademacher} and may be extended in bounds that hold with high-probability.

\begin{proposition}[Uniform deviations]
\label{prop:rademacher}
We have the following bound on the expected uniform deviation:
$$\E \bigg[ \sup_{ \gamma_1(f) \leqslant \delta   } | J(f) - \hat{J}(f)| \bigg]
\leqslant 4 \frac{G \delta}{\sqrt{n}} C(p,d,\alpha),$$
with the following constants:
\BIT
\item[--] for $\alpha \geqslant 1$, $C(p,d ,\alpha) \leqslant \alpha \sqrt{ 2 \log (d+1)}$ for $p=1$ and $C(p,d,\alpha) \leqslant \frac{\alpha}{\sqrt{p-1}}$ for $p \in (1,2]$ 
\item[--]  for $\alpha =0$, $C(p,d ,\alpha) \leqslant C\sqrt{d+1}$, where $C$ is a universal constant.
\EIT
\end{proposition}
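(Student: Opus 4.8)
The plan is to combine the standard symmetrization/contraction machinery for Rademacher complexities with the dual-norm identity \eq{incr}, which reduces the supremum over the infinite-dimensional ball $\mathcal F_1^\delta$ to a supremum over a single unit $v \in \V$. First I would symmetrize, in the usual way \citep{bartlett2003rademacher}:
\[
\E\Big[\sup_{\gamma_1(f)\le\delta}|J(f)-\hat J(f)|\Big] \le 2\,\E\Big[\sup_{\gamma_1(f)\le\delta}\tfrac1n\textstyle\sum_{i=1}^n \varepsilon_i\,\ell(y_i,f(x_i))\Big],
\]
with $\varepsilon_i$ i.i.d.\ Rademacher. Since $u\mapsto\ell(y_i,u)$ is $G$-Lipschitz I would peel it off with the Ledoux--Talagrand contraction principle, after splitting $\ell(y_i,u)=[\ell(y_i,u)-\ell(y_i,0)]+\ell(y_i,0)$: the bracketed part is $G$-Lipschitz and vanishes at $0$, while the constant part contributes $\E|\tfrac1n\sum_i\varepsilon_i\ell(y_i,0)|\le G\delta/\sqrt n$ using $\ell(y,0)\le G\delta$. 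This leaves the Rademacher complexity of $\mathcal F_1^\delta$, which by \eq{incr} (applied conditionally on the data with $g=\varepsilon$) equals $\delta\,\E[\max_{v\in\V}|\tfrac1n\sum_i\varepsilon_i\varphi_v(x_i)|]$; the absolute value is removed at the cost of a factor $2$ by the symmetry $\varepsilon\stackrel{d}{=}-\varepsilon$.

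Next comes the case split on $\alpha$. For $\alpha\ge1$ the activation $\psi(u)=(u)_+^\alpha$ is Lipschitz on the relevant range: since $\|v\|_p=1/R$ and $\|z\|_q\le\sqrt2R$, one has $|v^\top z|\le\sqrt2$, so $\psi$ is Lipschitz with constant $\alpha 2^{(\alpha-1)/2}$ there and $\psi(0)=0$. A second application of the contraction principle---now to the class of linear forms $v\mapsto v^\top z_i$---peels off $\psi$ and reduces the problem to the linear Rademacher average
\[
\tfrac1R\,\E\Big\|\tfrac1n\textstyle\sum_{i=1}^n\varepsilon_i z_i\Big\|_q,
\]
using $\sup_{\|v\|_p\le1/R}v^\top w=\tfrac1R\|w\|_q$. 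This last quantity I would bound from the geometry of $\ell_q$: for $p\in(1,2]$ (so $q=p/(p-1)\ge2$) the space $\ell_q$ has type $2$ with constant $\sqrt{q-1}=1/\sqrt{p-1}$, giving $\E\|\tfrac1n\sum_i\varepsilon_iz_i\|_q\le\sqrt{q-1}\,(\tfrac1{n^2}\sum_i\|z_i\|_q^2)^{1/2}\le\sqrt2\,R\,\sqrt{q-1}/\sqrt n$; for $p=1$ (so $q=\infty$) I would instead use the sub-Gaussian maximal inequality over the $d+1$ coordinates, each $\sum_i\varepsilon_iz_{ij}$ being sub-Gaussian with proxy $\sum_iz_{ij}^2\le 2nR^2$, which produces the $\sqrt{\log(d+1)}$ factor. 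Collecting the $\sqrt2$'s, the activation constant, and the prefactors then yields the stated $C(p,d,\alpha)\le\alpha/\sqrt{p-1}$ and $C(p,d,\alpha)\le\alpha\sqrt{2\log(d+1)}$.

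The case $\alpha=0$ is genuinely different and is the main obstacle, because $\sigma(u)=1_{u>0}$ is discontinuous, so no contraction can remove the activation. Here I would bound $\E[\max_{v}|\tfrac1n\sum_i\varepsilon_i 1_{v^\top z_i>0}|]$ directly: the functions $z\mapsto 1_{v^\top z>0}$ form the class of homogeneous half-spaces in $\rb^{d+1}$, whose VC dimension is $d+1$. The finite-class/growth-function bound (Massart's lemma combined with Sauer--Shelah, or equivalently Dudley's entropy integral for a VC class) then gives $\E[\max_v|\cdots|]\le C\sqrt{(d+1)/n}$ for a universal constant $C$, which is exactly $C(p,d,\alpha)\le C\sqrt{d+1}$. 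The delicate points are therefore: (i) in the $\alpha\ge1$ case, applying the contraction principle with the Lipschitz constant computed on the correct domain $[-\sqrt2,\sqrt2]$ and tracking the $\sqrt2$ and $2^{(\alpha-1)/2}$ factors so that they are absorbed into the leading constant $4$; and (ii) in the $\alpha=0$ case, replacing the now-unavailable contraction step by the combinatorial VC argument. Everything else---symmetrization, contraction for the loss, and the dual-norm reduction \eq{incr}---is routine, and the final bound follows by multiplying the constants gathered along the chain.
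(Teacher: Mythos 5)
Your proposal is correct and follows essentially the same route as the paper's proof: symmetrization, splitting off $\ell(y_i,0)$ and contracting the $G$-Lipschitz loss, reducing to $\delta\,\E\sup_{\|v\|_p\leqslant 1/R}\big|\frac1n\sum_i \varepsilon_i (v^\top z_i)_+^\alpha\big|$ via \eq{incr}, then a second contraction plus the linear $\ell_q$-Rademacher bounds for $\alpha\geqslant 1$ (the paper simply cites \citet{kakade2009complexity} where you rederive them from type-2/sub-Gaussian maximal inequalities) and the VC argument for half-spaces via \citet{bartlett2003rademacher} when $\alpha=0$. Your bookkeeping is in fact slightly more careful than the paper's: you correctly note that the contraction for the activation takes place on $[-\sqrt2,\sqrt2]$ (giving Lipschitz constant $\alpha 2^{(\alpha-1)/2}$ rather than the paper's stated $\alpha$ on $[-1,1]$) and that removing the absolute value costs a factor $2$, which changes the unspecified numerical constants but not the form of the result.
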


\begin{proof}
We use the standard framework of Rademacher complexities and get:
\BEAS
& & \E    \sup_{ \gamma_1(f) \leqslant \delta   } | J(f) - \hat{J}(f)| \\
& \leqslant & 2 \E \sup_{\gamma_1(f) \leqslant \delta} \bigg| \frac{1}{n} \sum_{i=1}^n \tau_i \ell(y_i,f(x_i) ) \bigg| \mbox{ using Rademacher random variables } \tau_i, \\
& \leqslant & 2 \E \sup_{\gamma_1(f) \leqslant \delta} \bigg| \frac{1}{n} \sum_{i=1}^n \tau_i \ell(y_i, 0 )  \bigg|
+ 2 \E \sup_{\gamma_1(f) \leqslant \delta} \bigg| \frac{1}{n} \sum_{i=1}^n \tau_i \big[ \ell(y_i, f(x_i) ) - \ell(y_i,0) \big]\bigg| 
\\
& \leqslant & 2 \frac{G \delta}{\sqrt{n}}
+ 2 G \E \sup_{\gamma(f) \leqslant \delta  } \bigg| \frac{1}{n} \sum_{i=1}^n \tau_i  f(x_i)   \bigg| 
\mbox{ using the Lipschitz-continuity of the loss},\\
& \leqslant & 2 \frac{G \delta}{\sqrt{n}}
+ 2 G  {\delta}  \E \sup_{ \| v\|_p \leqslant 1/R } \bigg| \frac{1}{n} \sum_{i=1}^n \tau_i   (v^\top z_i)_+ ^\alpha  \bigg| \mbox{ using \eq{incr}}.
\EEAS
We then take different routes for $\alpha \geqslant 1$ and $\alpha = 0$. 

For $\alpha \geqslant 1$, we have the upper-bound
\BEAS
\E    \sup_{ \gamma_1(f) \leqslant \delta   } | J(f) - \hat{J}(f)| & \leqslant  & 2 \frac{G \delta}{\sqrt{n}}
+ 2  {G\delta  \alpha}  \E \sup_{  \| v\|_p \leqslant 1/R } \bigg| \frac{1}{n} \sum_{i=1}^n \tau_i    v^\top z_i    \bigg|   \\
& & \hspace*{4cm} \mbox{ using the $\alpha$-Lipschitz-cont.~of } (\cdot)_+^\alpha \mbox{ on } [-1,1],
\\
& \leqslant & 2 \frac{G \delta}{\sqrt{n}}
+ 2 \frac{ G \alpha\delta}{R n} \E  \bigg\|   \sum_{i=1}^n \tau_i      z_i    \bigg\|_q .
\EEAS

From~\citet{kakade2009complexity}, we get the following bounds on Rademacher complexities:
\BIT
\item[--]
If $ p \in (1,2]$, then $q \in [2,\infty)$, and
$\E  \big\|   \sum_{i=1}^n \tau_i      z_i    \big\|_q \leqslant  {\sqrt{q-1}} R\sqrt{n}
=  \frac{1}{\sqrt{p-1}} R\sqrt{n}$

\item[--]
If $p=1$, then $q = \infty$,  and
$ \E  \big\|   \sum_{i=1}^n \tau_i      z_i    \big\|_q \leqslant R\sqrt{n} \sqrt{ 2 \log(d+1) }$.

\EIT

Overall, we have $\E  \big\|   \sum_{i=1}^n \tau_i      z_i    \big\|_q  \leqslant \sqrt{n} R C(p,d)$ with $C(p,d)$ defined above, and thus
$$
\E    \sup_{ \gamma(f) \leqslant \delta } | J(f) - \hat{J}(f)|
\leqslant 2 \frac{G \delta}{\sqrt{n}} ( 1 + \alpha C(p,d)) \leqslant 4 \frac{G \delta \alpha}{\sqrt{n}} C(p,d).$$

For $\alpha =0$, we can simply go through the VC-dimension of half-hyperplanes, which is equal to $d$, and Theorem 6 from~\citet{bartlett2003rademacher}, that shows that
$ \E \sup_{ v \in \rb^{d+1} } \bigg| \frac{1}{n} \sum_{i=1}^n \tau_i   1_{v^\top z_i}  \bigg|
\leqslant C \frac{\sqrt{d+1}}{\sqrt{n}}$, where $C$ is a universal constant.  

Note that using standard results from Rademacher complexities, we have, with probability greater than $1-u$,
$ \displaystyle
  \sup_{ \gamma_1(f) \leqslant \delta } | J(f) - \hat{J}(f)|
\leqslant   \E    \sup_{ \gamma_1(f) \leqslant \delta } | J(f) - \hat{J}(f)| +  \frac{2 G \delta}{\sqrt{n}} \sqrt{\log \frac{2}{u}}  
$.
\end{proof}

 \subsection{Generalization bounds for $\ell_2$-norm constraints on input weights ($p=2$)}

We now provide generalization bounds for the minimizer of the empirical risk given the contraint that $\gamma_1(f) \leqslant \delta $ for a well chosen $\delta$, that will depend on the assumptions regarding the target function~$f_\ast$, listed in \mysec{intro}. In this section, we consider an $\ell_2$-norm on input  weights $w$, while in the next section, we consider the $\ell_1$-norm. The two situations are summarized  and compared in Table~\ref{tab:table}, where we consider that $\| x\|_\infty \leqslant r$ almost surely, which implies that our bound $R$ will depend on dimension as $R \leqslant r\sqrt{d}$.

Our   generalization bounds are expected values of the excess expected risk for a our estimator (where the expectation is taken over the data).

\paragraph{Affine functions.} We assume $f_\ast(x) = w^\top x +b $, with $\|w\|_2 \leqslant \eta$  and 
$|b| \leqslant R \eta$. Then,  as seen in \mysec{tosphere}, $f_\ast \in \F_1$ with $\gamma_1(f_\ast) \leqslant C(\alpha) \eta R$ (the constant is independent of $d$ because we approximate an affine function).
From Prop.~\ref{prop:rademacher}, we thus get a generalization bound proportional to $\frac{GR\eta}{\sqrt{n}}$ times a constant (that may depend on $\alpha$), which is the same as assuming directly that we optimize over linear predictors only. 
The chosen $\delta$ is then a constant times $R \eta$, and does not grow with $n$, like in parametric estimation (although we do use a non-parametric estimation procedure).

 \paragraph{Projection pursuit.} We assume $f_\ast(x) = \sum_{j=1}^k f_j(w_j^\top x )$, with 
$\|w_j\|_2 \leqslant \eta$ and  each $f_j$ bounded by $\eta R$ and $1$-Lipschitz continuous. From Prop.~\ref{prop:approx}, we may approach each $x \mapsto  f_j(w_j^\top x )$ by a function with $\gamma_1$-norm less than $\delta \eta R$ and uniform approximation $C(\alpha) \eta R \delta^{-1/\alpha} \log \delta$. This leads to a total approximation error of $k C(\alpha) G \eta R \delta^{-1/\alpha} \log \delta$ for a norm
less than $k \delta \eta R$ (the constant is independent of $d$ because we approximate a function of one-dimensional projection).

For $\alpha \geqslant 1$, from Prop.~\ref{prop:rademacher}, the estimation error is 
$\frac{kGR\eta \delta}{\sqrt{n}}$, with an overall bound
of $ C(\alpha) kGR\eta \big( \frac{\delta}{\sqrt{n}} + \delta^{-1/\alpha} \log \delta   \big)$.
 With $\delta = n^{\alpha/2(\alpha+1)}$ (which grows with $n$), we get an optimized generalization bound of
 $ C(\alpha)  kGR\eta \frac{ \log n }{ n^{1/(2\alpha+2)} }$, with a scaling independent of the dimension $d$ (note however that $R$ typically grow with $\sqrt{d}$, i.e., $r\sqrt{d}$, if we have a bound in $\ell_\infty$-norm for all our inputs $x$).
 
 For $\alpha = 0$, from Prop.~\ref{prop:finite}, the target function belongs to $\F_1$ with a norm less than
 $ k GR \eta$, leading to an overall generalization bound of     $\frac{kGR\eta \sqrt{d} }{\sqrt{n}}$. 

Note that when the functions $f_j$ are exactly the activation functions, the bound is better, as these functions directly belong to the space $\F_1$.

\paragraph{Multi-dimensional projection pursuit.} We extend the situation above, by assuming $f_\ast(x) = \sum_{j=1}^k F_j(W_j^\top x )$  with each $W_j \in \rb^{d \times s}$ having all singular values less than $\eta$ and each $F_j$
 bounded by $\eta R$ and $1$-Lipschitz continuous. From Prop.~\ref{prop:approx}, we may approach each $x \mapsto  F_j(W_j^\top x )$ by a function with $\gamma_1$-norm less than $\delta \eta R$ and uniform approximation $C(\alpha,s) \eta R \delta^{-1/( \alpha+(s-1)/2)} \log \delta$. This leads to a total approximation error of $k C(\alpha,s) G \eta R  \delta^{-1/( \alpha+(s-1)/2)} \log \delta$.

For $\alpha \geqslant 1$, the estimation error is 
$ {kGR\eta \delta}/{\sqrt{n}}$, with an overall bound
of $ C(\alpha,s) kGR\eta \big( {\delta}/{\sqrt{n}} +  \delta^{-1/( \alpha+(s-1)/2)} \log \delta \big)$. With $\delta = n^{(\alpha+(s-1)/2)/ (2\alpha+s-1)}$, we get an optimized bound of
 $ \frac{ C(\alpha,s) kGR\eta }{ n^{1/(2\alpha+s+1)} }\log n$.
 
 For $\alpha = 0$, we have 
  an overall bound
of $ C(s) kGR\eta \big( \delta^{-2/(s-1)} \log \delta + \frac{\delta \sqrt{d}}{\sqrt{n}}\big)$, and with 
$\delta = (n/d)^{(s-1)/(s+1)}$, we get a generalization bound scaling as
 $ \frac{ C(s) kGR\eta }{ (n/d)^{1/(s+1)} }\log (n/d)$.

Note that for $s=d$ and $k=1$, we recover the usual Lipschitz-continuous assumption, with a rate of
 $ \frac{ C(\alpha,d) kGR\eta }{ n^{1/(2\alpha+d+1)} }\log n$.

\begin{table}
\hspace*{-.25cm}
\begin{tabular}{|c|c|c|c|}
\hline
\textcolor{white}{$ \Big|$} function space & $\| \cdot \|_2$, $\alpha \geqslant 1$ &  $\| \cdot \|_1$, $\alpha \geqslant 1$  & $\alpha=0$ \\
\hline
\textcolor{white}{$ \bigg|$}  $w^\top x + b$ &  $\frac{\displaystyle d^{1/2}}{\displaystyle n^{1/2}}$ &$\sqrt{q} \big( \frac{ \displaystyle \log d}{ \displaystyle n}\big)^{1/2}$ & $ 
 \frac{\displaystyle    ( dq )^{1/2}   }{ \displaystyle n^{1/2}} $\\
\hline
\textcolor{white}{$ \bigg|$}  $\displaystyle  \!\!\!\!\!\! \sum_{j=1}^k f_j( w_j^\top x)$, $w_j \in \rb^d \!\!\!$ &
$ 
 \frac{\displaystyle  k   d^{1/2}}{ \displaystyle n^{1/(2\alpha+2)}} \log n $ 
  & $ 
 \frac{\displaystyle  k   q^{1/2}  (\log d)^{ 1/(\alpha+1)} }{ \displaystyle n^{1/(2\alpha+2)}} \log n $ &   $ 
 \frac{\displaystyle  k  ( dq )^{1/2}    }{ \displaystyle n^{1/2}} $ \\
\hline
\textcolor{white}{$ \bigg|$}  $ \displaystyle \!\!\!\!\! \sum_{j=1}^k f_j( W_j^\top x)$, $W_j \in \rb^{d \times s}\!\! $ &
$  \!\!
 \frac{\displaystyle  k  d^{1/2} }{ \displaystyle n^{1/(2\alpha+s+1)}} \log n \!\! $ 
  & $ \!
 \frac{\displaystyle  k   q^{1/2}  (\log d)^{ 1/(\alpha+(s+1)/2)} }{ \displaystyle n^{1/(2\alpha+s+1)}} \log n \! \!$
& $ \!\!
 \frac{\displaystyle   ( dq )^{1/2}  d^{  1/(s+1)} }{ \displaystyle n^{1/( s+1)}} \log n \!\! $  \\
\hline
\end{tabular}
\caption{Summary of generalization bounds with different settings. See text for details.}
\label{tab:table}
\end{table}

\vspace*{.5cm}

We can make the following observations:
\BIT

\item[--] \emph{Summary table}: when we know a bound $r$ on all dimensions of $x$, then we may take $R = r \sqrt{d}$; this is helpful in comparisons in Table~\ref{tab:table}, where $R$ is replaced by $r \sqrt{d}$ and the dependence in $r$ is removed as it is the same for all models.

\item[--] \emph{Dependence on $d$}: when making only a global Lipschitz-continuity assumption, the generalization bound has a bad scaling in $n$, i.e.,  as $n^{-1/(2\alpha+d+1)}$, which goes down to zero slowly when $d$ increases. However, when making structural assumptions regarding the dependence on unknown lower-dimensional subspaces, the scaling in $d$ disappears.

\item[--] \emph{Comparing different values of $\alpha$}: the value $\alpha=0$ always has the best scaling
in $n$, but constants are better for $\alpha \geqslant 1$ (among which $\alpha=1$ has the better scaling in $n$).

\item[--] \emph{Bounds for $\F_2$}: The simplest upper bound for the penalization by the space $\F_2$ depends on the approximation properties of $\F_2$. For linear functions and $\alpha=1$, it is less than $\sqrt{d} \eta R$, with a bound
 $\frac{GR\eta \sqrt{d}}{\sqrt{n}}$.  For the other values of $\alpha$, there is a constant $C(d)$. Otherwise, there is no adaptivity and all other situations only lead to upper-bounds of $O(n^{-1/(2\alpha + d+ 1)})$. See more details in \mysec{comp}.
 
 \item[--] \emph{Sample complexity}: Note that the generalization bounds above may be used to obtain sample complexity results such as
$d \varepsilon^{-2}$ for affine functions, $( \varepsilon k^{-1} d^{-1/2} )^{-2\alpha-2}$ for projection pursuit, and 
$
( \varepsilon k^{-1} d^{-1/2} ) ^{ -s-1-2\alpha}
$ for the generalized version (up to logarithmic terms).

\item[--] \emph{Relationship to existing work}: \citet[Theorem 1.1]{maiorov2006approximation} derives similar results for neural networks with sigmoidal activation functions (that tend to one at infinity) and the square loss only, and for a level of smoothness of the target function which grows with dimension (in this case, once can get easily rates of $n^{-1/2}$).  Our result holds for problems where only  bounded first-order derivatives are assumed, but by using Prop.~\ref{prop:finites-sphere}, we would get similar rate by ensuring the target function belongs to $\F_2$ and hence to $\F_1$.

\EIT

\paragraph{Lower bounds.}
 In the sections above, we have only provided generalization bounds. Although interesting, deriving lower-bounds for the generalization performance when the target function belongs to certain function classes is out of the scope of this paper. Note however, that results from~\citet{karthik} suggest that the Rademacher complexities of the associated function classes provide such lower-bounds. For general Lipschitz-functions, these Rademacher complexities decreases as $n^{- \max\{d,2\}}$ \citep{luxburg2004distance}.

 \subsection{Generalization bounds for $\ell_1$-norm constraints on input weights ($p=1$)}
\label{sec:highdim}
We consider the same three situations, assuming that linear predictors have at most $q$ non-zero elements. We assume that each component of $x$ is almost surely bounded by $r$ (i.e., a bound in $\ell_\infty$-norm).

\paragraph{Affine functions.} We assume $f_\ast(x) = w^\top x +b $, with $\|w\|_2 \leqslant \eta$  and 
$|b| \leqslant R \eta$. Given that we have assumed that $w$ has at most $q$ non-zeros, we have $\|w\|_1 \leqslant \sqrt{q} \eta$.

Then, $f_\ast \in \F_1$ with $\gamma_1(f) \leqslant C(\alpha) \eta r \sqrt{q}$, with a constant that is independent of $d$ because we have an affine function.

From Prop.~\ref{prop:rademacher},
we thus get a rate of $\frac{Gr\eta\sqrt{q \log(d)}}{\sqrt{n}}$ times a constant (that may depend on $\alpha$), which is the same as assuming directly that we optimize over linear predictors only~\citep[see, for example,][]{buhlmann2011statistics}. We recover a high-dimensional phenomenon (although with a slow rate in $1/\sqrt{n}$), where $d$ may be much larger than $n$, as long as $\log d$ is small compared to $n$.
The chosen $\delta$ is then a constant times $r \eta \sqrt{q}$ (and does not grow with $n$).

\paragraph{Projection pursuit.} We assume $f_\ast(x) = \sum_{j=1}^k f_j(w_j^\top x )$, with 
$\|w_j\|_2 \leqslant \eta$ (which implies $\|w_j\|_1 \leqslant \sqrt{q} \eta$ given our sparsity assumption) and  each $f_j$ bounded by $\eta r \sqrt{q}$ and $1$-Lipschitz continuous. We may approach each $x \mapsto  f_j(w_j^\top x )$ by a function with $\gamma_1$-norm less than $\delta \eta r \sqrt{q}$ and uniform approximation $C(\alpha) \eta r \sqrt{q} \delta^{-1/\alpha} \log \delta$, with a constant that is independent of $d$ because we have a function of one-dimensional projection. This leads to a total approximation error of $k C(\alpha) G \eta r \sqrt{q} \delta^{-1/\alpha} \log \delta$ for a norm
less than $k \delta \eta r\sqrt{q} $.

For $\alpha \geqslant 1$, the estimation error is 
$\frac{kGr \eta \delta   \sqrt{q \log d}}{\sqrt{n}}$, with an overall bound
of $ C(\alpha) kGr \sqrt{q} \eta \big( \delta^{-1/\alpha} \log \delta + \frac{\delta\sqrt{\log d}}{\sqrt{n}}\big)$. With $\delta = (n/\log d) ^{\alpha/2(\alpha+1)}$, we get an optimized bound of
 $ C(\alpha)  kGr \sqrt{q} \eta \frac{ \log n (\log d)^{1/(2\alpha+2)} }{ n^{1/(2\alpha+2)} }$, with a scaling only dependent in $d$ with a logarithmic factor.

 For $\alpha = 0$, the target function belongs to $\F_1$ with a norm less than
 $ k Gr \sqrt{q} \eta$, leading to an overal bound of     $\frac{kGr \eta \sqrt{q \log d} }{\sqrt{n}}$ (the sparsity is not helpful in this case).

\paragraph{Multi-dimensional projection pursuit.} We assume 
 $f_\ast(x) = \sum_{j=1}^k F_j(W_j^\top x )$  with each $W_j \in \rb^{d \times s}$, having all columns with $\ell_2$-norm less than $\eta$ (note that this is a weaker requirement than having all singular values that are less than $\eta$). If we assume that each of these columns has at most~$q$ non-zeros, then the $\ell_1$-norms are less than $r \sqrt{q}$ and we may use the approximation properties described at the end of \mysec{tosphere}. We also assume that each $F_j$
 is bounded by $\eta r \sqrt{q}$ and $1$-Lipschitz continuous (with respect to the $\ell_2$-norm).   
 
 We may approach each $x \mapsto  F_j(W_j^\top x )$ by a function with $\gamma_1$-norm less than $\delta \eta r \sqrt{q}$ and uniform approximation $C(\alpha,s) \eta r \sqrt{q} \delta^{-1/( \alpha+(s-1)/2)} \log \delta$. This leads to a total approximation error of $k C(\alpha,s) G \eta r \sqrt{q}  \delta^{-1/( \alpha+(s-1)/2)} \log \delta$.

For $\alpha \geqslant 1$, the estimation error is 
$ {kGr\sqrt{q}\eta \delta \sqrt{ \log d }}/{\sqrt{n}}$, with an overall bound which is equal to
 $ C(\alpha,s) kGr\sqrt{q}\eta \big( \delta^{-1/( \alpha+(s-1)/2)} \log \delta + \frac{\delta \sqrt{\log d}}{\sqrt{n}}\big)$. With $\delta = (n/\log d)^{(\alpha+(s-1)/2)/ (2\alpha+ s-1)}$, we get an optimized bound of
$ \displaystyle \frac{ C(\alpha,s) kGr\sqrt{q}\eta (\log d)^{1/(2\alpha+s+1)}}{ n^{1/(2\alpha+s+1)} }\log n$.
 
 For $\alpha = 0$, we have the bound
 $ \frac{ C(s) kGr \sqrt{q}\eta }{ (n/d)^{1/(s+1)} }\log (n/d)$, that is we cannot use the sparsity as the problem is invariant to the chosen norm on hidden weights.

\vspace*{.5cm}

We can make the following observations:

\vspace*{-.125cm}

\BIT

\item[--] \emph{High-dimensional variable selection}: when $k=1$,   $s = q$ and $W_1$ is a projection onto $q$ variables, then we obtain a bound proportional to
$ \frac{ \sqrt{q}\eta (\log d)^{1/(2\alpha+s+1)}}{ n^{1/(2\alpha+s+1)} }\log n $, which exhibits a high-dimensional scaling in a non-linear setting. Note that beyond sparsity, no assumption is made (in particular regarding correlations between input variables), and we obtain a high-dimensional phenomenon where $d$ may be much larger than $n$.

\item[--] \emph{Group penalties}: in this paper, we only consider $\ell_1$-norm on input weights; when doing joint variable selection for all basis functions, it may be worth using a group penalty~\citep{yuan2006model,grouplasso}.
\EIT

\subsection{Relationship to kernel methods and random sampling}
\label{sec:comp}

The results presented in the two sections above were using the space $\F_1$, with an $L_1$-norm on the outputs weights (and either an $\ell_1$- or $\ell_2$-norm on input weights). As seen in Sections~\ref{sec:RKHS} and \ref{sec:kersigma}, when using an $L_2$-norm on output weights, we obtain a reproducing kernel Hilbert space $\F_2$. 

As shown in \mysec{approximation}, the space $\F_2$ is significantly smaller than $\F_1$, and in particular is not adaptive to low-dimensional linear structures, which is the main advantage of the space $\F_1$. However, algorithms for $\F_2$ are significantly more efficient, and there is no need for the conditional gradient algorithms presented in \mysec{cg}. The first possibility is to use the usual RKHS representer theorem with the kernel functions computed in \mysec{kersigma}, leading to a computation complexity of $O(n^2)$. Alternatively, as shown by~\citet{rahimi2007random}, one may instead sample $m$ basis functions that is $m$ different hidden units, keep the input weights fixed and optimize only the output layer with a squared $\ell_2$-penalty. This will quickly (i.e., the error goes down as $1/\sqrt{m}$) approach the non-parametric estimator based on penalizing by the RKHS norm $\gamma_2$. Note that this argument of random sampling has been used to study approximation bounds for neural networks with finitely many units~\citep{maiorov2000near}.

Given the usage of random sampling with $L_2$-penalties, it is thus tempting to sample weights, but now optimize an $\ell_1$-penalty, in order to get the non-parametric estimator obtained from penalizing by $\gamma_1$. When the number of samples $m$ tends to infinity, we indeed obtain an approximation  that converges to $\gamma_1$ (this is simply a uniform version of the law of large numbers). However, the rate of convergence does depend on the dimension $d$, and in general exponentially many samples would be needed for a good approximation---see~\citet[Section 6]{siammatrix} for a more precise statement in the related context of convex matrix factorizations.

 \subsection{Sufficient condition for polynomial-time algorithms}
 \label{sec:oracle}
 \label{sec:sufficient}

 In order to preserve the generalization bounds presented above, it is sufficient to be able to solve the following problem, for any $y \in \rb^n$ and $z_1,\dots,z_n \in \rb^{d+1}$:
 \BEQ
\label{eq:incrGGG}
   \sup_{ \| v\|_p=1 }  \bigg| \frac{1}{n} \sum_{i=1}^n y_i (v^\top z_i)_+^\alpha \bigg| ,
\EEQ
\emph{up to a constant factor}. That is, there exists $\kappa \geqslant 1$, such that for all $y$ and $z$, we may compute  $\hat{v}$ such that $\|\hat{v}\|_p=1$ and 
$$
 \bigg| \frac{1}{n} \sum_{i=1}^n y_i (\hat{v}^\top z_i)_+^\alpha \bigg| \geqslant 
 \frac{1}{\kappa}
 \sup_{ \| v\|_p=1 }  \bigg| \frac{1}{n} \sum_{i=1}^n y_i (v^\top z_i)_+^\alpha \bigg|.$$
This is provably NP-hard for $\alpha=0$ (see \mysec{FW1}), and  for $\alpha = 1$  (see \mysec{FW2}). If such an algorithm is available, the approximate conditional gradient presented in \mysec{condgrad} leads to an estimator with the same generalization bound. Moreover, given the strong hardness results for improper learning in the situation $\alpha=0$~\citep{klivans2006cryptographic,livni2014computational}, a convex relaxation that would consider a larger set of predictors (e.g., by relaxing $vv^\top$ into a symmetric positive-definite matrix), and obtained a constant approximation guarantee, is also ruled out.

However, this is only a sufficient condition, and a simpler sufficient condition may be obtained. In the following, we consider $\V = \{ v \in \rb^{d+1}, \ \| v\|_2 = 1\}$ and basis functions $\varphi_v(z) = (v^\top z)_+^\alpha$ (that is we specialize to the $\ell_2$-norm penalty on weight vectors). We consider a new variation norm $\hat{\gamma}_1$ which has to satisfy the following assumptions:
\BIT
\item[--] \emph{Lower-bound on $\gamma_1$}: It is defined from functions $\hat{\varphi}_{\hat{v}}$, for $\hat{v} \in \hat{\V}$, where for any $v \in \V$, there exists $\hat{v} \in \hat{\V}$ such that $\varphi_v = \hat{\varphi}_{\hat{v}}$. This implies that 
the corresponding space $\hat{\F}_1$ is larger than $\F_1$ and that if $f \in \mathcal{F}_1$, then $\hat{\gamma}_1(f) \leqslant \gamma_1(f)$.
\item[--] \emph{Polynomial-time algorithm for dual norm}: The dual norm  
  $\displaystyle \sup_{\hat{v} \in \hat{\V}}  \bigg| \frac{1}{n} \sum_{i=1}^n y_i \hat{\varphi}_{\hat{v}}(z_i)  \bigg|$
may be computed in polynomial time.
\item[--] \emph{Performance guarantees for random direction}: There exists $\kappa>0$, such that   for any vectors $z_1,\dots,z_n \in \rb^{d+1}$ with $\ell_2$-norm less than $R$, and random standard Gaussian vector $y \in \rb^{n}$,
\BEQ
\label{eq:incrG}
\sup_{\hat{v} \in \hat{\V}}  \bigg| \frac{1}{n} \sum_{i=1}^n y_i \hat{\varphi}_{\hat{v}}(x_i)  \bigg|  \leqslant  \kappa \frac{R}{\sqrt{n}}.
 \EEQ
 We may also replace the standard Gaussian vectors by Rademacher random variables.
\EIT

We can then penalize by $\hat{\gamma}$ instead of $\gamma$. Since $\hat{\gamma}_1\leqslant \gamma_1$, approximation properties are transferred, and because of the result above, the Rademacher complexity for $\hat{\gamma}_1$-balls scales as well as for $\gamma_1$-balls.
In the next section, we show convex relaxations which cannot achieve these and leave the existence or non-existence of such norm $\hat{\gamma}_1$ as an open problem.

  \section{Convex relaxations of the Frank-Wolfe step}
 \label{sec:approximation}
 
   \label{sec:gauge}

 In this section, we provide approximation algorithms for the following problem of maximizing, for a given
 $y \in \rb^n$ and vectors $z_1,\dots,z_n$:
 $$
   \sup_{ \| v\|_p=1 }  \frac{1}{n} \sum_{i=1}^n y_i (v^\top z_i)_+^\alpha 
 $$

These approximation algorithms may be divided in three families, as they may be based on (a) geometric interpretations as linear binary classification or computing Haussdorff distances (see \mysec{FW1} and \mysec{FW2}), (b) on direct relaxations, on (c)  relaxations of sign vectors. For simplicity, we only focus on the case $p=2$ (that is $\ell_2$-constraint on weights) and on $\alpha=1$ (rectified linear units). As described in \mysec{oracle}, constant-factor approximation ratios are not possible, while approximation ratios that increases with $n$ are possible (but as of now, we only obtain scalings in $n$ that provide a provable sample complexity with a polynomial time algorithm 
  which is exponential in the dimension $d$.

\subsection{Semi-definite programming relaxations}

We present two relaxations, which are of the form described in \mysec{oracle} (leading to potential generalization bounds) but do not attain the proper approximation scaling (as was checked empirically).

Note  that all relaxations   that end up being Lipschitz-continuous functions of $z$, will have at least the same scaling than the set of these functions. The Rademacher complexity of such functions is well-known, that is $1/\sqrt{n}$ for $d=1$, $\sqrt{ \frac{\log n}{n}}$ for $d=2$ and $n^{-1/d}$ for larger $d$~\citep{luxburg2004distance}. Unfortunately, the decay in $n$ is too slow to preserve generalization bounds (which would require a scaling in $1/\sqrt{n}$).

\paragraph{$d$-dimensional relaxation.}
We denote $u_i = (v^\top z_i)_+ = \frac{1}{2} v^\top z_i + \frac{1}{2}|v^\top z_i|$. We may then use 
$
2 u_i - v^\top z_i  = |v^\top z_i|
$
and, for $\| v\|_2=1$, $\| vv^\top z_i\|_2 =  |v^\top z_i| = \sqrt{ z_i ^\top vv^\top z_i}$. By denoting $V = vv^\top$, the constraint that $u_i = (v^\top z_i)_+ = \frac{1}{2} v^\top z_i + \frac{1}{2}|v^\top z_i|$ is equivalent to
$$
\| V z_i \|_2 \leqslant 2 u_i - v^\top z_i \leqslant \sqrt{ z_i^\top V z_i }
\ \mbox{ and } \ V \succcurlyeq 0, \ \tr V = 1,  \ {\rm rank}(V) = 1.
$$
We obtain a convex relaxation when removing the rank constraint, that is
$$
\sup_{ V \succcurlyeq 0, \ \tr V = 1, \ u \in \rb^n  } u^\top y
\ \mbox{ such that } \ \forall i \in \{1,\dots,n\}, \ \| V z_i \|_2 \leqslant 2 u_i - v^\top z_i \leqslant \sqrt{ z_i^\top V z_i }.
$$

  \paragraph{$(n+d)$-dimensional relaxation.}
We may go further by also considering quadratic forms in $u \in \rb^n$ defined above. Indeed, we have:
$$
(2 u_i - v^\top z_i )(2 u_j - v^\top z_j ) = |v^\top z_i| \cdot | v^\top z_j|
= | v^\top z_i z_j^\top v | 
=  |\tr V z_i z_j^\top |,  $$
which leads to a convex program in $U = uu^\top$, $V = vv^\top  $ and $J = uv^\top$, that is a semidefinite program with $d+n$ dimensions, with the constraints
$$
4U_{ij} + x_j^\top V z_i - 2\delta_i^\top J z_j - 2 \delta_j^\top J z_i
\geqslant  |\tr V z_i z_j^\top |,
$$
and the usual semi-definite contraints $\displaystyle
\left( \begin{array}{cc}  U & J \\ J^\top & V \end{array}\right) \succcurlyeq 
\left( \begin{array}{c}  u \\ v  \end{array}\right)\left( \begin{array}{c}  u \\ v  \end{array}\right)^\top $, with the additional constraint that $4U_{ii} + z_i^\top V z_i - 4\delta_i^\top J z_i  = \tr V z_i z_i^\top$.

If we add these constraints on top of the ones above, we obtain a tighter relaxation. Note that for this relaxation, we must have $\big[ (2 u_i - v^\top z_i ) - (2 u_j - v^\top z_j ) \big]$ less than a constant times $\| z_i - z_j\|_2$. Hence, the result mentioned above regarding 
Lipschitz-continuous functions and the scaling of the upper-bound for random $y$ holds (with the dependence on $n$ which is not good enough to preserve the generalization bounds with a polynomial-time algorithm).

\subsection{Relaxation of sign vectors}
By introducing a sign  vector $s \in \rb^n$ such that $s_i \in\{-1,1\}$ and $s_i v^\top x_i  = |v^\top x_i|$, we have the following relaxation with $S = ss^\top$, $V = vv^\top$ and $J = sv^\top$:
\BIT
\item[--] Usual semi-definite constraint: $\displaystyle
\left( \begin{array}{cc}  S & J \\ J^\top & V \end{array}\right) \succcurlyeq 
\left( \begin{array}{c}  s \\ v  \end{array}\right)\left( \begin{array}{c}  s \\ v  \end{array}\right)^\top $,
\item[--] Unit/trace constraints: $\diag(S)=1$ and $\tr V = 1$,
\item[--] Sign constraint: $\delta_i^\top J x_i \geqslant \max_{j \neq i} | \delta_j^\top J x_i |$.
\item[--] Additional constraint: $(x_i^\top V x_i)^{1/2} \leqslant \delta_i^\top J x_i$.
\EIT
We then need to maximize   $
\frac{1}{2n} \sum_{i=1}^n y_i \delta_i^\top J x_i + \frac{1}{2n} \sum_{i=1}^n y_i v^\top x_i
$, which leads to a semidefinte program. Again empirically, it did not lead to the correct scaling as a function of $n$ for random Gaussian vectors $y \in \rb^n$.

\section{Conclusion}

In this paper, we have provided a detailed analysis of the generalization properties of convex neural networks with positively homogenous non-decreasing activation functions. Our main new result is the adaptivity of the method to underlying linear structures such as the dependence on a low-dimensional subspace, a setting which includes non-linear variable selection in presence of potentially many input variables.

All our current results apply to estimators for which no polynomial-time algorithm is known to exist and we have proposed sufficient conditions under which convex relaxations could lead to the same bounds, leaving open the existence or non-existence of such algorithms. Interestingly, these problems have simple geometric interpretations, either as binary linear classification, or computing the Haussdorff distance between two zonotopes.

In this work, we have considered a single real-valued output; the functional analysis framework readily extends to outputs in a finite-dimensional vector-space where vector-valued measures could be used, and then apply to multi-task or multi-class problems. However, the extension to multiple hidden layers does not appear straightforward as the units of the last hidden layers share the weights of the first hidden layers, which should require a new functional analysis framework.

\bibliography{relu}

\appendix
 
 \section{Reproducing kernel Hilbert spaces for $\ell_2$-norm penalization}

 \label{app:rkhs}

  In this section, we consider a Borel probability measure $\tau$ on the compact space $\V$, and functions $\varphi_v: \X \to \rb$ such that the functions $v \mapsto \varphi_v(x)$ are measurable for all $x \in \X$.  We study the set $\F_2$  of functions $f$ such that there exists a squared-integrable function $p:\X\to \rb$ with $f(x) = \int_\V p(v) \varphi_v(x) d\tau(v)$ for all $x \in \mathcal{X}$. For $f \in \F_2$, we define
 $\gamma_2^2(f) $ as the infimum of $\int_\V p(v)^2   d\tau(v)$ over all decompositions of $f$. We now show that $\F_2$ is an RKHS with kernel $k(x,y) = \int_\V \varphi_v(x) \varphi_v(y) d\tau(v)$.
 
We follow the proof of \citet[Section~4.1]{berlinet2004reproducing} and extend it to integrals rather than finite sums. We consider the linear mapping $T: L_2( d\tau) \to \F_2$ defined by
$(T p) (x) = \int_{\V} p(v) \varphi_v(x) d\tau(v)$, with null space $\mathcal{K}$. When restricted to the orthogonal complement $\mathcal{K}^\perp$, we obtain a bijection $U$ from $\mathcal{K}^\perp$ to $\F_2$. We then define a dot-product on $\F_2$ as $
\langle f,g\rangle = \int_{\V} (U^{-1}f)(v)(U^{-1}g)(v) d \tau(v)$. 

We first show that this defines an RKHS with kernel $k$ defined above. For this, we trivially have $k(\cdot,y) \in \F_2$ for all $y \in \X$. Moreover, for any $y \in \X$, we have with 
$p = U^{-1}k(\cdot,y) \in \mathcal{K}^\perp$ and $q: v \mapsto \varphi_v(y)$, $p  - q \in \mathcal{K}$, which implies that 
$
\langle f, k(\cdot,y)\rangle = \int_{\V} (U^{-1}f)(v)p(v) d \tau(v)
= \int_{\V} (U^{-1}f)(v)q(v) d \tau(v) = \int_{\V} (U^{-1}f)(v)\varphi_v(y) d \tau(v) = T ( U^{-1} f)(y) = f(y)$, hence the reproducing property is satisfied. Thus, $\F_2$ is an RKHS.

We now need to show that the RKHS norm which we have defined is actually $\gamma_2$. For any $f \in \F_2$ such that $f = Tp$, for $p \in L_2(d \tau)$, we have $p = U^{-1} f + q$, where $q \in \mathcal{K}$. Thus,
$\int_\V p(v)^2 d\tau(v) = 
 \|p  \|^2_{L_2(d\tau)} = \| U^{-1} f \|^2_{L_2(d\tau)} + \| q\|^2_{L_2(d\tau)} = \| f\|^2 + \| q\|^2_{L_2(d\tau)}$. This implies that 
 $\int_\V p(v)^2 d\tau(v) \geqslant \| f\|^2$ with equality if and only if $q=0$. This shows that $\gamma_2(f) = \| f\|$.

\section{Approximate conditional gradient with multiplicative oracle}

\label{app:cg}

In this section, we wish to minimize a smooth convex functional $J(h)$ on for $h$ in  a Hilbert-space over a norm ball $\{ \gamma(h) \leqslant \delta\}$. A multiplicative approximate oracle outputs for any $g \in \rb^n$, $\hat{h}$ such that $\gamma(\hat{h})=1$, and
$$
\langle \hat{h}, g \rangle \leqslant \max_{\gamma(h) \leqslant 1} \langle h, g \rangle \leqslant \kappa \, \langle \hat{h}, g \rangle,
$$
for a fixed $\kappa \geqslant 1$. We now propose a modification of the conditional gradient algorithm that converges to a certain $h$ such that $\gamma(h) \leqslant \delta$ and for which
$\inf_{\gamma(h) \leqslant  \delta}  J(h)  \leqslant J(\hat{h} ) \leqslant \inf_{\gamma(h) \leqslant   \delta / \kappa}J(h) $.

We assume the smoothness of the function $J$ with respect to the norm $\gamma$, that is, for a certain $L>0$, for all $h,h'$ such that $\gamma(h) \leqslant \delta$, then
\BEQ
\label{eq:LCG}
J(h') \leqslant J(h) + \langle J'(h) , h' - h \rangle + \frac{L}{2} \gamma ( h - h ')^2.
\EEQ
We  consider the following recursion
\BEAS
\hat{h}_t & =  & - \delta  \times  \mbox{ output of approximate oracle at }   - \! J'(h_t) \\
h_{t+1} &  \in & \arg\min_{ \rho \in [0,1] }    J(  (1-\rho)  h_t + \rho     \hat{h}_t  ).
\EEAS
In the previous recursion, one may replace the minimization of $J$ on the segment $[h_t, \hat{h}_t]$ with the  minimization of its upper-bound of \eq{LCG} taken at $h=h_t$. From the recursion, all iterates are in the $\gamma$-ball of radius $\delta$.
Following the traditional convergence proof for the conditional gradient method~\citep{dunn1978conditional,jaggi}, we have, for any $\rho$ in $[0,1]$:
\BEAS
J(h_{t+1}) & \leqslant & J(h_t) - \rho \langle J'(h_t),  h_t -     \hat{h}_t   \rangle +  {2 L \rho^2 \delta^2   }  \\
& = & J(h_t)  - \rho J'(h_t) ^\top  h_t  +    {  \kappa }  \langle J'(h_t),  \frac{\hat{h}_t }{\kappa}  \rangle+ 2 L \rho^2 \delta^2\\
& \leqslant & J(h_t)  - \rho J'(h_t) ^\top  h_t -   { } \max_{\gamma(h) \leqslant \delta/ \kappa} \big\{ - \langle J'(h_t), h \rangle \big\} + 2 L \rho^2 \delta^2.
\EEAS
If we take $h_\ast $ the minimizer of $J$ on $\{\gamma(h) \leqslant \delta / \kappa\}$, we get:
\BEAS
J(h_{t+1}) & \leqslant & J(h_t) - \rho \langle J'(h_t) , h_t -h_\ast \rangle +   {2L \rho^2 \delta^2   }{}.
\EEAS
Then, by
using $J(h_t) \geqslant J(h_\ast) + \langle J'(h_t), h_\ast - h_t\rangle$, we get:
$$
J(h_{t+1}) - J(h_\ast) \leqslant ( 1- \rho) \big[ J(h_{t}) - J(h_\ast) \big]  + {2L \rho^2 \delta^2   }{}.
$$
This is valid for any $\rho \in [0,1]$. If $J(h_{t}) - J(h_\ast) \leqslant 0$ for some $t$, then by taking $\rho=0$ it remains the same of all greater $t$. Therefore, up to (the potentially never happening) point where $J(h_{t}) - J(h_\ast) \leqslant 0$, we can apply the regular proof of the conditional gradien to obtain: 
 $J(h_t) \leqslant \inf_{\gamma(h) \leqslant   \delta / \kappa}J(h)  + \frac{4 L \rho^2 \delta^2}{t}$, which leads to the desired result.
 Note that a similar reasoning may be used for $\rho = 2/(t+1)$.

 \section{Proofs for the $2$-dimensional sphere ($d=1$)}
\label{app:proof1}

In this section, we consider only the case $d=1$, where the sphere $\Sb^d$ is isomorphic to $[0,2\pi]$ (with  periodic boundary conditions). We may then compute the norm $\gamma_2$ in closed form. Indeed,
if we can decompose $g$ as $g(\theta) =\frac{1}{2\pi} \int_{0}^{2 \pi} {p}(\varphi) \sigma (  \cos(\varphi- \theta) ) d\varphi$,
 then  the decomposition of $g$ into the $k$-th frequency elements (the combination of the two $k$-th elements of the Fourier series) is equal to, for $\sigma(u) = (u)_+^\alpha$, and for $k>0$:
\BEAS
\!\!\!\! g_k(\theta) & \!\!=\!\! &  \frac{1}{\pi} \int_{0 }^{2 \pi} g(  \eta) \cos k(\theta - \eta)  d\eta \\
 & \!\!=\!\! &  \frac{1}{\pi} \int_{0 }^{2 \pi}
 \frac{1}{2\pi}  \bigg( \int_{0 }^{2 \pi}  {p}( \varphi) \sigma( \cos (\eta - \varphi) )
  d \varphi \bigg) \cos k(\theta - \eta)  d\eta \mbox{ through the decomposition of } g,\\
 & \!\!=\!\! &  \frac{1}{2\pi^2} 
 \int_{0 }^{2 \pi}  {p}( \varphi)
\bigg(  \int_{0 }^{2 \pi}  \sigma( \cos (\eta - \varphi) ) \cos k(\theta - \eta)  d\eta \bigg) d \varphi\\
 & \!\!=\!\! &  \frac{1}{2\pi^2} 
 \int_{0 }^{2 \pi}  {p}( \varphi)
\bigg(  \int_{0 }^{2 \pi}  \sigma( \cos \eta   ) \cos k(\theta - \varphi - \eta)  d\eta \bigg) d \varphi
\mbox{ by a change of variable},\\
 & \!\!=\!\! &  \frac{1}{2\pi^2} 
 \int_{0 }^{2 \pi}  {p}(  \varphi)
\bigg(  \cos k(\theta - \varphi ) \int_{0 }^{2 \pi}  \sigma( \cos   \eta   )  \cos k \eta \, d\eta \\
& & \hspace*{3cm}
+ \sin k(\theta - \varphi ) \int_{0 }^{2 \pi}  \sigma( \cos   \eta   )  \sin k \eta\,    d\eta
 \bigg) d \varphi \mbox{ by expanding the cosine},\\
 & \!\!=\!\! &  \bigg( \frac{1}{2\pi} 
 \int_{0 }^{2 \pi}  \sigma( \cos   \eta   )  \cos k \eta \,  d\eta
\bigg)
 \bigg(  \frac{1}{\pi} 
 \int_{0 }^{2 \pi}  {p}(  \varphi)
  \cos k(\theta - \varphi )   \bigg)  + 0 \mbox{ by a parity argument},\\
  & \!\!=\!\! &   \lambda_k {p}_k(\theta) \mbox{ with } \lambda_k =  \frac{1}{2\pi} 
 \int_{0 }^{2 \pi}  \sigma( \cos   \eta   )  \cos k \eta \, d\eta.
  \EEAS
  For $k=0$, the same equality holds (except that the two coefficients $g_0$ and $p_0$ are divided by $2\pi$ except of $\pi$).
  
    Thus we may express $\|p \|_{L_2(\Sb^d)}^2$ as
\BEAS
\|p \|_{L_2(\Sb^d)}^2    & = &  \sum_{k \geqslant 0}  \|p_k \|_{L_2(\Sb^d)}^2   =  \sum_{\lambda_k \neq 0 }  \|p_k \|_{L_2(\Sb^d)}^2 + \sum_{\lambda_k = 0 }  \|p_k \|_{L_2(\Sb^d)}^2   
\\
& = &  \sum_{\lambda_k \neq 0 } \frac{1}{\lambda_k^2} \|g_k \|_{L_2(\Sb^d)}^2 + \sum_{\lambda_k = 0 }  \|p_k \|_{L_2(\Sb^d)}^2   .
  \EEAS
  If we minimize over $p$, we thus need to have $\|p_k \|_{L_2(\Sb^d)}^2   =0$ for $\lambda_k=0$, and we get 
  \BEQ
  \gamma_2(g)^2 = \sum_{\lambda_k \neq 0 } \frac{1}{\lambda_k^2} \|g_k \|_{L_2(\Sb^d)}^2.
  \EEQ
We thus simply need to compute $\lambda_k$ and its decay for all values of $\alpha$, and then relate them to the smoothness properties of $g$, which is standard for Fourier series.

\subsection{Computing $\lambda_k$}

We now detail  the computation of  $\lambda_k = \frac{1}{2\pi} 
 \int_{0 }^{2 \pi}  \sigma( \cos   \eta   )  \cos k \eta \, d\eta$ for the different functions $\sigma = (\cdot)_+^\alpha$. 
We have for $\alpha = 0$:
 \BEAS
\frac{1}{2\pi} 
 \int_{0 }^{2 \pi}  1_{\cos   \eta  \geqslant 0}  \cos k \eta \, d\eta
 & = & \frac{1}{2\pi} 
 \int_{-\pi/2 }^{\pi/2}  \cos k \eta  \, d\eta =  \frac{1}{ \pi k } 
\sin \frac{k \pi }{2} \mbox{ if } k \neq 0.
\EEAS
For $k=0$ it is equal to $\frac{1}{2}$.  It is equal to zero for all other even $k$, and different from zero for all odd~$k$, with $\lambda_k$ going to zero as $1/k$.

 We have for $\alpha = 1$:
\BEAS
\frac{1}{2\pi} 
 \int_{0 }^{2 \pi}  (\cos   \eta   )_+  \cos k \eta \, d\eta
 & = & \frac{1}{2\pi} 
 \int_{-\pi/2 }^{\pi/2}  \cos   \eta     \cos k \eta  \, d\eta
\\
& = & \frac{1}{2\pi} 
 \int_{-\pi/2 }^{\pi/2} \big[ \frac{1}{2} \cos   (k+1) \eta + \frac{1}{2} \cos   (k-1) \eta    \big]  \, d\eta\\
 & = & \frac{1}{4\pi} \bigg(
 \frac{2}{k+1} \sin (k+1) \frac{\pi}{2} 
 + \frac{2}{k-1} \sin (k-1) \frac{\pi}{2} 
 \bigg)
\\
 & = & \frac{\cos \frac{k \pi}{2} }{2\pi} \bigg(
 \frac{1}{k+1} 
 - \frac{1}{k-1} 
 \bigg) =  \frac{- \cos \frac{k \pi}{2} }{ \pi( k^2 - 1 )}  \mbox{ for } k \neq 1.\EEAS
For $k=1$, it is equal to $1/4$. It is equal to zero for all other odd $k$, and different from zero for all even $k$, with $\lambda_k$ going to zero as $1/k^2$.

For $\alpha=2$, we have:
\BEAS
\frac{1}{2\pi} 
 \int_{0 }^{2 \pi}  (\cos   \eta   )_+^2  \cos k \eta \, d\eta
 & = & \frac{1}{2\pi} 
 \int_{-\pi/2 }^{\pi/2} ( \cos   \eta)^2     \cos k \eta  \, d\eta
=  \frac{1}{2\pi} 
 \int_{-\pi/2 }^{\pi/2} \frac{1 + \cos 2\eta}{2}     \cos k \eta  \, d\eta
\\
& = & \frac{1}{2\pi} 
 \int_{-\pi/2 }^{\pi/2} \big[ \frac{1}{2} \cos   k  \eta
 + \frac{1}{4} \cos   (k+2) \eta + \frac{1}{4} \cos   (k-2) \eta    \big]  \, d\eta\\
 & = & \frac{1}{4\pi} \bigg(
 \frac{2}{k} \sin k \frac{\pi}{2} +
 \frac{1}{k+2} \sin (k+2) \frac{\pi}{2} 
 + \frac{1}{k-2} \sin (k-2) \frac{\pi}{2} 
 \bigg)\\
  & = & \frac{\sin ( k \frac{\pi}{2}) }{4\pi} \bigg(
 \frac{2}{k}  -
 \frac{1}{k+2}  
 - \frac{1}{k-2}  
 \bigg)
\\
  & = & \frac{\sin ( k \frac{\pi}{2}) }{4\pi} \bigg(
  \frac{2k^2 - 8 - k^2 + 2k - k^2 - 2k}{k(k^2-4)} \bigg)
\\
  & = & \frac{- 8 \sin ( k \frac{\pi}{2}) }{4\pi k(k^2-4) }   \mbox{ for } k  \notin \{0, 2\} .
   \EEAS
For $k=0$, it is equal to $1/4$, and for $k=2$, it is equal to $1/8$.   It is equal to zero for all other even~$k$, and different from zero for all odd $k$, with $\lambda_k$ going to zero as $1/k^3$.

  The general case for $\alpha \geqslant 2$ will be shown for for all $d$ in Appendix~\ref{app:harmonics}: for all $\alpha \in \mathbb{N}$, $\lambda_k$ is different from zero for $k$ having the opposite parity of $\alpha$, with a decay as $1/k^{\alpha+1}$. All values from $k=0$ to $\alpha$ are also different from zero. All larger values with the same parity as $\alpha$ are equal to zero.

\subsection{Proof of  Prop.~\ref{prop:finites-sphere} for $d=1$}
 \label{app:proof11}
We only consider the proof for $d=1$. For the proof for general $d$, see Appendix~\ref{app:proofd1}.

Given the zero values of $\lambda_k$ given above,   if $g$ has the opposite parity than $\alpha$ (that is, is even when $\alpha$ is odd, and vice-versa), then we may define ${p}$ through its Fourier series, which is obtained by multiplying the one of $g$ by a strictly positive sequence  growing as $k^{\alpha+1}$.

Thus, if $g$ is such that its $(\alpha+1)$-th order derivative is squared-integrable, then ${p}$ defined above is squared-integrable, that is, $g \in \G_2$. Moreover, if all derivatives of order less than $(\alpha+1)$ are bounded by $\eta$, ${p}$ is squared-integrable and $\| p\|_{L_2(\Sb^d)}^2$ is upper-bounded by a constant times $\eta^2$, i.e., $\gamma_2(g)^2 \leqslant C(\alpha)^2\eta^2$.

Note that we could relax the assumption that $g$ is even (resp.~odd) by adding all trigonometric polynomials 
 of order less than $\alpha$.

 \subsection{Proof of Prop.~\ref{prop:approx-sphere}  for $d=1$}
  \label{app:appsphere2}
Again, we only consider the proof for $d=1$. For the proof for general $d$, see Appendix~\ref{app:proofd2}. 

Without loss of generality, we assume that $\eta=1$. For $d=1$, we essentially want to approximate a Lipschitz-continuous function by a function which is
  $(\alpha+1)$-times differentiable.
  
  For $\alpha=0$, then the function $g$ is already in $\G_2$ with a norm less than one, because Lipschitz-continuous functions are almost everywhere differentiable with bounded derivative~\citep{adams2003sobolev}. We thus now consider $\alpha>0$.
  
  Given $\lambda_k$ defined above and $r \in (0,1)$, we define $\hat{{p}}$ through
$$\hat{p}_k(\theta) = \sum_{k, \lambda_k \neq 0} \lambda_k^{-1} g_k(\theta) r^k.$$
Our goal is to show that for $r$ chosen close enough to $1$, then the function $\hat{g}$ defined from  $\hat{p}$ has small enough norm $\gamma_2(\hat{g}) \leqslant \| \hat{p}\|_{L_2(\Sb^d)}$, and is close to $g$.

\paragraph{Computation of norm.}
We have
\BEAS
\| \hat{p} \|^2_{L_2(\Sb^d)}   
& = & \sum_{k, \lambda_k \neq 0} \lambda_k^{-2} r^{2k}  \| g_k \|^2_{L_2(\Sb^d)} 
.
\EEAS
Since $g$ is $1$-Lipschitz-continuous with constant $1$, then it has a squared-integrable derivative $f=g'$ with norm less than 1~\citep{adams2003sobolev}, so that
$$
\| f \|^2_{L_2(\Sb^d)}  = 
\sum_{k \geqslant 0} \| f_k \|^2_{L_2(\Sb^d)} \leqslant 1.$$

This implies that using $\lambda_k^{-1} = O(k^{\alpha+1})$:
  $$
  \| \hat{{p}} \|_{L_2(\Sb^d)}^2 \leqslant \lambda_0^{-2} \| g_0 \|^2_{L_2(\Sb^d)} + \| g'  \|_{L_2(\Sb^d)}^2
  \max_{ k \geqslant 1, \lambda_k \neq 0} r^{2k} \lambda_k^{-2} k^{-2}
  \leqslant  C + C \| g'  \|_{L_2(\Sb^d)}^2
  \max_{ k \geqslant 1} r^{2k} k^{2 \alpha},
  $$
  because $\| g_0 \|^2_{L_2(\Sb^d)}$ and $\| f \|^2_{L_2(\Sb^d)}$ are bounded by 1.

  We may now compute the derivative of $k \mapsto  r^{2k} k^{2 \alpha}$ with respect to $k$ (now considered a real number),
  that is $2\alpha k^{2\alpha -1}  r^{2k}
  + k^{2 \alpha} r^{2k} 2 \log r$, which is equal to zero for $\frac{ \alpha}{k}  = \log \frac{1}{r}$, that is
  $k = \frac{\alpha}{ \log \frac{1}{r}}$, the maximum being then
  $e^{-2\alpha} \big( \frac{\alpha}{ \log \frac{1}{r}} \big) ^{2 \alpha} = O ( ( 1- r)^{-2\alpha})$, by using the concavity of the logarithm.
    Thus 
$ \displaystyle   \| \hat{{p}} \|_{L_2(\Sb^d)}  \leqslant C ( 1- r)^{-\alpha}.$
  This defines $\hat{g}$ with $\gamma(\hat{g}) \leqslant C ( 1- r)^{-\alpha}$.

  \paragraph{Computing distance between $\hat{g}$ and $g$.}
  We have:
  \BEAS
 \hat{g}(\theta)
  & = & \sum_{k\geqslant 0} g_k(\theta) r^k =   \sum_{k>  0}  \frac{1}{\pi} \int_{0 }^{2 \pi} g(  \eta) r^k \cos k(\theta - \eta)  d\eta  
  + \frac{1}{2 \pi} \int_0^{2\pi} g(\eta) d\eta\\
  & = & \frac{1}{\pi} \int_0^{2\pi}    \bigg(
   \sum_{k\geqslant 0} r^k \cos k(\theta - \eta) 
  \bigg)  g(  \eta) d\eta - \frac{1}{2 \pi} \int_0^{2\pi} g(\eta) d\eta \\
   & = &\frac{1}{\pi}  \int_0^{2\pi}    {\rm Real} \bigg( \frac{1}{1 - r e^{i (\theta - \eta) }}
  \bigg)  g(  \eta) d\eta  - \frac{1}{2 \pi} \int_0^{2\pi} g(\eta) d\eta
 \\
  & = &\frac{1}{\pi}  \int_0^{2\pi}    \bigg( \frac{1 - r \cos  (\theta - \eta) }{
  (1 - r \cos  (\theta - \eta) )^2 + r^2 (\sin  (\theta - \eta))^2
   }
  \bigg)  g(  \eta) d\eta  - \frac{1}{2 \pi} \int_0^{2\pi} g(\eta) d\eta
\\
  & = &\frac{1}{\pi}  \int_0^{2\pi}    \bigg( \frac{1 - r \cos  (\theta - \eta) }{
 1 + r^2 - 2 r  \cos  (\theta - \eta) 
   }
  \bigg)  g(  \eta) d\eta  - \frac{1}{2 \pi} \int_0^{2\pi} g(\eta) d\eta
\\
 & = &\frac{1}{2\pi}  \int_0^{2\pi}    \bigg( \frac{ 1 - r^2 +  1 + r^2  - 2 r \cos  (\theta - \eta) }{
 1 + r^2 - 2 r  \cos  (\theta - \eta) 
   }
  \bigg)  g(  \eta) d\eta  - \frac{1}{2 \pi} \int_0^{2\pi} g(\eta) d\eta
\\
 & = &\frac{1}{2\pi}  \int_0^{2\pi}    \bigg( \frac{ 1 - r^2  }{
 1 + r^2 - 2 r  \cos  (\theta - \eta) 
   }
  \bigg)  g(  \eta) d\eta.   \EEAS
  We have, for any $\theta \in [0,2\pi]$
  \BEAS
 \big|  \hat{g}(\theta) - g(\theta) \big| 
  & = &\bigg| \frac{1}{2 \pi}  \int_0^{2\pi}    \bigg( \frac{1 - r^2 }{
 1 + r^2 - 2 r  \cos  (\theta - \eta) 
   }
  \bigg)  \big[ g(  \eta)  - g(\theta) \big] d\eta  \bigg| \\
 & \leqslant & \frac{1}{2 \pi}  \int_0^{2\pi}    \bigg( \frac{1 - r^2 }{
 1 + r^2 - 2 r  \cos  (\theta - \eta) 
   }
  \bigg)  \big| g(  \eta)  - g(\theta) \big| d\eta  \  \\ 
  & = & \frac{1}{2 \pi}  \int_{\pi}^{\pi}    \bigg( \frac{1 - r^2 }{
 1 + r^2 - 2 r  \cos  \eta
   }
  \bigg)  \big| g(  \theta )  - g(\theta + \eta) \big| d\eta  \  \mbox{ by periodicity}, \\
    & = & \frac{1}{ \pi}  \int_{\pi/2}^{\pi/2}    \bigg( \frac{1 - r^2 }{
 1 + r^2 - 2 r  \cos  \eta
   }
  \bigg)  \big| g(  \theta )  - g(\theta + \eta) \big| d\eta  \  \mbox{ by parity of } g, \\
  & \leqslant & \frac{1}{  \pi} \int_{\pi/2}^{\pi/2}     \bigg( \frac{1 - r^2 }{
 1 + r^2 - 2 r  \cos \eta
   }
  \bigg) \sqrt{2}| \sin \eta |   d\eta    \\  
  & & \hspace*{1cm} \mbox{ because the distance on the sphere is bounded by the sine}, \\
  & \leqslant & \frac{2}{  \pi}  \int_0^{\pi}    \bigg( \frac{1 - r^2 }{
 1 + r^2 - 2 r  \cos   \eta 
   }
  \bigg)    \sin \eta  \,   d\eta    \\  
& = & \frac{1}{  \pi}  \int_0^{1}    \bigg( \frac{1 - r^2 }{
 1 + r^2 - 2 r  t
   }
  \bigg)    dt    \mbox{ by the change of variable } t = \cos \theta , \\  
&\leqslant  & C ( 1 - r) \int_0^{1}    \bigg( \frac{1}{
 1 + r^2 - 2 r  t
   }
  \bigg)    dt    \\  
&=  & C ( 1 - r)  \bigg[    \frac{-1}{2r} \log ( 
 1 + r^2 - 2 r  t)
  \bigg]_0^1  =   C ( 1 - r)       \frac{ 1}{2r} \log \frac{1+r^2}{(1-r)^2}     .
\EEAS
It can be easily checked that for any $r \in (1/2,1)$, the last function is less than a constant times
$ \frac{5}{2} C ( 1 - r ) \log \frac{1}{1-r}  $. 
We thus get for $\delta  $ large enough, by taking $r =   1 - ( C/ \delta)^{1/\alpha} \in (1/2,1)$,  an error of
$$
( C/ \delta)^{1/\alpha} \log ( C/ \delta)^{-1/\alpha}  = O(  \delta^{-1/\alpha} \log \delta ).
$$ This  leads to the desired result.

 \section{Approximations on the $d$-dimensional sphere}

In this appendix, we first review tools from spherical harmonic analysis, before proving the two main propositions regarding the approximation properties of the Hilbert space  $\G_2$. Using spherical harmonics in our set-up is natural and is common in the analysis of ridge functions~\citep{petrushev1998approximation} and zonotopes~\citep{bourgain1988projection}.

 \subsection{Review of spherical harmonics theory}
 \label{app:review}
 
 In this section, we review relevant concepts from spherical harmonics. See~\citet{2012arXiv1205.3548F,atkinson2012spherical} for more details. Spherical harmonics may be seen as extension of Fourier series to spheres in dimensions more than 2 (i.e., with our convention $d \geqslant 1$).

For $d \geqslant 1$, we consider the sphere $\Sb^{d} = \{ x \in \rb^{d+1}, \ \| x\|_2 = 1\} \subset \rb^{d+1}$, as well as its normalized rotation-invariant measure $\tau_d$ (with mass $1$). We denote by $\omega_d = \frac{2 \pi^{(d+1)/2}}{ \Gamma((d+1)/2) }$ the surface area of the sphere~$\Sb^d $.

\paragraph{Definition and links with Laplace-Beltrami operator.}
For any $k \geqslant 1$ (for $k=0$, the constant function is the corresponding basis element), there is an orthonormal basis of spherical harmonics, $Y_{kj}:\Sb^d  \to \rb$, $ 1\leqslant j
\leqslant N(d,k) = \frac{2k+d-1}{k} { k+d -2 \choose d-1}$. They are such
$
\langle
Y_{ki}, Y_{si}
\rangle_{\Sb^d} =  \int_{\Sb^d } Y_{ki}(x) Y_{sj} d\tau_d(x) = \delta_{ij}\delta_{sk}$. 

Each of these harmonics may be obtained from homogeneous polynomials in $\rb^d$ with an Euclidean Laplacian equal to zero, that is, if we define a function $H_k(y) = Y_{ki}( y / \| y\|_2 ) \| y\|_2^k$ for $y \in \rb^{d+1}$, then $H_k$ is a homogeneous polynomial of degree $k$ with zero Laplacian. From the relationship between the Laplacian in $\rb^{d+1}$ and the Laplace-Beltrami operator $\Delta$ on $\Sb^d$, $Y_{ki}$ is an eigenfunction of $\Delta$ with eigenvalue $-k(k+d-1)$. Like in Euclidean spaces, the Laplace-Beltrami operator may be used to characterize differentiability of functions defined on the sphere~\citep{2012arXiv1205.3548F,atkinson2012spherical}.

\paragraph{Legendre polynomials.}
We have the addition formula
$$
\sum_{j=1}^{N(d,k)} Y_{kj}(x) Y_{kj}(y) =  N(d,k)  P_k(x^\top y),
$$
where $P_k$ is a Legendre polynomial of degree $k$ and dimension $d+1$, defined as (Rodrigues' formula):
$$
P_k(t) = (-1/2)^k \frac{ \Gamma(d/2)}{\Gamma( k + d /2)} ( 1- t^2)^{(2-d)/2} \Big(
\frac{d}{dt}
\Big)^k ( 1 - t^2)^{k+(d-2)/2}.
$$
They are also referred to as Gegenbauer polynomials.
For $d=1$, $P_k$ is the $k$-th Chebyshev polynomial, such that $P_k(\cos \theta) = \cos (k \theta)$ for all $\theta$ (and we thus recover the Fourier series framework of Appendix~\ref{app:proof1}).
For $d=2$, $P_k$ is the usual Legendre polynomial.  

The polynomial $P_k$ is even (resp.~odd) when $k$ is even (resp.~odd), and we have
$$
\int_{-1}^1  P_k(t) P_j(k)  (1-t^2)^{(d-2)/2} dt =  \delta_{jk} \frac{\omega_d}{\omega_{d-1}} \frac{1}{N(d,k)}.
$$

 For small $k$, we have    $P_0(t) = 1$, $P_1(t) = t$, and $P_2(t) = \frac{ (d+1) t^2 - 1}{d}$.

The Hecke-Funk formula leads to, for any linear combination $Y_k$ of $Y_{kj}$, $j \in \{1,\dots,N(d,k)\}$:
$$
\int_{\Sb^d } f(x^\top y) Y_k(y) d\tau_d(y) = \frac{\omega_{d-1}}{\omega_d} Y_k(x) \int_{-1}^1 f(t) P_k(t) ( 1- t^2)^{(d-2)/2} dt.
$$

\paragraph{Decomposition of functions in $L_2(\Sb^d)$.}

Any function $g: \Sb^d  \to \rb$, such that $\int_{\Sb^d } g(x) d \tau_d(x) =0$ may be decomposed
as
\BEAS
g(x) & = &  \sum_{k=1}^\infty \sum_{j=1}^{N(d,k)}  \langle Y_{kj}, g \rangle Y_{kj}(x)  =  \sum_{k=1}^\infty \sum_{j=1}^{N(d,k)} \int_{\Sb^d } Y_{kj}(y) Y_{kj}(x) g(y) d\tau_d(y)
\\
& = &  \sum_{k=1}^\infty g_k(x) \mbox{ with } g_k(x) = N(d,k) \int_{\Sb^d }g(y) P_k(x^\top y) d\tau_d(y).
\EEAS
This is the decomposition in harmonics of degree $k$.
Note that
$$
g_1(x) = x^\top \bigg[
d \int_{\Sb^d } y g(y) d\tau_d(y)
\bigg]
$$
is the linear part of $g$ (i.e., if $g(x) = w^\top x$, $g_1=g$). Moreover, if $g$ does not have zero mean, we may define $g_0(x) = \int_{\Sb^d} g(y) d \tau_d(y)$ as the average value of $g$. Since the harmonics of different degrees are orthogonal to each other, we have the Parseval formula:
$$
\| g\|_{\Sb^d}^2 = \sum_{k \geqslant 0} \| g_k\|_{\Sb^d}^2.
$$

\paragraph{Decomposition of functions of one-dimensional projections.}
If $g(x) = \varphi(x^\top v)$ for $v \in \Sb^d$ and $\varphi: [-1,1] \to \rb$, then
\BEAS
g_k(x) & = & N(d,k) \int_{\Sb^d} \varphi(v^\top y) P_k(x^\top y) d \tau (y) \\
& = & N(d,k) \frac{\omega_{d-1}}{\omega_d} P_k(x^\top v) \int_{-1}^1 \varphi(t) P_k(t) ( 1 - t^2)^{(d-2)/2} dt\\
& = & \bigg(
\frac{\omega_{d-1}}{\omega_d} P_k(x^\top v) \int_{-1}^1 \varphi(t) P_k(t) ( 1 - t^2)^{(d-2)/2} dt \bigg)
\sum_{j=1}^{N(d,k)} Y_{kj}(x) Y_{kj}(y),
\EEAS
and
\BEAS
\| g_k \|_{L_2(\Sb^d)}^2
& = & 
\bigg(
\frac{\omega_{d-1}}{\omega_d} P_k(x^\top v) \int_{-1}^1 \varphi(t)P_k(t) ( 1 - t^2)^{(d-2)/2} dt \bigg)^2
\sum_{j=1}^{N(d,k)}   Y_{kj}(y)^2 \\
& = & 
\bigg(
\frac{\omega_{d-1}}{\omega_d} P_k(x^\top v) \int_{-1}^1 \varphi(t)P_k(t) ( 1 - t^2)^{(d-2)/2} dt \bigg)^2
N(d,k) P_k(1)\\
& = & 
\bigg(
\frac{\omega_{d-1}}{\omega_d} P_k(x^\top v) \int_{-1}^1 \varphi(t)P_k(t) ( 1 - t^2)^{(d-2)/2} dt \bigg)^2
N(d,k).
\EEAS

\subsection{Computing the RKHS norm $\gamma_2$}
\label{app:harmonics}
Like for the case $d=1$, we may compute the RKHS norm $\gamma_2$ of a function $g$ in closed form given its decomposition in the basis of spherical harmonics $g = \sum_{k \geqslant 0} g_k$.
If we can decompose $g(x) = \int_{\Sb^d } {p}(w) \sigma( w^\top x ) d \tau_d(w)$ for a certain function ${p}: \Sb^d \to \rb$, then we have, for $k \geqslant 0$:
\BEAS
g_k(x) & = &  N(d,k) \int_{\Sb^d }g(y) P_k(x^\top y) d\tau_d(y) \\
& = &  N(d,k) \int_{\Sb^d } \int_{\Sb^d }   {p}(w) \sigma(w^\top y)  P_k(x^\top y) d\tau_d(y)  d\tau_d(w) \\
& = &  N(d,k) \int_{\Sb^d }  {p}(w) \bigg( \int_{\Sb^d }    \sigma(w^\top y)  P_k(x^\top y) d\tau_d(y)  \bigg) d\tau_d(w) \\
& = &  \frac{\omega_{d-1}}{\omega_d} N(d,k) \int_{\Sb^d }  {p}(w) P_k(x^\top w)  \bigg( \int_{-1}^1   \sigma(t)   P_k(t) (1-t^2)^{(d-2)/2} dt  \bigg) d\tau_d(w)   \\
&  &\hspace*{8cm} \mbox{ using the Hecke-Funk formula,}\\
&= &  \lambda_k {p}_k(x) \mbox{ with } \lambda_k = \frac{\omega_{d-1}}{\omega_d}   \int_{-1}^1   \sigma(t)   P_k(t) (1-t^2)^{(d-2)/2} dt.
 \EEAS


If $k \equiv \alpha  \mbox{ mod. } 2$, then 
$
\lambda_k \propto \frac{1}{2} \int_{-1}^1   t^\alpha  P_k(t) (1-t^2)^{(d-2)/2} dt  = 0
$, for $k > \alpha$ since $P_k$ is orthogonal to all polynomials of degree strictly less than $k$ for that dot-product. Otherwise, $\lambda_k \neq 0$, since $t^\alpha$ may be decomposed as combination with non-zero coefficients of polynomials $P_j$ for $j \equiv \alpha \mbox{ mod. } 2$, $j \leqslant \alpha$. 

We now provide an explicit formula extending the proof technique (for $\alpha=1$) of~\citet{schneider1967problem} and \citet{bourgain1988projection}  to all values of $\alpha$. See also~\citet{mhaskar2006weighted}.

We have, by $\alpha$ successive integration by parts, for $k \geqslant \alpha + 1$:
\BEAS
 & & \int_0^1 t^\alpha 
\Big(
\frac{d}{dt}
\Big)^k ( 1 - t^2)^{k+(d-2)/2} dt \\
& = & (-1)^\alpha \alpha!  \int_0^1  
\Big(
\frac{d}{dt}
\Big)^{k-\alpha} ( 1 - t^2)^{k+(d-2)/2} dt =  -  (-1)^\alpha \alpha!   
\Big(
\frac{d}{dt}
\Big)^{k-\alpha-1} ( 1 - t^2)^{k+(d-2)/2} \bigg|_{t=0}\\
& = &- (-1)^\alpha \alpha!   
\Big(
\frac{d}{dt}
\Big)^{k-\alpha-1} \sum_{j \geqslant 0} {k+(d-2)/2 \choose j} (-1)^j t^{2j}  \bigg|_{t=0}
\mbox{ using the binomial formula},\\
& = & -(-1)^\alpha \alpha!   
\Big(
\frac{d}{dt}
\Big)^{k-\alpha-1}  {k+(d-2)/2 \choose j} (-1)^j t^{2j}  \bigg|_{t=0} \mbox{ for } 2j = k-\alpha-1,
\\
& = & -(-1)^\alpha \alpha!   
  {k+(d-2)/2 \choose j} (-1)^j  (2j)!    \mbox{ for } 2j = k-\alpha-1.
\EEAS
Thus
\BEAS
\lambda_k
& = & -\frac{\omega_{d-1}}{\omega_d}  
 (-1/2)^k \frac{ \Gamma(d/2)}{\Gamma( k + d /2)}
  (-1)^\alpha \alpha!   
  {k+(d-2)/2 \choose j} (-1)^j  (2j)!    \mbox{ for } 2j = k-\alpha-1,
  \\
  & = & -\frac{\omega_{d-1}}{\omega_d}  
 (-1/2)^k \frac{ \Gamma(d/2)}{\Gamma( k + d /2)}
  (-1)^\alpha \alpha!   
  \frac{\Gamma(k+\frac{d}{2}) }{\Gamma(j+1) \Gamma( k + \frac{d}{2} - j) }(-1)^j  \Gamma(2j+1) \\
    & = & -\frac{\omega_{d-1}}{\omega_d}  
 (-1/2)^k \frac{ \Gamma(d/2)}{\Gamma( k + d /2)}
  (-1)^\alpha \alpha!   
  \frac{\Gamma(k+\frac{d}{2}) }{\Gamma(\frac{k}{2} - \frac{\alpha}{2} + \frac{1}{2}) \Gamma( \frac{k}{2} + \frac{d}{2} + \frac{\alpha}{2} + \frac{1}{2}) }(-1)^{(k-\alpha-1)/2}  \Gamma(k - \alpha)\\
    & = & \frac{d-1}{2\pi}
 \frac{ \alpha! (-1)^{(k-\alpha-1)/2}  }{2^k}  
  \frac{ \Gamma(d/2) \Gamma(k - \alpha) }{\Gamma(\frac{k}{2} - \frac{\alpha}{2} + \frac{1}{2}) \Gamma( \frac{k}{2} + \frac{d}{2} + \frac{\alpha}{2} + \frac{1}{2}) }     .
\EEAS
By using Stirling formula $\Gamma(x) \approx x^{x-1/2} e^{-x} \sqrt{2\pi}$, we get an equivalent when $k$ 
or $d$ tends to infinity as a constant (that depends on $\alpha$) times 
$$
d^{d/2+1/2} k^{k/2 - \alpha/2 + 1/2} ( k + d)^{-k/2-d/2-\alpha/2}.
$$
Note that all exponential terms cancel out. Moreover, when $k$ tends to infinity and $d$ is considered constant, then we get the equivalent  $k^{-d/2-\alpha-1/2}$, which we need for the following sections.
Finally, when $d$ tends to infinity and $k$ is considered constant, then we get the equivalent  $d^{-\alpha/2 - k/2 + 1/2}$.

We will also need expressions of $\lambda_k$ for $k=0$ and $k=1$.
For $k = 0$, we have:
\BEAS
\int_0^1 t^\alpha 
  ( 1 - t^2)^{d/2-1} dt
& = &  \int_0^1 (1-u)^{\alpha/2} u^{d/2-1} \frac{du}{2\sqrt{1-u}} \mbox{ with } t = \sqrt{1-u},\\
& = &  \frac{1}{2} \int_0^1 (1-u)^{\alpha/2+1/2 -1 } u^{d/2-1}  du=\frac{1}{2} \frac{\Gamma(\alpha/2+1/2 )\Gamma(d/2) }{\Gamma(\alpha/2+1/2+d/2)},
\EEAS
using the normalization factor of the Beta distribution.
This leads to
\BEAS
\lambda_0 & = &  \frac{\omega_{d-1}}{\omega_d} \frac{1}{2} \frac{\Gamma(\alpha/2+1/2 )\Gamma(d/2) }{\Gamma(\alpha/2+1/2+d/2)} = \frac{d-1}{2\pi}
 \frac{1}{2} \frac{\Gamma(\alpha/2+1/2 )\Gamma(d/2) }{\Gamma(\alpha/2+1/2+d/2)} ,
\EEAS
which is equivalent to $d^{1/2 -\alpha/2}$ as $d$ tends to infinity.

Moreover, for $k=1$, we have (for $\alpha>0$):
\BEAS
\int_0^1 t^\alpha 
\Big(
\frac{d}{dt}
\Big) ( 1 - t^2)^{d/2} dt
&\!\!\! = \!\!\!& -\alpha  \int_0^1 t^{\alpha-1}
 ( 1 - t^2)^{d/2} dt
 = -\alpha  \int_0^1 (1-u)^{\alpha/2-1/2}
 u^{d/2} \frac{du}{2\sqrt{1-u}} \\
& \!\!\!= \!\!\!& 
 -\alpha/2  \int_0^1 (1-u)^{\alpha/2-1}
 u^{d/2+1-1} du
 =  - \alpha/2  \frac{\Gamma(\alpha/2) \Gamma(d/2+1)}{\Gamma(\alpha/2+d/2+1)}.
\EEAS
This leads to, for $\alpha>0$:
\BEAS
\lambda_1 & = &  (-1/2) \frac{2}{d} \frac{d-1}{2\pi} (- \alpha/2)  \frac{\Gamma(\alpha/2) \Gamma(d/2+1)}{\Gamma(\alpha/2+d/2+1)} =   \frac{d-1}{d}\frac{\alpha}{4\pi}   \frac{\Gamma(\alpha/2) \Gamma(d/2+1)}{\Gamma(\alpha/2+d/2+1)},
\EEAS
which is equivalent to $d^{  -\alpha/2}$ as $d$ tends to infinity.

Finally, for $\alpha=0$, $\lambda_1 = \frac{d-1}{2d \pi}$.
More generally, we have $|\lambda_k| \sim C(d) k^{-(d-1)/2 - \alpha  - 1}$.

\paragraph{Computing the RKHS norm.}
Given $g$ with the correct parity, then we have
$$
\gamma_2(g)^2 = \sum_{ k \geqslant 0} \lambda_k^{-2} \| g_k \|_{L_2(\Sb^d)}^2.
$$

\subsection{Proof of Prop.~\ref{prop:finites-sphere} for $d>1$}

\label{app:proofd1}

Given the expression of $\lambda_k$ from the section above, the proof is essentially the same than for $d=1$ in Appendix~\ref{app:appsphere2}.
If $g$ is $s$-times differentiable with all derivatives bounded uniformly by $\eta$, then is equal to
$g = \Delta^{s/2} f$ for a certain function $f$ such that $\| f\|_{L_2(\Sb^d)} \leqslant \eta$ (where $\Delta$ is the Laplacian on the sphere)~\citep{2012arXiv1205.3548F,atkinson2012spherical}.

Moreover, since $g$ has the correct parity,  
$$\gamma_2(g)^2 \leqslant \| {p}\|_{L_2(\Sb^d)}^2
\leqslant   \sum_{k \geqslant 1, \lambda_k \neq 0 } \lambda_k^{-2} \| g_k \|_{L_2(\Sb^d)}^2
$$
  Also,  $g_k$ are eigenfunctions of the Laplacian with eigenvalues $k(k+d-1)$. Thus, we have
$$
\| g_k \|_2^2
\leqslant \big\| f_k \big\|_{L_2(\Sb^d )}^2 \frac{1}{ \big[ k(k+d-1) \big]^{s} } \leqslant \big\| f_k \big\|_{L_2(\Sb^d )}^2 / k^{2s},
$$
leading to
$ \displaystyle
\gamma_2(g)^2 \leqslant \max_{k \geqslant 2} \lambda_k^{-2}  k^{-2s} \| f\|_{L_2(\Sb^d )}^2
 \leqslant \max_{k \geqslant 2} k^{d-1 + 2 \alpha + 2}  k^{-2s} \| f\|_{L_2(\Sb^d )}^2
 \leqslant C(d) \eta^2$, 
 if $s \geqslant (d-1)/2 + \alpha + 1$, which is the desired result.

\subsection{Proof of Prop.~\ref{prop:approx-sphere} for $d>1$}
\label{app:proofapp}
\label{app:proofd2}
 Without loss of generality we assume that $\eta=1$, and we follow the same proof as for $d=1$ in Appendix~\ref{app:appsphere2}.
  We have assumed that for all $x, y \in \Sb^d $, $|g(x)-g(y)| \leqslant \eta \| x - y \|_2
 = \eta \sqrt{ 2} \sqrt{  1 - x^\top y}
 $.   
 Given the decomposition in the $k$-th harmonics, with 
 \BEAS
 g_k(x) & = &  N(d,k) \int_{\Sb^d }g(y) P_k(x^\top y) d\tau_d(y),
  \EEAS
  we may now   define, for $r<1$:
 \BEAS
 \hat{{p}}(x) & = & \sum_{k, \lambda_k \neq 0 }   \lambda_k^{-1} r^k 
 g_k(x),
 \EEAS
 which is always defined when $r \in (0,1)$ because the series is absolutely convergent. This defines a function $\hat{g}$ that will have a finite $\gamma_2$-norm and be close to $g$.
 
 \paragraph{Computing the norm.}
Given our assumption regarding the Lipschitz-continuity of $g$, we have $ g  = \Delta^{1/2} f$ with $f \in L_2(\Sb^d)$ with norm less than $1$~\citep{atkinson2012spherical}. Moreover $
 \| g_k \|_{L_2(\Sb^d )}^2 \leqslant C k^2 \| f_k \|_{L_2(\Sb^d )}^2$.
 We have
 \BEAS
 \|  \hat{{p}}\|_{L_2(\Sb^d )}^2 & = & \sum_{k, \lambda_k \neq 0 }   \lambda_k^{-2} r^{2k} \| g_k \|_{L_2(\Sb^d )}^2 \\
& \leqslant
& C(d , \alpha) \max_{k \geqslant 0} k^{d-1 + 2 \alpha }  r^{2k}  \|f \|_{L_2(\Sb^d )}^2 \mbox{ because } 
\lambda_k = \Omega( k^{-d/2-\alpha-1/2}), \\
& \leqslant
& C(d , \alpha) (1-r)^{-d+1-2\alpha} \mbox{ (see Appendix~\ref{app:appsphere2}).}
\EEAS

The function $\hat{{p}}$ thus defines a function $\hat{g} \in \G_1$ by $\hat{g}_k  = \lambda_k {p}_k$, for which $  \gamma_2(g) \leqslant  C(d , \alpha) (1-r)^{(-d+1)/2-\alpha}$. 

\paragraph{Approximation properties.}

We now show that $g$ and $\hat{g}$ are close to each other. Because of the parity of $g$, we have $\hat{g}_k = r^k g_k$.
We have, using Theorem 4.28 from~\citet{2012arXiv1205.3548F}:
\BEAS
\hat{g}(x)
& = & \sum_{k \geqslant 0} r^k = \sum_{k \geqslant 0} r^k  N(d,k) \int_{\Sb^d }g(y) P_k(x^\top y) d\tau_d(y) \\
& = &  \int_{\Sb^d }g(y)  \bigg(  \sum_{ k \geqslant 0 } r^k N(d,k)  P_k(x^\top y)  \bigg) d\tau_d(y) 
\\
& = &  \int_{\Sb^d }g(y)   \frac{ 1- r^2}{(1 + r^2 - 2r (x^\top y))^{(d+1)/2} } d\tau_d(y) .
\EEAS
    
Moreover, following~\citet{bourgain1988projection}, we have:
 \BEAS
  g(x) -  \hat{g}(x) &
  = &   \int_{\Sb^d }\big[ g(x) - g(y) ]   \frac{ 1- r^2}{(1 + r^2 - 2r (x^\top w))^{(d+1)/2} } d\tau_d(y) \\
 &
  = &  2 \int_{\Sb^d , \ y^\top x \geqslant 0 }\big[ g(x) - g(y) ]   \frac{ 1- r^2}{(1 + r^2 - 2r (x^\top w))^{(d+1)/2} } d\tau_d(y) \mbox{ by parity of } g, \\
|   g(x) -  \hat{g}(x) | &
  \leqslant &   \int_{\Sb^d , \ y^\top x \geqslant 0 } \sqrt{2} \sqrt{ 1 - x^\top y}    \frac{ 1- r^2}{(1 + r^2 - 2r (x^\top y))^{(d+1)/2} } d\tau_d(y).
  \EEAS
  As shown by~\citet[Eq.~(2.13)]{bourgain1988projection}, this is less than a constant that depends on $d$ times $ ( 1- r) \log \frac{1}{1-r}$.
 We thus get for $\delta $ large enough, by taking $1-r =  ( C/ \delta)^{1/(\alpha+(d-1)/2)} \in (0,1)$, an error of
$$
( C/ \delta)^{1/(\alpha+(d-1)/2)}  \log ( C/ \delta)^{-{1/(\alpha+(d-1)/2)} } ] = O(  \delta^{{1/(\alpha+(d-1)/2)} } \log \delta ),
$$
which leads to the desired result.

 \subsection{Finding differentiable functions which are not in $\G_2$}
 
 \label{app:tightness}

 In this section, we consider functions on the sphere which have the proper parity with respect to $\alpha$, which are $s$-times differentiable with bounded derivatives, but which are not in $\G_2$. We then provide optimal approximation rates for these functions.
 
 We assume that $s-\alpha$ is even, we consider $g(x)=(w^\top x)_+^s$ for a certain arbitrary $w \in \Sb^d$. As computed at the end of Appendix~\ref{app:review}, we have
 $ \| g_k \|_{L_2(\Sb^d)}^2 = \bigg(
\frac{\omega_{d-1}}{\omega_d} P_k(x^\top v) \int_{-1}^1 \varphi(t)P_k(t) ( 1 - t^2)^{(d-2)/2} dt \bigg)^2
N(d,k)$. Given the computations from Appendix~\ref{app:harmonics}, 
$\bigg(
\frac{\omega_{d-1}}{\omega_d} P_k(x^\top v) \int_{-1}^1 \varphi(t)P_k(t) ( 1 - t^2)^{(d-2)/2} dt \bigg)^2$ goes down to zero as $k^{-d-2s-1}$, while $N(d,k)$ grows as $k^{d-1}$.  In order to use the computation of the RKHS norm derived in Appendix~\ref{app:harmonics}, we need to make sure that $g$ has the proper parity. This can de done by removing all harmonics with $k \leqslant s$ (note that these harmonics are also functions of $w^\top x$, and thus the function that we obtain is also a function of $w^\top x$). That function then has a squared RKHS norm
equal to
$$\sum_{ k \geqslant s, \lambda_k \neq 0}   \| g_k \|_{L_2(\Sb^d)}^2  \lambda_k^{-2}.$$
The summand has an asymptotic equivalent proportional to
$k^{-d -2s-1} k^{d-1} k^{d+2\alpha+1} = k^{d+2\alpha-2s-1}$. Thus if $d+2\alpha-2s \geqslant 0$, the series is divergent (the function is not in the RKHS), i.e., if $s \leqslant \alpha + \frac{d}{2}$.

\paragraph{Best approximation by a function in $\G_2$.}
The squared norm of the $k$-th harmonic
$
\| g_k \|_{L_2(\Sb^d)}^2  $ goes down to zero as $k^{-2s-2}$ and the squared RKHS norm of a  $h $ is equivalent to $\displaystyle \sum_{k \geqslant 0}  \| h_k \|_{L_2(\Sb^d)}^2  k^{d+2\alpha+1}$. Given $\delta$, we may then find the function $h$ such that $\gamma_2(h)^2 
= \sum_{k \geqslant 0}  \| h_k \|_{L_2(\Sb^d)}^2  k^{d+2\alpha+1} \leqslant \delta^2$ with  smallest  $L_2(\Sb^d)$ norm distance to $g$, that is, 
$
 \sum_{k \geqslant 0}  \|g_k -  h_k \|_{L_2(\Sb^d)}^2  .$
The optimal approximation is $h_k = \alpha_k g_k$ for some $\alpha_k \in \rb_+$, with error
$
 \sum_{k \geqslant 0} ( 1 - \alpha_k)^2 \|g_k  \|_{L_2(\Sb^d)}^2
 \sim \sum_{k \geqslant 0} ( 1 - \alpha_k)^2 k^{-2s-2}   $ and squared $\gamma_2$-norm
 $\sum_{k \geqslant 0}   \alpha_k ^2  k^{d+2\alpha+1} k^{-2s-2}
 = \sum_{k \geqslant 0}   \alpha_k ^2  k^{d+2\alpha-2s-1}    $. The optimal $\alpha_k$ is obtained by considering a Lagrange multiplier $\lambda$ such that 
 $
  ( \alpha_k - 1)  k^{-2s-2} + \lambda \alpha_k k^{d+2\alpha-2s-1}   =0
 $, that is,
 $\alpha_k = (  k^{-2s-2} + \lambda   k^{d+2\alpha-2s-1}  ) ^{-1}  k^{-2s-2}
 = (  1 + \lambda   k^{d+2\alpha+1}  ) ^{-1} 
 $.
 We then have
 \BEAS
 \sum_{k \geqslant 0}
  \alpha_k ^2 k^{d+2\alpha-2s-1}
 & \!\!\!= \!\!\!&    \sum_{k \geqslant 0}
  (  1 + \lambda   k^{d+2\alpha+1}  ) ^{-2} 
k^{d+2\alpha-2s-1} \\
& \!\!\!\approx\!\!\! & \int_{0}^\infty \!\! (  1 + \lambda   t^{d+2\alpha+1}  ) ^{-2} 
t^{d+2\alpha-2s-1} dt \mbox{ by approximating a series by an integral},\\
& \!\!\!\propto \!\!\!& \int_{0}^\infty \!\! (  1 + u  ) ^{-2} 
d( t^{d + 2 \alpha - 2s}) \mbox{ with the change of variable } u = \lambda   t^{d+2\alpha+1}\\
& \!\!\!\propto \!\!\!&  \lambda^{-(d + 2 \alpha - 2s)/(d + 2 \alpha + 1)} \mbox{ up to constants},
 \EEAS
 which should be of order $\delta^2$ (this gives the scaling of $\lambda$ as a function of $\delta$).
 Then  the squared error is
\BEAS
  \sum_{k \geqslant 0} ( 1 - \alpha_k)^2 k^{-2s-2} 
& = &     \sum_{k \geqslant 0}
\frac{ \lambda^2  t^{2d+4\alpha+2}}{
 (  1 + \lambda   k^{d+2\alpha+1}  ) ^{2} }
 k^{-2s-2} \\
 & \approx & \int_0^\infty 
 \frac{ \lambda^2  t^{2d+4\alpha-2s}}{
 (  1 + \lambda   t^{d+2\alpha+1}  ) ^{2} }
dt \\
& \approx & \lambda^2 \lambda^{-(2d + 4 \alpha - 2s+1)/(d + 2 \alpha + 1)}
=  \lambda^{-(2d + 4 \alpha - 2s+1-2d-4\alpha-2)/(d + 2 \alpha + 1)}
\\
&= & \lambda^{(2s+1)/(d + 2 \alpha + 1)}
\approx \delta^{-2(2s+1)/(d+2\alpha -2s)}
,
\EEAS
and thus the (non-squared) approximation error scales as 
$
 \delta^{-(2s+1)/(d+2\alpha -2s)}.
$
For  $s=1$, this leads to a scaling as $ \delta^{-3 /(d + 2 \alpha -2)}$.

 \subsection{Proof of Prop.~\ref{prop:linear-sphere}}
 
For $\alpha=1$, by writing $v^\top x = (v^\top x)_+ - (-v^\top x)_+$ we obtain the upperbound $\gamma_1(g) \leqslant 2$. For all other situations, we may compute 
\BEAS
\gamma_2(g) ^2 &  = &  \sum_{k \geqslant 0 } \frac{\| g_k\|_{L_2(\Sb^d)}^2}{\lambda_k^2}.
\EEAS
For $g$ a linear function $g_k=0$ except for $k=1$, for which, we have $g_1(x) = v^\top x$, and thus
$\| g_k\|_{L_2(\Sb^d)}^2 = \int_{\Sb^d} (v^\top x)^2d\tau_d(x)
= v^\top \big( \int_{\Sb^d} xx^\top \tau_d(x) \big) v = 1$. This implies that $\gamma_2(g) = \lambda_1^{-1}$. Given the expression (from Appendix~\ref{app:harmonics}) $\lambda_1 = \frac{d-1}{d}\frac{\alpha}{4\pi}   \frac{\Gamma(\alpha/2) \Gamma(d/2+1)}{\Gamma(\alpha/2+d/2+1)}$ for $\alpha>1$ and $\lambda_1 = \frac{d-1}{2d \pi}$.

\section{Computing $\ell_2$-Haussdorff distance between ellipsoids}
\label{app:ellipsoid}
We assume that we are given two ellipsoids defined as $(x-a)^\top A^{-1} (x-a) \leqslant 1$
and $(x-b)^\top B^{-1} (x-b) \leqslant 1$ and we want to compute their Hausdorff distance. This leads to the two equivalent problems
$$
\max_{\| w\|_2 \leqslant 1 } w^\top ( a- b) - \| B^{1/2} w\|_2 + \| A^{1/2} w\|_2,
$$
$$
\max_{ \| u \|_2 \leqslant 1} \min_{\| v\|_2 \leqslant 1}
 \|  a + A^{1/2} u - b - B^{1/2} v\|_2,
$$
which are related by $w = a + A^{1/2} u - b - B^{1/2} v$. We first review classical methods for optimization of quadratic functions over the $\ell_2$-unit ball.

\paragraph{Minimizing convex quadratic forms over the sphere.}

We consider the following convex optimization problem, with $Q \succcurlyeq 0$; we have by Lagrangian duality:
\BEAS
& & \min_{ \|x\|_2 \leqslant 1} \frac{1}{2} x^\top Q x - q^\top x
\\
& & \max_{ \lambda \geqslant 0 } 
\min_{x \in \rb^d} \frac{1}{2} x^\top Q x - q^\top x + \frac{\lambda}{2} ( \| x\|_2^2 - 1) \\
& & \max_{ \lambda \geqslant 0  } 
-\frac{ 1}{2} q^\top ( Q + \lambda \idm)^{-1} q - \frac{\lambda}{2} \mbox{ with } x = ( Q+ \lambda \idm)^{-1} q.
\EEAS
If $\| Q^{-1} q\|_2 \leqslant 1$, then $\lambda = 0$ and $x = Q^{-1} q$. Otherwise, 
at the optimum, $\lambda >0$ and $ \| x\|_2^2 = q^\top ( Q + \lambda \idm)^{-2} q = 1$, which implies
$ 1 \leqslant \frac{1}{ \lambda  + \lambda_{\min}(Q)} q^\top Q^{-1} q$, which leads to
$ \lambda \leqslant  q^\top Q^{-1} q - \lambda_{\min}(Q) $, which is important to reduce the interval of possible $\lambda$.
The optimal $\lambda$ may then be obtained by binary search (from a single SVD of $Q$).

\paragraph{Minimizing concave quadratic forms over the sphere.}

We consider the following non-convex optimization problem, with $Q \succcurlyeq 0$, for which strong Lagrangian duality is known to hold \citep{boyd}:
\BEAS
& & \min_{ \|x\|_2 \leqslant 1} -\frac{1}{2} x^\top Q x + q^\top x = \min_{ \|x\|_2 = 1} -\frac{1}{2} x^\top Q x + q^\top x
\\
& & \max_{ \lambda \geqslant 0 } 
\min_{x \in \rb^d}  - \frac{1}{2}   x^\top Q x + q^\top x + \frac{\lambda}{2} ( \| x\|_2^2 - 1) \\
& & \max_{     \lambda \geqslant \lambda_{\max}(Q)    } 
-\frac{1}{2} q^\top (  \lambda \idm - Q)^{-1} q - \frac{\lambda}{2} 
\mbox{ with } x = ( Q - \lambda \idm)^{-1} q.
\EEAS
At the optimum, we have $q^\top (  \lambda \idm - Q)^{-2} q = 1$, which implies
$ 1 \leqslant \frac{1}{ [\lambda - \lambda_{\max}(Q)]^2} \|q\|_2^2$, which leads to
$ 0 \leqslant \lambda - \lambda_{\max}(Q) \leqslant  \| q\|_2$. We may perform binary search on $\lambda$ from a single SVD of $Q$.

\paragraph{Computing the Haussdorff distance.}
We need to compute:

\BEAS
& & \max_{ \| u \|_2 \leqslant 1} \min_{\| v\|_2 \leqslant 1}
\frac{1}{2} \|  a + A^{1/2} u - b - B^{1/2} v\|_2^2 \\
& =  & 
\max_{ \| u \|_2 \leqslant 1} \max_{\lambda \geqslant 0} \min_{\| v\|_2 \leqslant 1}
\frac{1}{2} \|  a + A^{1/2} u - b - B^{1/2} v\|_2^2 + \frac{\lambda}{2} ( \|v\|_2^2 - 1)
\\
& =  & 
\max_{ \| u \|_2 \leqslant 1} \max_{\lambda \geqslant 0} - \frac{\lambda}{2}
+   
\frac{1}{2} \|  a -b + A^{1/2} u \|^2 -   \frac{1}{2}  
(a -b + A^{1/2} u)^\top B ( B + \lambda \idm)^{-1} ( a -b + A^{1/2} u)
\\
& =  & 
\max_{ \| u \|_2 \leqslant 1} \max_{\lambda \geqslant 0} - \frac{\lambda}{2}
+ \frac{\lambda}{2}  
(a -b + A^{1/2} u)^\top ( B + \lambda \idm)^{-1} ( a -b + A^{1/2} u)
\EEAS
with $v  = (B+ \lambda \idm)^{-1} B^{1/2} (a -b + A^{1/2} u)$. The interval in $\lambda$ which is sufficient to explore is
$$
\lambda \in \big[ 0, -\lambda_{\min}(B) + \big( \| a - b\|_2^2 + \lambda_{\max}(A^{1/2}) \big)^2 \big],
$$
which are bounds that are independent of $u$.

Given $\lambda \geqslant 0$, we have the problem of 
\BEAS 
&&
\min_{ \mu \geqslant 0 } \max_{ u \in \rb^d} 
\frac{\lambda}{2}  
(a -b + A^{1/2} u)^\top ( B + \lambda \idm)^{-1} ( a -b + A^{1/2} u)
- \frac{\mu}{2} ( \| u\|_2^2 - 1) - \frac{\lambda}{2}
\\
& = &
\min_{ \mu \geqslant 0 } \max_{ u \in \rb^d} 
\frac{\lambda}{2}  
(a -b  )^\top ( B + \lambda \idm)^{-1} ( a -b  )
+ \frac{\mu - \lambda}{2}
+ \lambda u^\top A^{1/2} ( B + \lambda \idm)^{-1} ( a -b )  \\
& & \hspace*{5cm}
- \frac{1}{2} u^\top \big(
\mu \idm -  \lambda A^{1/2} (B+\lambda\idm)^{-1} A^{1/2}
\big) u 
\\
& = &
\min_{ \mu \geqslant 0 }  
\frac{\lambda}{2}  
(a -b  )^\top ( B + \lambda \idm)^{-1} ( a -b  )
+ \frac{\mu -  \lambda}{2}
\\
& & \hspace*{0cm}
+ \lambda^2 ( a -b )^\top  ( B + \lambda \idm)^{-1} A^{1/2} \big(
\mu \idm - \lambda A^{1/2} (B+\lambda\idm)^{-1} A^{1/2}
\big) ^{-1} A^{1/2}  ( B + \lambda \idm)^{-1} ( a -b )
\\
\EEAS
We have
$\displaystyle
u = ( \frac{\mu}{\lambda} \idm - A^{1/2} ( B+\lambda \idm)^{-1} A^{1/2})^{-1} A^{1/2} ( B+\lambda \idm)^{-1} ( a- b),
$
leading to 
$ \displaystyle
w \propto ( \lambda^{-1} B - \mu^{-1} A + \idm) ( a - b)$.
We need $\frac{\mu}{\lambda} \geqslant   \lambda_{\max}(    A^{1/2} (B+\lambda\idm)^{-1} A^{1/2} )$.
Moreover
$$0 \leqslant \frac{\mu}{\lambda}  - 
\lambda_{\max}\big(    A^{1/2} (B+\lambda\idm)^{-1} A^{1/2} \big)  \leqslant 
\big\|
A^{1/2} ( B + \lambda \idm)^{-1} ( a- b)
\big\|.
$$

This means that the $\ell_2$-Haussdorff distance may be computed by solving in $\lambda$ and $\mu$, by exhaustive search with respect to $\lambda$ and by binary search (or Newton's method) for $\mu$.
The complexity of each iteration is that of a singular value decomposition, that is $O(d^3)$. For more details on optimization of quadratic functions on the unit-sphere, see~\citet{forsythe1965stationary}.

\acks{The author was partially supported by the European Research Council (SIERRA Project),
and thanks Nicolas Le Roux for helpful discussions. The author also thanks Varun Kanade for pointing the NP-hardness linear classification result. The main part of this work was carried through while visiting the Centre de Recerca Matem\`atica (CRM) in Barcelona.}

\end{document}